
\documentclass{article}

\usepackage{microtype}
\usepackage{graphicx}
\usepackage{subfigure}
\usepackage{booktabs} 

\usepackage{hyperref}


\usepackage[accepted]{icml2023}


\usepackage{amsmath}
\usepackage{amssymb}
\usepackage{mathtools}
\usepackage{amsthm}

\usepackage[capitalize,noabbrev]{cleveref}

\theoremstyle{plain}
\newtheorem{theorem}{Theorem}[section]

\newtheorem{lemma}[theorem]{Lemma}

\theoremstyle{definition}
\newtheorem{definition}[theorem]{Definition}

\theoremstyle{remark}

\usepackage[textsize=tiny]{todonotes}

\usepackage{amsfonts, amsmath, amsthm, enumerate, gensymb, mathtools, tikz, graphicx, thmtools, algorithmic, algorithm, caption, wrapfig}
\usepackage{adjustbox}
\usepackage{booktabs}
\usepackage{afterpage}
\usetikzlibrary{positioning,chains,fit,shapes,calc}

\usepackage{thmtools} 
\usepackage{thm-restate}
\usepackage{algorithm}

\usepackage{amssymb}
\usepackage{amsmath}
\newif{\ifappendix}
\appendixtrue

\newcommand{\leaves}{\mathrm{leaves}}

\newcommand{\cost}{\mathrm{cost}}
\newcommand{\rot}{\mathrm{root}}
\newcommand{\clust}{\mathrm{cluster}}
\newcommand{\lft}{\mathrm{left}}
\newcommand{\rgt}{\mathrm{right}}

\newcommand{\treer}{\mathrm{RebalanceTree}}
\newcommand{\treerr}{\mathrm{RefineRebalanceTree}}
\newcommand{\treerm}{\mathrm{StochasticallyFairHC}}

\newcommand{\fhcp}{\textsf{FHCP}}

\newcommand{\search}{\mathrm{SubtreeSearch}}
\newcommand{\fairhc}{\mathrm{FairHC}}

\newcommand{\col}{\lambda}
\newcommand{\colr}{\ell}
\newcommand{\polylog}{\mathrm{polylog}}
\newcommand{\depth}{d}
\newcommand{\itvar}{y}

\newcommand{\optreer}{\mathrm{tree\_rebalance}}
\newcommand{\opdelins}{\mathrm{del\_ins}}
\newcommand{\opabs}{\mathrm{abstract}}
\newcommand{\opfold}{\mathrm{fold}}

\newcommand{\prflemoprebalance}{
\begin{proof}[Proof of Lemma~\ref{lem:oprebalance}]
Let $e = (x,y)$ be an edge that is separated by a tree rebalance operator $\optreer(u,v)$ for some internal nodes $u$ and $v$. Let's consider when we execute the rebalance. Let $V = \clust(v)$ be the set of vertices corresponding to $V$. Traverse down the tree from $v$ to $u$. Label the clusters we come across $A_1,A_2,\ldots,A_{k-1}$ and their corresponding un-traversed children $B_1,B_2,\ldots, B_{k-1}$. Let $A_k = \clust(u)$, and $B_k$ be the cluster for its only sibling.

When we rebalance, our first split will now divide $V$ into $A_k$ and $B:=\cup_{i\in[k]} B_i$. For $e$ to be separated by the rebalance of $u$ with respect to $v$, it must be that $x\in \clust(u)$ and $y\in B = \clust(a) \setminus \clust(u)$ (without loss of generality). This means that their lowest common ancestor was on the path between $u$ and $v$ (excluding $u$), which means the smallest $n_T(e)$ could be is $n_T(p)$ where $p$ is the parent of $u$. That means $\cost_T(e) = w(e)\cdot n_T(p)$.

In $T'$, their lowest common ancestor is $v$, thus $n_{T'}(e) = n_{T'}(v) = n_T(v)$, following from the observation that $v$'s cluster does not change. Thus, $\cost_{T'}(e) \leq w(e) \cdot n_T(v)$. Putting these together, we find $\cost_{T'}(e) \leq \frac{n_T(v)}{n_T(p)}\cost_T(e)$.
\end{proof}
}

\newcommand{\prfopdelins}{
\begin{proof}[Proof of Lemma~\ref{lem:opdelins}]
Let $e = (x,y)$ be an edge that is separated by a subtree deletion and insertion operators $\opdelins(u,v)$ for some appropriate internal nodes $u$ and $v$. Let's consider when we execute the subtree deletion and insertion. For $x$ and $y$ to be separated, $x$ must be in $\clust(u)$ and $y$ must be in $\clust(u\land v)\setminus \clust(u)$ (without loss of generality). The first part is true because only the subtree $T[u]$ is moved, otherwise their least common ancestor would be unaffected. The second part is true because otherwise $y$ is either in $T[u]$ too, in which case their relative position remains the same in the subtree, or $y\notin T[u\land v]$, in which case still the move still does not affect their least common ancestor (which is higher in the tree than $u\land v$).

Now, since $x\in T[u]$ and $y\notin T[u]$, $x\land y$ must be an ancestor of $u$, thus $n_T(e) \geq n_T(u)$. This means that $\cost_T(e) \geq w(e)\cdot n_T(u)$. In $T'$, their least common ancestor must still remain below $u\land v$, since all the points in $T[u\land v]$ remain somewhere below $u\land v$. Also note no points are added to $T[u\land v]$ over the two operators. Thus $n_{T'}(e) \leq n_T'(u\land v) = n_T(u\land v)$. This means $\cost_{T'}(e) \leq w(e) \cdot n_T(u\land v)$, so $\cost_{T'}(e) \leq \frac{n_T(u\land v)}{n_T(u)} \cost_T(e)$. Thus, $\Delta = \frac{n_T(u\land v)}{n_T(u)}$.
\end{proof}
}

\newcommand{\prfabstract}{
\begin{proof}[Proof of Lemma~\ref{lem:opabstract}]
Let $e = (x,y)$ be an edge that is separated by a level abstraction operator $\opabs(h_1,h_2)$ for some depths $h_1$ and $h_2$ with $h_1< h_2$. Let's consider when we execute the abstraction. For $x$ and $y$ to be separated, $x\land y$ must be merged into its parent by the operator. That means it is between depth $h_1$ and $h_2$. Let $v$ be the vertex with the smallest $n_T(v)$ between depths $h_1$ and $h_2$. Then $n_T(x\land y) \geq n_T(v)$, and so $\cost_T(e) \geq w(e) \cdot n_T(v)$.

The ancestor it eventually gets contracted into must be of depth $h_1$, because we stop contracting after that point. Although its tree structure is altered below it, its cluster size remains the same since no vertices are moved away or to its subtree. Let $u$ be the vertex with the largest $n_T(u)$ between depths $h_1$ and $h_2$. Then we get $n_{T'}(x\land y) \geq n_T(u)$, and so $\cost_{T'}(e) \leq w(e) \cdot n_T(u)$.

This has shown us that $\cost_{T'}(e) \leq \frac{n_T(u)}{n_T(v)}\cost_{T}(e)$. Notice that $u$ and $v$ are precisely the internal nodes that maximize the ratio, so 
$\cost_{T'}(e) \leq \frac{n_T(u)}{n_T(v)}\cost_T(e)$.

\end{proof}
}

\newcommand{\prffold}{
\begin{proof}[Proof of Lemma~\ref{lem:opfold}]
Let $e = (x,y)$ be an edge that is separated by a tree folding operator $\opfold(T_1,\ldots,T_k)$ for subtrees $T_1,\ldots, T_k$ of $T$ satisfying the operator conditions. Let's consider when we execute the folding. For $x$ and $y$ to be separated, $x\land y$ must be in one of the subtrees, say $T_1$ without loss of generality. This means $\cost_T(e) \geq w(e)\cdot n_T(x\land y)$.

Now we consider the cost in $T'$. Clearly, $x\land y$ becomes the single vertex in $T_f$ corresponding to $\phi_1(x\land y)$. A leaf vertex in $T_2$ (without loss of generality) is only a descendant of $\phi_1(x\land y)$ if it has an ancestor $a$ such that $\phi_2(a) = \phi_1(x\land y)$. Therefore:

\[n_{T'}(x\land y) = n_{T'}(\phi_1(x\land y)) \leq \sum_{i\in[k]} n_T(\phi_i^{-1}(\phi_1(x\land y))\]

If $u = \max\{n_T(u): u\in T_i, i\in[k], \phi_i(u) = \phi_1(x\land y)\}$, then we further have:

\[n_{T'}(x\land y) \leq \sum_{i\in [k]}n_T(u) = kn_T(u)\]

This means that $\cost_{T'}(e) \leq w(e)\cdot kn_T(u)$. Putting these together gives $\cost_{T'}(e) \leq \frac{n_T(x\land y)}{n_T(v)}k\cost_T(e) \leq \frac{n_T(u)}{n_T(v)}k\cost_T(e)$ where $u$ and $v$ are the vertices merged together that maximize this ratio. 

\end{proof}
}
\newcommand{\codetreer}{
\begin{algorithm}[H]
 \renewcommand{\algorithmicrequire}{\textbf{Input: }}
	\renewcommand{\algorithmicensure}{\textbf{Output: }}
    \caption{$\treer$}
    \label{alg:rebalance}
    \begin{algorithmic}[1]
    	\REQUIRE A hierarchy tree $T$ of size $n$, with smaller cluster always on the left.
        \ENSURE A $\frac16$-rebalanced $T$-tree.
        \STATE $r,v = \rot(T)$
        \STATE $A = \leaves(\lft_T(v))$
        \WHILE{$|A| \geq \frac 23 n$}{
            \STATE $v \gets \lft_T(v)$
            \STATE $A \gets \leaves(\lft_T(v))$
        }
        \ENDWHILE
        \STATE $T \gets T.\optreer(v,r)$
        \STATE Let $T'_l = \treer(T[\lft_T(r)])$
        \STATE Let $T'_r = \treer(T[\rgt_T(r)])$
        \STATE Return $T'$ with root $r$ with $\lft(r) = \rot(T_l')$ and $\rgt(r) = \rot(T_r')$
    \end{algorithmic}
\end{algorithm}
\vspace{-3mm}
}

\newcommand{\codesearch}{

\begin{algorithm}
 \renewcommand{\algorithmicrequire}{\textbf{Input: }}
	\renewcommand{\algorithmicensure}{\textbf{Output: }}
    \caption{$\search$}
    \label{alg:search}
    \begin{algorithmic}[1]
    	\REQUIRE A $\frac16$-relatively balanced hierarchy tree $T$ of size $n$, with smaller cluster always on the left and error parameter $s$.
        \ENSURE Modified $\frac16$-relatively balanced $T$ by a subtree deletion and insertion of a subtree of size between $s/3$ and $s$
        \STATE $v = \rot(T)$
        \WHILE{$|\leaves(\lft_T(v))| > s$}{
            \STATE $v \gets \rgt_T(v)$
        }
        \ENDWHILE
        \STATE $v\gets \lft_T(v)$
        \STATE 
        \STATE $u\gets \rot(T)$
        \WHILE{$|\leaves(\rgt_T(u))|\geq |\leaves(v)|$}{
            \STATE $u\gets \lft_T(u)$
        }
        \ENDWHILE
        \STATE $T' \gets T.\opdelins(u,v)$
        \STATE Return   $T$
    \end{algorithmic}
\end{algorithm}

}

\newcommand{\coderefine}{
\vspace{-1mm}
\begin{algorithm}[H]
 \renewcommand{\algorithmicrequire}{\textbf{Input: }}
	\renewcommand{\algorithmicensure}{\textbf{Output: }}
    \caption{$\treerr$}
    \label{alg:refinerebalance}
    \begin{algorithmic}[1]
    	\REQUIRE A $\frac16$-relatively balanced hierarchy tree $T$ of size $n$, with smaller cluster always on the left, and balance parameter $\epsilon \in (0,1/6)$.
        \ENSURE An $\epsilon$-relatively balanced tree.
        \IF{$\epsilon \leq 1/(2n)$}{
            \STATE Return  $T$
        }
        \ENDIF
        \STATE $v = \rot(T)$
        \STATE Let $T_{big} = T[\lft_T(v)]$, $T_{small} = T[\rgt_T(v)]$
        \WHILE{$|\leaves(T_{big})| \geq (1/2+\epsilon)n$}{
            \STATE $\delta \gets (|\leaves(T_{big})|-n/2)/n$
            \STATE Let $T_{big} = \search(T_{big}, \delta n)$
        }
        \ENDWHILE
        \STATE $T_{big} \gets \treerr(T_{big}, \epsilon)$
        \STATE $T_{small}\gets \treerr(T_{small}, \epsilon)$
        \STATE Return $T'$ with root $r$ with $\lft(r) = \rot(T_{big})$ and $\rgt(r) = \rot(T_{small})$
    \end{algorithmic}
\end{algorithm}
\vspace{-3mm}
}

\newcommand{\codefhc}{
\begin{algorithm}[H]
 \renewcommand{\algorithmicrequire}{\textbf{Input: }}
	\renewcommand{\algorithmicensure}{\textbf{Output: }}
    \caption{$\fairhc$}
    \label{alg:fhc}
    \begin{algorithmic}[1]
    	\REQUIRE An $\epsilon = 1/(c\log_2 n)$ relatively balanced hierarchy tree $T$ of size $n$ on red and blue points, and parameters $h=2^i$ and $k=2^j$ for some $0<j<i<\log_{1/2-\epsilon}(1/(2n\epsilon))$
        \ENSURE A fair tree. 
        \STATE Let $T \gets T.\opabs(0,i)$
        \IF{$T$ is height 1}{
        \STATE Return  $T$
        }
        \ENDIF
        \STATE Let $\mathcal{V}$ be the children of $\rot(T)$
        \FOR{each color $\ell\in[\lambda]$}{
        \STATE Order $\mathcal{V} = \{v_i\}_{i\in [h]}$ decreasing by $\frac{\ell(\leaves(v_i))}{|\leaves(v_i)|}$
        \STATE For all $i\in[k]$, $T \gets T.\opfold(\{T'[v_{i+(j-1)k}]: j\in[h/k]\})$
        }\ENDFOR
        \FOR{each child $v$ of $\rot(T)$}{
            \STATE Replace $T[v] \gets \fairhc(T[v])$
        }
        \ENDFOR
        \STATE Return $T'$
    \end{algorithmic}
\end{algorithm}
\vspace{-4mm}
}

\newcommand{\thmbalance}{
Given a $\gamma$-approximation for cost, we can construct a $\frac32\gamma$-approximation for cost which guarantees $\frac16$-relative balance. It only modifies the tree by applying tree rebalance operators of operation cost $3/2$, and every edge is only separated by at most one such operator.
}

\newcommand{\thmbalancetwo}{
Given a $\gamma$-approximation for cost, we can construct a $\frac{9\gamma}{2\epsilon}$-approximation for cost which guarantees $\epsilon$-relative balance for $0 < \epsilon\leq 1/6$. In addtition to Theorem~\ref{thm:balance}, it only modifies the tree by applying subtree deletion and insertion operators of operation cost $\frac3{\epsilon}$, and every edge is only separated by at most one such operator. 
}

\newcommand{\thmstoch}{
Given a $\gamma$-approximation for $\cost$ and any $\epsilon=1/(c\log_2n)$, $c,\lambda=O(1)$, and $\delta\in (0,1)$, in the stochastic fairness setting with $\frac{1}{1-\delta}\alpha_\ell \leq  p_\ell(v) \leq \frac1{1+\delta}\beta_\ell$ for all $v\in V$ and $\ell\in [\col]$, there is a $e^{4/(c(1-o(1))}\cdot \frac{3(1-\delta)\ln(cn)}{\delta^2\min_{\ell\in[\col]}\alpha_\ell}\cdot \frac{9\gamma}{2\epsilon}$ fair approximation that succeeds with high probability. On top of the operators of Theorem~\ref{thm:balance2}, it only modifies the tree by applying one level abstraction of operation cost at most $e^{4/(c(1-o(1))}\cdot \frac{3\ln(cn)}{a\delta^2}$.
}

\newcommand{\thmmain}{
Given a $\gamma$-approximation for $\cost$ over $\ell(V) = c_\ell n = O(n)$ vertices of each color $\ell\in[\lambda]$ with $h = n^\delta$ for any constants $c,\delta,k$, there is an algorithm that yields a hierarchy $T'$ that:
\begin{enumerate}
\item Is a $e^{\frac{4\log_2n}{c(1-o(1))\lambda\log_2h}+\frac 2c + \frac{4}{c(1-o(1))}} \cdot \frac{9c^2\gamma }{4}\cdot n^\delta\log_2^2n$-approximation for cost.
\item Is fair for any parameters for all $\ell\in[\lambda]$: $\beta_\ell \geq  c_\ell \left(\frac{ e^{4/c}}{kc_\ell} + e^{6/c}\right)^{ 1/\delta}$ and $\alpha_\ell \leq \frac{c_\ell}{e^{(6/c)\log_h(n)}}$.
\end{enumerate}
On top of the operators of  Theorem~\ref{thm:stoch}, it only modifies the tree by applying level abstraction of operation cost at most $e^{2/c}n^\delta$ and tree folding of operation cost $ke^{4/(c(1-o(1))}$ on $k$ subtrees, and each edge is separated in at most one level abstraction operator and in at most $\lambda/\delta$ tree fold operators.
}

\newcommand{\lemoprebalance}{
Given a tree $T$, let $T' = \optreer(u,v)$ for a node $u$ and an ancestor node $v$. The only edges separated by this are $e = (x,y)$ such that $x\in\clust(u)$ and $y \in \clust(a)\setminus\clust(u)$. The operation cost is bounded above by $\Delta = n_T(v)/n_T(p)$, where $p$ is the parent of $u$.
}

\newcommand{\lemopdelins}{
Given a tree $T$, let $T' = \opdelins(u,v)$ for two nodes $u$ and $v$, where $u$ is not an ancestor of $v$. The only edges separated by this are $e = (x,y)$ such that $x\in\clust(u)$ and $y \in \clust(u\land v)\setminus \clust(u)$. The operation cost is bounded above by $\Delta = n_T(u\land v)/n_T(u)$.
}

\icmltitlerunning{Generalized Reductions: Making any Hierarchical Clustering Fair and Balanced with Low Cost}

\begin{document}

\twocolumn[
\icmltitle{Generalized Reductions: Making any Hierarchical Clustering Fair and Balanced with Low Cost}



\icmlsetsymbol{equal}{*}

\begin{icmlauthorlist}
\icmlauthor{Marina Knittel}{umd}
\icmlauthor{Max Springer}{umd}
\icmlauthor{John Dickerson}{umd}
\icmlauthor{Mohammad Hajiaghayi}{umd}
\end{icmlauthorlist}

\icmlaffiliation{umd}{Department of Computer Science, University of Maryland, College Park, USA}

\icmlcorrespondingauthor{Marina Knittel}{mknittel@umd.edu}
\icmlcorrespondingauthor{Max Springer}{mss423@umd.edu}
\icmlcorrespondingauthor{John Dickerson}{john@cs.umd.edu}
\icmlcorrespondingauthor{Mohammad Hajiaghayi}{hajiagha@cs.umd.edu}

\icmlkeywords{Machine Learning, ICML}

\vskip 0.3in
]



\printAffiliationsAndNotice{} 

\begin{abstract}
Clustering is a fundamental building block of modern statistical analysis pipelines. \emph{Fair} clustering has seen much attention from the machine learning community in recent years.
We are some of the first to study fairness in the context of hierarchical clustering, after the results of Ahmadian et al. from NeurIPS in 2020. We evaluate our results using Dasgupta's cost function, perhaps one of the most prevalent theoretical metrics for hierarchical clustering evaluation. Our work vastly improves the previous $O(n^{5/6}poly\log(n))$ fair approximation for cost to a near polylogarithmic $O(n^\delta poly\log(n))$ fair approximation for any constant $\delta\in(0,1)$. This result establishes a cost-fairness tradeoff and extends to broader fairness constraints than the previous work. We also show how to alter existing hierarchical clusterings to guarantee fairness and cluster balance across any level in the hierarchy.
\end{abstract}

\section{Introduction}

Fair machine learning, and namely clustering, has seen a recent surge as researchers recognize its practical importance. In spite of the clear and serious impact the lack of fairness in existing intelligent systems has on society~\cite{angwin2016machine,bogen2018help,ledford2019millions,sweeney2013discrimination}, and despite significant progress towards fair flat (not hierarchical) clustering \cite{ahmadian2020faircorr,backurs,bera2019fair,bera,brubach2020a,chakrabarti2021a,proportionalfairness,chierichetti2017fair,esmaeili2021fair,esmaeili2020probabilistic,kleindessner2,rosner}, fairness in hierarchical clustering has only received some recent attention~\cite{ahmadian2020fairhier,chhabra2020fair}. Thus, we are some of the first to study this problem. 

\begin{figure}
  \vspace{-3mm}
  \centering
  \includegraphics[width=6.3cm]{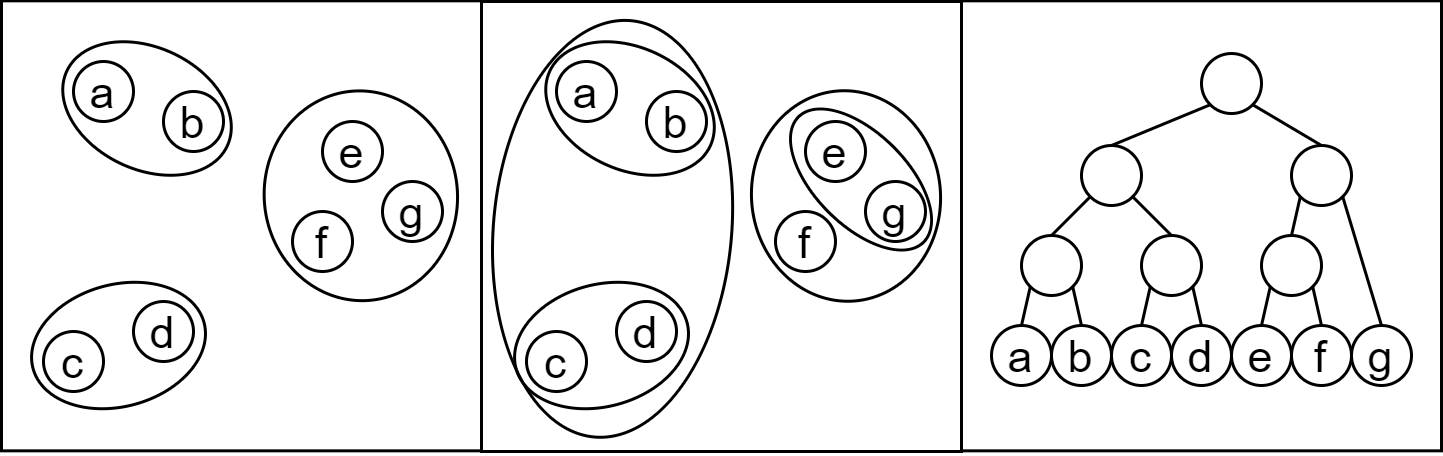}
   \caption{On the left is a 3-clustering, in the center is a hierarchical clustering, and on the right is its dendrogram. }
  \label{fig:simple}
  \vspace{-6mm}
\end{figure}

Hierarchical clustering (Figure~\ref{fig:simple}) is the well-known extension to clustering, where we create a hierarchy of subclusters contained within superclusters. This structure forms a tree (a dendrogram), where leaves represent the input data. An internal node $v$ corresponds to the cluster of all the leaves of the subtree rooted at $v$. The root is the cluster of all data.

Hierarchical clusterings more completely illustrate data relationships than flat clusterings. For instance, they are commonly used in phylogenetics to depict the entire evolutionary history of species, whereas a clustering would only depict species similarities. It also has a myriad of other uses in machine learning applications such as search~\cite{cai2004hierarchical,ferragina2005a,kou2012multiple}, social network analysis~\cite{leskovec2014mining,mann2008the}, and image recognition~\cite{arifin2006image,lin2018proximity,pan2016real}. On top of this, hierarchical clusterings can also be used to solve flat clustering when the number of clusters is not given. To do this, we extract clusterings at different resolutions in the hierarchy that all ``agree'' (if two points are together in a cluster, then they will also be together in any larger cluster) and select the one that best fits the application.

Hierarchical clusterings can be evaluated using a number of metrics. Perhaps most notably, \citet{dasgupta} introduced $\cost$ (Definition~\ref{def:cost}), an intuitive and explainable metric which exhibits numerous desirable properties and has become quite popular and well-respected~\cite{charikar17,charikar192,chatziafratis2018hierarchical,beyondworstcase,roy}. Unfortunately, it is difficult to approximate, where the best existing solutions require semi-definite programs~\cite{dasgupta,charikar17}, and it is not efficiently $O(1)$-approximable by the Small-Set Expansion Hypothesis~\cite{charikar17}. The $\mathrm{revenue}$~\cite{moseleywang} and $\mathrm{value}$~\cite{cohenaddad} metrics, both derived from $\cost$, exhibit $O(1)$-approximability, but are not as explainable or appreciated.

Only two papers have explored fair hierarchical clustering~\cite{ahmadian2020fairhier,chhabra2020fair}. Both extend fairness constraints from fair clustering literature that trace back to \citet{chierichetti2017fair}'s \textit{disparate impact}. Consider a graph $G = (V,E,w)$, where each point has a color, which represents a protected class (e.g., gender, race, etc.). On two colors, red and blue, they consider a clustering fair if the ratio between red and blue points in each cluster is equal to that in the input data. This ensures that the impact of a cluster on a protected class is proportionate to the class size. The constraint has been further  generalized~\cite{ahmadian,bercea2018cost}: given a dataset with $\col$ colors and constraint vectors $\vec{\alpha},\vec{\beta}\in (0,1)^\col$, a clustering is fair if for all $\ell\in[\col]$ and every cluster $C$, $\alpha_\ell|C| \leq \ell(C)\leq \beta_\ell|C|$, were $\ell(C)$ is the number of points in $C$ of color $\ell$. Naturally, then, a hierarchical clustering is fair if every non-singleton cluster in the hierarchy satisfies this constraint (with nuances, see Section~\ref{sec:prelimfair}), as in \citet{ahmadian2020fairhier}.

This work explores broad guarantees, namely $\cost$ approximations, for fair hierarchical clustering. The only previous algorithm is quite complicated and only yields a $O(n^{5/6}\log^{5/4}n)$ fair approximation for $\cost$ (where an $O(n)$-approximation is trivial)~\cite{ahmadian2020fairhier}, and it assumes two, equally represented colors. This reflects the inherent difficulty of finding  solutions that are low-$\cost$ as opposed to high-$\mathrm{revenue}$ or high-$\mathrm{value}$, both of which exhibit fair $O(1)$-approximations~\cite{ahmadian2020fairhier}). Our algorithms improve previous work in quite a few ways: 1) we achieve a near-exponential improvement in approximation factor, 2) our algorithm works on $O(1)$ instead of only 2 colors, 3) our work handles different representational proportions across colors in the initial dataset, 4) we simultaneously guarantee fairness and relative cluster balance, and 5) our methods, which modify a given (unfair) hierarchy, have measurable, explainable, and limited impacts on the structure of the input hierarchy.


\begin{table*}[]
\begin{center}
\begin{tabular}{|l|l|l|l|l|l|l|}
\hline
\textbf{} & Qualities Achieved                         & Approximation               & Fairness           & Colors & Color ratios & Explainable? \\ \hline
Previous  & Deterministic fairness                     & $O(n^{5/6}poly\log n)$      & Perfect            & 2      & 50/50 only   & No           \\ \hline
This work & Deterministic fairness & $O(n^\delta poly\log n)$    & Approximate       & $O(1)$ & $O(1)$       & Yes          \\ \cline{2-7} 
          & Stochastic fairness  & $O(\log^{3/2} n)$           & Approximate & $O(1)$ & $O(1)$       & Yes          \\ \cline{2-7} 
          & $\epsilon$-relative balance                         & $O(\sqrt{\log n}/\epsilon)$ &   N/A                 &   N/A     &     N/A         & Yes          \\ \cline{2-7} 
          & $1/6$-relative balance                             & $O(\sqrt{\log n})$          &   N/A                 &   N/A     &     N/A         & Yes          \\ \hline
\end{tabular}
\caption{Our versus previous work. Note $\delta\in (0,1/6)$ is parameterizable, trading approximation factor for fairness. Our algorithms are explainable in that the alterations made to the hierarchy are clear and well-defined.}
\label{table:contr}
\end{center}
\vspace{-6mm}
\end{table*}

\subsection{Our Contributions}
This work proposes new algorithms for fair and balanced hierarchical clustering. A summary of our work can be found in Table~\ref{table:contr}.

We introduce four simple hierarchy tree operators which have clear, measurable impacts. We show how to compose them  together on a (potentially unbalanced and unfair) hierarchy to yield a fair and/or balanced hierarchy with similar structure. This process clarifies the functionality of our algorithms and  illustrates the modifications imposed on the hierarchy. Each of our four proposed algorithms starts with a given $\gamma$-approximate (unfair) hierarchical clustering algorithm (i.e., \citet{dasgupta}'s $O(\sqrt{\log n})$-approximation) and then builds on top of each other, imposing a new operator to achieve a more advanced result. Additionally, each algorithm stands alone as a unique contribution.

Our first algorithm produces a $1/6$-relatively balanced hierarchy that $\frac32\gamma$-approximates $\cost$ (see Theorem~\ref{thm:balance}).\footnote{Repeated sparsest cuts achieves this with similar cost. Our algorithms, though, can be used explainably on top of existing unfair algorithms and may perform better as unfair research progresses.} Here, $\epsilon$-relative balance means that at each split in the hierarchy, a cluster splits in half within a proportional error of up to $1+\epsilon$ (see Definition~\ref{def:relbal}). Starting at the root, the algorithm recursively applies our \textit{tree rebalance} (see Definition~\ref{def:treer}) operator. This restructures the tree by moving some subtree up to become a child of the root. It preserves much of the hierarchy's structure while achieving relative balance.

\begin{figure}
  \vspace{-0mm}
  \centering
  \includegraphics[width=7.5cm]{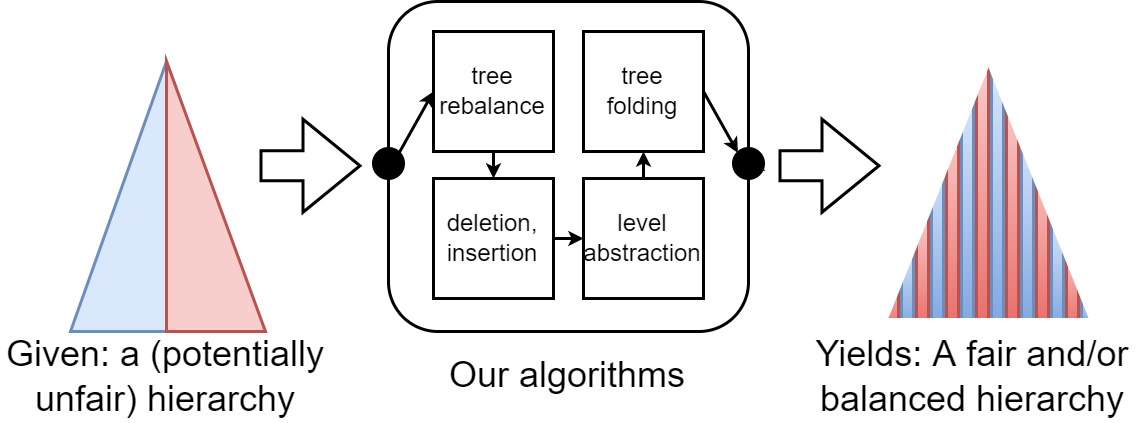}
   \caption{Our algorithms take a potentially unfair hierarchical clustering, apply our tree operators, and yield fair and/or balanced hierarchies.}
  \label{fig:reduction}
  \vspace{-6mm}
\end{figure}

Our next result refines this to achieve $\epsilon$-relative balance for any $\epsilon \in (0,1/6)$ that $\frac{9}{2\epsilon}\gamma$-approximates $\cost$ (see Theorem~\ref{thm:balance2}). This can get arbitrarily close to creating a perfectly balanced hierarchy. To achieve this, we simply run our first algorithm and then apply a limited number of \textit{subtree deletion and insertion operators} (see Definition~\ref{def:delins}). This operator selects a subtree, removes it, and reinserts elsewhere. It again preserves much of $T$'s structure.

Third, we propose an algorithm for stochastically fair hierarchical clustering (see Definition~\ref{def:stochexp}). Under certain stochastic parameterizations and arbitrarily many colors, the algorithm achieves stochastic fairness and $O(\gamma\log n)$-approximates $\cost$ (see Theorem~\ref{thm:stoch}). This is quite impressive, as the best previous fair approximation (albeit, for deterministic colors) was $poly(n)$~\cite{ahmadian2020fairhier}. To achieve this novel result, we first find an $O(1/\log n)$-relatively balanced hierarchy and then apply our level abstraction operator once to the bottom layers of the hierarchy. This operator removes selected layers, setting much lower descendants of a vertex as direct children. While this removes details in the hierarchy, the remaining structure still agrees with the original tree. This simple addition guarantees fairness under stochastic color assignment.

Our main result finds an approximately fair hierarchical clustering that $O(n^\delta poly\log n)$-approximates cost (see Theorem~\ref{thm:main}), where $\delta\in (0,1)$ is a given constant. This is a near-exponential improvement over previous work which only achieves a $O(n^{5/6}poly\log(n))$ approximation on two equally represented colors. On top of that, our algorithm works on many colors, with many different color ratios, and achieves a simultaneously balanced hierarchy in an explainable manner. The algorithm, $\fairhc$ (Algorithm~\ref{alg:fhc}), builds on top of our stochastic algorithm (parameterized slightly differently, see Section~\ref{sec:fair}) before applying a new operator: tree folding. Tree folding maps isomorphic trees on top of each other. In hierarchical clustering, this means taking two subtrees, mapping clusters in one tree to the other, and then merging clusters according to the mapping. Matching up clusters with different proportions of colors helps balance out the color ratios across clusters, which gives us our fairness result.


\section{Preliminaries}\label{sec:prelim}

Our input is a complete weighted graph $G = (V, E, w)$ where $w:E\to \mathbb{R}^+$ is a similarity measure. A hierarchical clustering can be defined as a hierarchy tree $T$, where its leaves are $\leaves(T) = V$, and internal nodes represent the merging of vertices into clusters and clusters into superclusters.

\subsection{Optimization Problem}

We use \citet{dasgupta}'s $\cost$ function as an optimization metric. For simplicity, we let $n_T(e)$ denote the size of smallest cluster in $T$ containing both endpoints of $e$. In other words, for $e = (u,v)$, $n_T(e) = |\leaves(T[u\land v])|$, where $u\land v$ is the lowest common ancestor of $u$ and $v$ in $T$ and $T[u]$ for any vertex $u$ is the subtree rooted at $u$. We additionally denote $n_T(u) = |\leaves(T[u])|$ for internal node $u$. Also we let $\rot(T)$ be the root of $T$, and $\lft_T(u)$ and $\rgt_T(u)$ access left and right children respectively. We can evaluate the $\cost$ contribution of an edge to a hierarchy.

\begin{definition}
The \textbf{cost} of $e\in E$ in a graph $G=(V,E,w)$ in a hierarchy $T$ is $\cost_{T}(e) = w(e) \cdot n_{T}(e)$.
\end{definition}

We strive to minimize the sum of $\cost$s across all edges.

\begin{definition}[\citet{dasgupta}]\label{def:cost}
The \textbf{cost} of a hierarchy $T$ on graph $G = (V, E, w)$ is:
\[\cost(T) = \sum_{e\in E} \cost_T(e)\]
\end{definition}

\citet{dasgupta} showed that we can assume that all unfair trees optimizing for $\cost$ are binary. Note that we must consider non-binary trees when we incorporate fairness as it may not allow binary splits at its lowest levels.

\subsection{Fairness and Stochastic Fairness}\label{sec:prelimfair}

We consider the fairness constraints based off those introduced by \citet{chierichetti2017fair} and extended by \citet{bercea2018cost}. On a graph $G$ with colored vertices, let $\colr(C)$ count the number of $\colr$-colored points in cluster $C$.

\begin{definition}\label{def:fair}
Consider a graph $G = (V, E, w)$ with vertices colored one of $\col$ colors, and two vectors of parameters $\alpha,\beta \in (0,1)^\col$ with $\alpha_\colr \leq \beta_\colr$ for all $\colr\in[\col]$. A hierarchy $T$ on $G$ is \textbf{fair} if for any non-singleton cluster $C$ in $T$ and for every $\colr\in[\col]$, $\alpha_\colr|C| \leq \colr(C) \leq \beta_\colr |C|$. Additionally, any cluster with a leaf child has only leaf children.
\end{definition}

Notice that the final constraint regarding leaf-children simply enforces that a hierarchy must have some ``baseline'' fair clustering (e.g., a fairlet decomposition~\cite{chierichetti2017fair}). Consider a tree that is just a stick with individual leaf children at each depth. While internal nodes may represent fair clusters,  you cannot extract any nontrivial fair flat clustering from this, since it must contain a singleton, which is unfair. We view such a hierarchy This is clearly undesirable, and our additional constraint prevents this issue.


In the stochastic problem, points are assigned colors at random. We must ensure that with high probability (i.e., at least $1-1/\polylog(n)$) all clusters are fair.

\begin{definition}\label{def:stochexp}
Consider the same context as Definition~\ref{def:fair} with an additional function $p_\colr:v\to (0,1)$ denoting the probability $v$ has color $\colr$ such that $\sum_{\colr=1}^\col p_\colr(v) = 1$ and each vertex has exactly one color. An algorithm is \textbf{stochastically fair} if, with high probability, it outputs a fair hierarchy.
\end{definition}

\section{Tree Properties and Operators}

This work is interested in both fair and balanced hierarchies. Balanced trees have numerous practical uses, and in this paper, we show how to use them to guarantee fairness too.

\begin{definition}\label{def:relbal}
A hierarchy $T$ is \textbf{$\epsilon$-relatively balanced} if for every pair of clusters $C$ and $C'$ that share a parent cluster $C_p$ with $|C_p| \geq 1/(2\epsilon)$ in $T$, $(1/2-\epsilon) |C_p| \leq |C|,|C'|\leq (1/2+\epsilon)|C_p|$. 
\end{definition}

Notice that we only care about splitting clusters $C_p$ with size satisfying $|C_p| \geq 1/(2\epsilon)$. This is because, on smaller clusters, it may be impossible to divide them with relative balance. For instance, if $|C_p| = 3$, we know we can only split it into a 1-sized and 2-sized cluster, yielding a minimum relative balance of $1/6$. For smaller $\epsilon$, we require larger cluster sizes to make this possible.

We will often discuss the ``separation'' of edges in our proposed operators. It refers to occasions when a point is added to the first cluster that contains both endpoints. We do not care if points are removed.

\begin{definition}\label{def:sep}
An edge $e=(u,v)$ is (or its endpoints are) \textbf{separated} by an operator which changes hierarchy $T$ to $T'$ if $\clust_T(u\land v) \not\supseteq \clust_{T'}(u\land v)$.
\end{definition}

Almost definitionally, if an edge is not separated by an operator, then the cluster size at its lowest common ancestor does not increase. Thus, its cost contribution does not increase.


\subsection{Tree Operators}

Our work uses a number of different tree operations to modify and combine trees (Figure~\ref{fig:ops}). These illustrate exactly how our algorithms alter the input. We show how many operators of each type each of our algorithms use and to what extent they affect the hierarchy through a metric we propose here. Notably, for each proposed algorithm on an input $T$, it transforms $T$ into output $T'$ by \textit{only applying our four tree operators: tree rebalance, subtree deletion and insertion, level abstraction, and subtree folding.}

Each operator has an associated operation cost, which measures the proportional increase in $\cost$ of each edge separated by the operation. We present lemmas that bound the operation cost of each operator in the Appendix.

\begin{definition}
Assume we apply some tree operation to transform $T$ into $T'$. The \textbf{operation cost} is an upper bound $\Delta$ such that for any edge $e$ that is separated by the operation, then $\cost_{T'}(e) \leq \Delta \cost_{T}(e)$.
\end{definition}

The first operation is a tree rebalance, which rotates in a descendant of the root to instead be a direct child. This defines our first result in Theorem~\ref{thm:balance}, as clever use of the tree rebalance operator allows us to find a relatively balanced tree. This is illustrated in the top left panel of Figure~\ref{fig:ops}.

\begin{figure}
  \vspace{-0mm}
  \centering
  \includegraphics[width=6.3cm]{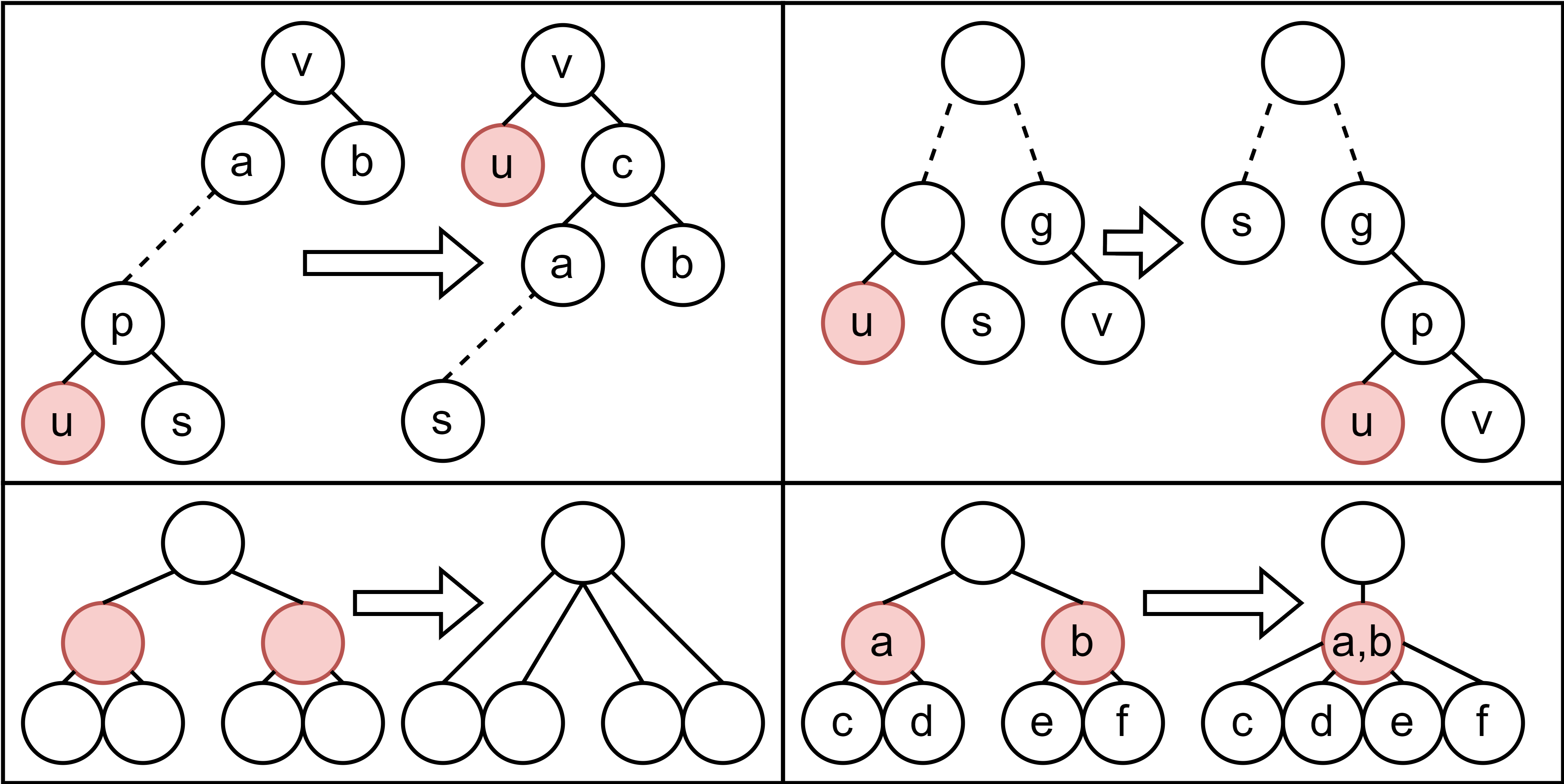}
   \caption{We depict our tree operators: tree rebalance (top left), subtree deletion and insertion (top right), level abstraction (bottom left), and tree folding (bottom right).}
  \label{fig:ops}
  \vspace{-4mm}
\end{figure}

\begin{definition}\label{def:treer}
Consider a binary tree $T$ with internal nodes $v$, $v$'s descendant $u$, and $v$'s children $a$ and $b$. A \textbf{tree rebalance} of $u$ at $v$ ($\optreer(u,v)$) puts a new node $c$ in between $v$ and sibling nodes $a$ and $b$. It then removes  $T[u]$ from $T[a]$ and sets $u$ to be $v$'s other child.
\end{definition}



Tree rebalancing will only yield $1/6$-relatively balanced trees, which is interestingly something \citet{dasgupta}'s sparsest cut algorithm, one of the current best $\cost$ approximations, achieves via  a similar analysis. To refine this, we use subtree insertion and deletion (Figure~\ref{fig:ops}, top right). At a root with large child $a$ and small child $b$, we can move a small subtree from $a$ to $b$ to improve the balance.

\begin{definition}\label{def:delins}
Consider a binary tree $T$ with internal nodes $u$, some non-ancestor $v$, $u$'s sibling $s$, and $v$'s parent $g$. \textbf{Subtree deletion} at $u$ removes $T[u]$ from $T$ and contracts $s$ into its parent. \textbf{Subtree insertion} of $T[u]$ at $v$ inserts a new parent $p$ between $v$ and $g$ and adds $u$ as a second child of $p$. The operator $\opdelins(u,v)$ deletes $u$ and inserts $T[u]$ at $v$.
\end{definition}



We will also need to abstract away (Figure~\ref{fig:ops}, bottom left) certain levels of the hierarchy to simplify it. This involves taking vertices at depth $\depth_2$ and iteratively merging them into their parents until they reach depth $\depth_1$. In other words, ignore tree structure between two levels of the hierarchy.

\begin{definition}\label{def:abstract}
Consider a binary tree $T$ with two parameters $\depth_1$ and $\depth_2$ such that $\depth_1<\depth_2<\mathrm{height}(T)$. \textbf{Level abstraction} between levels $\depth_1+1$ and $\depth_2$ ($\opabs(\depth_1,\depth_2)$) involves taking all internal nodes between depths $d_1+1$ and $d_2$ in $T$ and contracting them into their parents. 
\end{definition}

To achieve fairness in Section~\ref{sec:algs}, we use tree folding (Figure~\ref{fig:ops}, bottom right). Given multiple isomorphic trees (ignoring leaves), we map the topologies of the trees together.

\begin{definition}\label{def:fold}
Consider a set of subtrees $T_1,\ldots, T_k$ of $T$ such that all trees $T_i$ without their leaves have the same topology, and all $\rot(T_i)$ have the same parent $p$ in $T$. This means that for each $i\in[k]$, there is a tree isomorphism $\phi_i: I_i\to I_k$ where $I_i$ and $I_k$ are the internal nodes of the corresponding trees. A \textbf{tree folding} of trees $T_1,\ldots, T_k$ ($\opfold(T_1,\ldots,T_k)$) modifies $T$ such that all $T_1,\ldots,T_k$ are replaced by a single tree $T_f$ whose $\rot(T)$ is made a child of $p$ and $T_f$ has the same internal topology as $I_k$ such that for any leaf $\ell$ of any tree $T_i$ with parent $p_i$ in $T_i$, we set its parent to $\phi_i(p_i)$. 
\end{definition}


\section{Fair and Balanced Reductions}\label{sec:algs}

We now present our main algorithms, which sequentially build on top of each other, adding new operators in a limited, measureable capacity to achieve new results.



\subsection{Relatively Rebalanced Trees}\label{sec:rebalanced}

Our first algorithm guarantees $1/6$-relative balance. It only modifies the tree through a series of limited tree rebalances (Definition~\ref{def:treer}). We can show that this only incurs a small constant factor proportionate increase in cost.



\begin{theorem}\label{thm:balance}
\thmbalance
\end{theorem}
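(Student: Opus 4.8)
The plan is to analyze the recursion $\treer$ (Algorithm~\ref{alg:rebalance}) and prove the four assertions in turn: (i) every rebalance it applies has operation cost at most $3/2$, (ii) the output is $1/6$-relatively balanced, (iii) each edge is separated by at most one rebalance, and (iv) the resulting tree $\frac32\gamma$-approximates $\cost$, which I will deduce from (i) and (iii).

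First I would isolate a single invocation on a subtree of size $N$. The algorithm descends from the root $r$ along the heavy path---always into the child with the larger cluster---and halts at the first node $u$ whose cluster first has size below $2N/3$, then applies $\optreer(u,r)$. Writing $p$ for the parent of $u$ on this path, $p$ was not the halting node so $|\clust(p)|\ge 2N/3$, and since $u$ is the larger child of $p$ we get $|\clust(u)|\ge |\clust(p)|/2\ge N/3$; with the halting rule $|\clust(u)|<2N/3$ this pins $|\clust(u)|\in[N/3,2N/3)$. By Lemma~\ref{lem:oprebalance} the operation cost is $n_T(r)/n_T(p)=N/|\clust(p)|\le N/(2N/3)=3/2$, which is (i). Moreover, after the rebalance the two children of $r$ are exactly $\clust(u)$ and $\clust(r)\setminus\clust(u)$, of sizes in $[N/3,2N/3)$ and $(N/3,2N/3]$, so the root split meets Definition~\ref{def:relbal}.

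For (ii) I would then induct on tree size: the recursive calls operate on the two (strictly smaller) child subtrees and, by the inductive hypothesis, render each $1/6$-relatively balanced internally; combined with the balanced root split this balances the whole output, with clusters of size below $3$ imposing no constraint as the base case. For (iii) I would use the edge characterization in Lemma~\ref{lem:oprebalance}: the root rebalance separates only edges $e=(x,y)$ with $x\in\clust(u)$ and $y\in\clust(r)\setminus\clust(u)$, i.e.\ edges crossing between the two new children of $r$; every later rebalance happens strictly inside one child subtree and therefore only separates edges with both endpoints in that child, which are disjoint from the crossing edges. Pushing this down the recursion shows no edge is separated twice.

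Finally, I would assemble (iv) edge by edge. Let $T_0$ be the initial $\gamma$-approximate tree and $T'$ the output. An unseparated edge never increases in cost (the remark after Definition~\ref{def:sep}), so $\cost_{T'}(e)\le\cost_{T_0}(e)$; a separated edge is separated exactly once, so its cost is at most $\cost_{T_0}(e)$ just before that rebalance, is multiplied by at most $3/2$ there, and is untouched afterward, giving $\cost_{T'}(e)\le\frac32\cost_{T_0}(e)$. Summing yields $\cost(T')\le\frac32\cost(T_0)\le\frac32\gamma\cdot\mathrm{OPT}$. I expect the main obstacle to be this non-compounding step: the whole bound collapses unless (iii) guarantees the $3/2$ factors never multiply, so the delicate work is verifying that the crossing edges cut at each node are genuinely disjoint from every edge the recursion later disturbs, and that the stopping rule simultaneously delivers the balance lower bound and the $3/2$ cost ceiling.
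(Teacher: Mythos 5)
Your proposal is correct and takes essentially the same route as the paper: the same descent-into-the-larger-child analysis that pins the halting node's cluster size in $[N/3,2N/3)$ and bounds the operation cost by $n_T(r)/n_T(p)\le 3/2$ via Lemma~\ref{lem:oprebalance} (the paper's Lemmas~\ref{lem:rebalance} and~\ref{lem:balancelength}), the same recursion argument that each edge is separated at most once (Lemma~\ref{lem:balancesep}), and the same per-edge assembly of the $\frac32\gamma$ bound. If anything, your handling of the non-compounding step and of unseparated edges is slightly more explicit than the paper's.
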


\codetreer

This idea was first introduced by \citet{dasgupta} as an analytical tool for their algorithm. However we use it more explicitly here to take any given hierarchy and rearrange it to be balanced. The basic idea is to start with some given tree $T$. Draw $T$ from top to bottom such that the smaller cluster in a split is put on the left. Let $A_1$ and $B_1$ be our first split. Continue working on the left side, splitting $A_1$ into $A_2$ and $B_2$ and so on. Stop when we find the first cluster $B_k$ such that $|B_k| \geq n/3$. This defines our first split: partition $V$ into $A_k$ and $B = \cup_{i=1}^k B_i$. Then recurse on each side.

It is not too hard to see that this yields a $\frac16$-relatively balanced tree. Our search stopping conditions enforce this.


\begin{lemma}\label{lem:rebalance}
Algorithm~\ref{alg:rebalance} produces
a $\frac16$-relatively balanced tree.
\end{lemma}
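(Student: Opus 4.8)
The plan is to analyze the recursive structure of $\treer$ (Algorithm~\ref{alg:rebalance}) and show that each split it produces satisfies the $\frac16$-relative balance condition, i.e., that the partition of $V$ into $A_k$ and $B = \cup_{i=1}^k B_i$ has both sides of size between $\frac13 n$ and $\frac23 n$. Since $\frac16$-relative balance requires each child of a size-$m$ cluster to have size in $[(\frac12 - \frac16)m, (\frac12+\frac16)m] = [\frac13 m, \frac23 m]$, it suffices to verify these two bounds at the root split and then invoke induction on the two recursively-processed subtrees $T[\lft_T(r)]$ and $T[\rgt_T(r)]$.

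The core of the argument is bounding $|B|$ from the stopping condition of the \textbf{while} loop. First I would establish the lower bound: the loop continues while $|A| = |\leaves(\lft_T(v))| \geq \frac23 n$, so upon termination we have a node $v$ with $|\leaves(\lft_T(v))| < \frac23 n$, and correspondingly the rebalance carves out $A_k = \clust(u)$ leaving $B$ as the complement. The key observation is that since the smaller cluster is always placed on the left, each $B_i$ (the un-traversed right child) satisfies $|B_i| \geq |A_i|$ but the accumulation only grows until the last piece crosses the $\frac13 n$ threshold. Concretely, I would argue that just before adding $B_k$, the accumulated set had size $< \frac13 n$ (otherwise we would have stopped earlier), and that $|B_k| < \frac23 n$ because $B_k$ is the smaller-or-equal sibling of a cluster of size $\geq \frac13 n$ at that node; combining these gives $|B| = |\cup_{i=1}^k B_i| < n$, and more carefully $|B| < \frac23 n$, hence $|A_k| > \frac13 n$. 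Symmetrically, the stopping threshold ensures $|B| \geq \frac13 n$, giving $|A_k| \leq \frac23 n$.

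I expect the main obstacle to be the careful bookkeeping at the stopping step: showing simultaneously that the \emph{final} accumulated cluster $B$ does not overshoot $\frac23 n$ while still reaching at least $\frac13 n$. This hinges on the fact that the last piece $B_k$ added — the one that triggers $|B_k| \geq n/3$, or equivalently pushes the cumulative size across the threshold — is itself bounded by $\frac23 n$, which follows from the ``smaller cluster on the left'' invariant: $B_k$ is a right child whose left sibling is at least as large, so $|B_k| \leq \frac12$ of its parent's cluster, and the parent cluster has size $< \frac23 n$ by the loop invariant. I would need to reconcile the two descriptions of the stopping rule in the excerpt (the prose says ``stop when $|B_k| \geq n/3$'' while the pseudocode loops on $|A| \geq \frac23 n$) and confirm they are equivalent for a binary tree where left-sibling sizes are non-increasing along the traversed path.

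Finally I would close the induction: the root split is $\frac16$-relatively balanced by the above, and the two subproblems $T[\lft_T(r)]$ and $T[\rgt_T(r)]$ are handled by recursive calls to $\treer$, so by the inductive hypothesis every internal split in the returned tree is $\frac16$-relatively balanced, provided we also check the base case for clusters too small to require balance (those with $|C_p| < 1/(2\cdot\frac16) = 3$, which Definition~\ref{def:relbal} exempts). This yields the claim that Algorithm~\ref{alg:rebalance} produces a $\frac16$-relatively balanced tree.
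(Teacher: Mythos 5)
Your overall skeleton is the same as the paper's: show the root split puts both sides in $[\frac13 n, \frac23 n]$, then recurse on the two subtrees (the paper's proof does exactly this and nothing more). The gap is in the central step, where your argument is internally inconsistent about which child the traversal follows, and that inconsistency sits exactly where the lemma's content lives. You first assert that the traversed child is the smaller one, i.e.\ $|B_i| \geq |A_i|$, but later you claim that ``$B_k$ is a right child whose left sibling is at least as large'' and that ``the parent cluster has size $< \frac23 n$ by the loop invariant.'' Both later claims contradict the earlier one, and the second is backwards under any convention: the parent of $B_k$ is $A_{k-1}$, a node the loop \emph{continued} through, so $|A_{k-1}| \geq \frac23 n$, not $< \frac23 n$. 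Moreover, the route you sketch to the upper bound on $|B|$ (accumulated set $< \frac13 n$ plus $|B_k| < \frac23 n$) only yields $|B| < n$; the jump to ``more carefully $|B| < \frac23 n$'' is never justified and cannot be, because it is equivalent to the lower bound $|A_k| \geq \frac13 n$, which is the very thing still missing.

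The single fact that makes the lemma true --- and the one line the paper's proof rests on --- is that the walk follows the \emph{larger} child. Then the node $A_k$ where the walk stops is at least half of its parent, so $|A_k| \geq \frac12 |A_{k-1}| \geq \frac12 \cdot \frac23 n = \frac13 n$, while the stopping rule gives $|A_k| \leq \frac23 n$; hence $|B| = n - |A_k| \in [\frac13 n, \frac23 n]$ as well, with no bookkeeping over the individual $B_i$ needed. Under the reading you adopted (traverse the smaller/left child, per the pseudocode's stated input convention), the statement is simply false: the loop guard $|A| \geq \frac23 n$ can never hold for a smaller child (which has size at most $n/2$), so the loop body never executes, the rebalance is degenerate, and on a caterpillar-shaped input the output stays maximally unbalanced. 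So the ``reconciliation'' you defer to the end is not a detail to be checked later --- it must be resolved in favor of larger-child traversal (the prose/pseudocode mismatch is a typo in the paper, as its proofs of Lemmas~\ref{lem:rebalance} and~\ref{lem:balancelength} confirm by using $|A_k| \geq \frac12|A_{k-1}|$ and $n_T(p) \geq \frac23 n_T(r)$), and once it is, the proof collapses to the two inequalities above plus your (correct) recursion and small-cluster base case.
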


The next property also comes from the fact that once an edge is separated, it will never be separated again.

\begin{lemma}\label{lem:balancesep}
In Algorithm~\ref{alg:rebalance}, every edge is separated by at most one tree rebalance operator.
\end{lemma}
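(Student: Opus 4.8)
The plan is to combine the edge-separation characterization from Lemma~\ref{lem:oprebalance} with the recursive structure of Algorithm~\ref{alg:rebalance}. Recall that a single rebalance $\optreer(v,r)$ applied at the root of a recursive call separates only edges $e=(x,y)$ with one endpoint in $\clust(v)$ and the other in $\clust(r)\setminus\clust(v)$; in particular, both endpoints of any separated edge lie inside $\clust(r)$, the cluster on which that call operates. First I would make this observation precise, and then track, for each edge, the set of recursive calls that could conceivably separate it.

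The central object is the hierarchy of recursive calls. Each invocation of $\treer$ operates on a subtree whose leaf set is some cluster $C$. After it applies $\optreer(v,r)$ at the root, the node $r$ has exactly two children, whose clusters are $\clust(v)$ and $C\setminus\clust(v)$, a disjoint partition of $C$; the two recursive subcalls $\treer(T[\lft_T(r)])$ and $\treer(T[\rgt_T(r)])$ then operate on these two disjoint halves. Hence the clusters attached to the recursive calls form a laminar family refining $V$, and every tree rebalance performed in the whole execution is the top-level operator of exactly one recursive call.

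I would then argue from the following invariant: an edge $e=(x,y)$ can be separated by the operator of a call on cluster $C$ only if $\{x,y\}\subseteq C$. This is immediate from Lemma~\ref{lem:oprebalance}, since a separated edge has one endpoint in $\clust(v)\subseteq C$ and the other in $C\setminus\clust(v)\subseteq C$. Now suppose $e$ is separated by the call on $C$. Then $x$ and $y$ fall on opposite sides of the partition $\{\clust(v),\,C\setminus\clust(v)\}$, so they are routed into different recursive subcalls; because the subcall clusters are disjoint and only shrink as the recursion descends, no later call has a cluster containing both $x$ and $y$. By the invariant, no later operator can separate $e$. Since the calls form a laminar family, there is a unique deepest call whose cluster contains both endpoints, and $e$ can be separated only there, so $e$ is separated by at most one tree rebalance operator.

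The main obstacle is purely bookkeeping: I must verify cleanly that the two recursive branches operate on genuinely disjoint leaf sets (so that an edge, once split across the branches, is never reunited) and that the only edges a rebalance separates are the ``cross'' edges identified by Lemma~\ref{lem:oprebalance} — in particular, that edges lying entirely within $C\setminus\clust(v)$ remain unseparated despite the contraction the operator performs on that side. Both facts follow directly from Definition~\ref{def:treer} and Lemma~\ref{lem:oprebalance}, so once the laminar structure of the recursion is set up, the ``at most once'' conclusion follows with no further computation.
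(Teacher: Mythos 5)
Your proof is correct and takes essentially the same route as the paper's: the paper likewise invokes Lemma~\ref{lem:oprebalance} to note that a separated edge ends up split across the root of the current recursive call, so at every later level of recursion only one endpoint is present and the edge can never be separated again. Your laminar-family bookkeeping is just a more explicit write-up of that same two-step argument.
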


Finally, the operation cost of the rebalance operators comes from our stopping threshold.

\begin{lemma}\label{lem:balancelength}
In Algorithm~\ref{alg:rebalance}, every tree rebalance operator has operation cost at most $3/2$.
\end{lemma}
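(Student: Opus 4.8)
The plan is to reduce this to the single-operator bound of Lemma~\ref{lem:oprebalance} and then control the one ratio that appears there using the loop's stopping rule. In each (recursive) call of $\treer$ on a subtree of size $n$ with root $r = \rot(T)$, exactly one rebalance is performed, namely $\optreer(v,r)$ for the descendant $v$ returned by the while loop. By Lemma~\ref{lem:oprebalance} (with the ancestor node being the root), this operator has operation cost at most $n_T(r)/n_T(p) = n/n_T(p)$, where $p$ is the parent of $v$ and $n_T(r) = n$. Hence it suffices to prove $n_T(p) \geq \tfrac{2}{3}n$, which immediately yields operation cost at most $n/(\tfrac{2}{3}n) = \tfrac{3}{2}$; the argument is identical in every recursive call with $n$ taken to be the size of the current subtree.

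To get the bound on $n_T(p)$, I would unwind the descent, reusing the labeling from the proof of Lemma~\ref{lem:oprebalance}. Write the root-to-$v$ path as clusters $A_1 \supsetneq A_2 \supsetneq \cdots \supsetneq A_k = \clust(v)$, where at each node the loop descends into the child of larger cluster, and let $B_1,\ldots,B_k$ be the corresponding (smaller) siblings, so $A_{i-1} = A_i \cup B_i$ and $p$ is the node with cluster $A_{k-1}$. The loop continues only while the child it descends into has size at least $\tfrac{2}{3}n$, and it stops as soon as the encountered sibling first satisfies $|B_k| \geq n/3$ (the two are consistent because only a child larger than $n/2$ can exceed $\tfrac{2}{3}n$, so the descent always follows the larger child and $|A_k| \geq |B_k|$).

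The key step is then the chain $n_T(p) = |A_{k-1}| = |A_k| + |B_k| \geq 2|B_k| \geq \tfrac{2}{3}n$, using $|A_k| \geq |B_k|$ and the stopping condition $|B_k| \geq n/3$. Equivalently, and matching the loop as written, $v$ is reached only by descending into a cluster of size at least $\tfrac{2}{3}n$, so $n_T(v) \geq \tfrac{2}{3}n$ and a fortiori $n_T(p) \geq n_T(v) \geq \tfrac{2}{3}n$. The only edge case is when the loop never executes, i.e. the root already splits with both children below $\tfrac{2}{3}n$; then $\optreer(v,r)$ is a no-op separating no edges, so the bound holds vacuously.

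I expect the main obstacle to be exactly this last inequality: correctly identifying which node on the descent path is the parent $p$ of the rebalanced node $v$, and confirming that the stopping rule forces its cluster to retain at least a $\tfrac{2}{3}$ fraction of the points. The one subtlety I would double-check is the orientation convention — the tree is drawn with the smaller cluster on the left, yet the loop must descend toward the larger subcluster for the condition $|A| \geq \tfrac{2}{3}n$ to ever hold — so I would verify that the descent follows the larger child, which is what makes $|A_k| \geq |B_k|$ and hence the whole bound go through.
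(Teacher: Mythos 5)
Your proof is correct and takes essentially the same route as the paper: apply Lemma~\ref{lem:oprebalance} to the single rebalance $\optreer(v,r)$ performed in each recursive call, and use the loop's guard to conclude $n_T(p) \geq \frac{2}{3}\,n_T(r)$, hence operation cost at most $3/2$. The paper's proof is precisely your ``matching the loop as written'' argument ($n_T(p) \geq n_T(v) \geq \frac{2}{3}n$ because the loop only ever descends into a child of size at least $\frac{2}{3}n$), so the detour through the $A_i, B_i$ labeling and the $|B_k| \geq n/3$ stopping rule (which does not literally match the pseudocode's guard except at the first step) is unnecessary, though your direct argument and handling of the degenerate no-op case make the claim airtight.
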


In Theorem~\ref{thm:balance}, the relative balance comes from Lemma~\ref{lem:rebalance}, the operator properties come from Lemmas~\ref{lem:balancesep} and~\ref{lem:balancelength} respectively, and the approximation factor comes from  Lemma~\ref{lem:balancesep} and Lemma~\ref{lem:balancelength} together.

\subsection{Refining Relatively Rebalanced Trees}\label{sec:refined}

We now propose a significant extension of Algorithm~\ref{alg:rebalance} which allows us to get a stronger balance guarantee. Specifically (where $\epsilon$ may be a function of $n$):


\begin{theorem}\label{thm:balance2}
\thmbalancetwo
\end{theorem}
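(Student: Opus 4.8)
The plan is to analyze Algorithm~\ref{alg:refinerebalance} ($\treerr$), run on the $\tfrac16$-relatively balanced, $\tfrac32\gamma$-approximate tree produced by Theorem~\ref{thm:balance}. Since $\treerr$ alters the tree only through the subtree deletion/insertion operators invoked inside $\search$ (Algorithm~\ref{alg:search}), the proof splits into four parts: (i) the output is $\epsilon$-relatively balanced; (ii) each $\opdelins$ has operation cost at most $3/\epsilon$; (iii) each edge is separated by at most one such operator; and (iv) combining (ii)--(iii) with Theorem~\ref{thm:balance} yields the $\tfrac{9\gamma}{2\epsilon}$ approximation.

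For (i) I would argue by termination plus induction on the recursion. At the root split of a subproblem of size $n$, write the excess of the heavy side as $\delta n = |\leaves(T_{big})| - n/2$. Each call to $\search(\cdot,\delta n)$ relocates a subtree of size in $[\delta n/3,\delta n]$ from the heavy side to the light side (and, by its stated contract, preserves $\tfrac16$-relative balance so that the next call and the recursion remain well-defined). Hence the excess shrinks to at most $\tfrac23\delta n$ per iteration, so the loop terminates after $O(\log(1/\epsilon))$ steps with $|\leaves(T_{big})| \in [n/2,(1/2+\epsilon)n)$ and correspondingly $|\leaves(T_{small})| \in ((1/2-\epsilon)n, n/2]$; both lie in $[(1/2-\epsilon)n,(1/2+\epsilon)n]$, so the root split is $\epsilon$-balanced. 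Recursing on the two disjoint sides, with base case $\epsilon \le 1/(2n)$ (which corresponds exactly to clusters too small to require balance in Definition~\ref{def:relbal}), makes every split $\epsilon$-balanced.

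For (ii) I would apply Lemma~\ref{lem:opdelins}: the operation cost of the $\opdelins(u,v)$ performed by $\search$ is $n_T(u\land v)/n_T(u)$. Because $\search$ moves a subtree between the two sides of the current split, $u$ and $v$ lie in opposite subtrees of the current root, so $u\land v$ is that root and $n_T(u\land v)\le n$. Meanwhile the moved subtree $T[u]$ has size at least $\delta n/3 \ge \epsilon n/3$, since $\delta\ge\epsilon$ throughout the loop. Thus the operation cost is at most $n/(\epsilon n/3)=3/\epsilon$. Granting (iii), the approximation in (iv) is immediate: for every edge $e$, the refinement phase raises its cost by a factor of at most $3/\epsilon$, so $\cost(T') \le \tfrac3\epsilon \cost(T_2) \le \tfrac3\epsilon\cdot\tfrac32\gamma\cdot\cost(\mathrm{OPT}) = \tfrac{9\gamma}{2\epsilon}\cost(\mathrm{OPT})$, where $T_2$ is the tree from Theorem~\ref{thm:balance}.

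The main obstacle is (iii). The clean part is that $\search$ always pulls its subtree from the heavier side and deposits it on the lighter side, so a relocated endpoint is never moved again within the same subproblem, and after the while loop the recursion proceeds on the two disjoint, now-fixed sides; an edge whose endpoints first land on opposite sides of a subproblem's root has its lowest-common-ancestor cluster equal to the entire subproblem, which is maximal and cannot grow under any later internal rearrangement. The delicate case is an edge both of whose endpoints remain on the lighter side: the insertion step can enlarge the cluster at an ancestor of the insertion point, so such an edge may seem to be touched at this level and then recursed upon. I would handle this by showing that whenever a delins separates such an edge, Lemma~\ref{lem:opdelins} already caps its cost at $w(e)\,n_T(u\land v)$, the maximum possible within the current subproblem; since every later subproblem is a strict subset, no subsequent operation can raise the cost further, so the single factor of $3/\epsilon$ is never compounded. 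Making this ``charge each edge to exactly one operator'' argument precise, and reconciling it with the formal notion of separation in Definition~\ref{def:sep}, is the step that requires the most care.
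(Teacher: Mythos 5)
Your parts (i), (ii), and (iv) follow the paper's own route almost lemma for lemma: (i) is the paper's Lemma~\ref{lem:refinebalance} (resting on Lemmas~\ref{lem:searchsix} and~\ref{lem:searchhalf}), (ii) is Lemma~\ref{lem:refineshift}, and (iv) is the summation in the paper's proof of Theorem~\ref{thm:balance2}. The divergence is in (iii). The paper settles it in two lines: Lemma~\ref{lem:opdelins} asserts that the \emph{only} edges separated by $\opdelins$ are those with exactly one endpoint in the moved subtree, and Lemma~\ref{lem:refinesep} then notes that such an edge becomes separated at the current subproblem's root, so only one of its endpoints survives into any later recursive call. You reproduce this for the cross-root case, but you also flag the receiving-side edges (both endpoints on the light side, lowest common ancestor above the insertion point), and you are right to do so: under Definition~\ref{def:sep} those edges \emph{are} separated --- the moved points are added to the cluster at their lowest common ancestor, so their cost strictly increases --- and they can be separated repeatedly, across iterations of the while loop in Algorithm~\ref{alg:refinerebalance} and across recursion levels. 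This case is simply absent from the paper's Lemma~\ref{lem:opdelins}, and with it the literal claim that every edge is separated at most once is not actually established (as stated it can fail); on this point your proposal is the more careful of the two.

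What your sketch still lacks is the quantitative half of the repair, and it is a concrete gap: capping the post-operation cost at $w(e)\cdot n_T(u\land v)$ and invoking nestedness of later subproblems gives no \emph{ratio} bound by itself; you also need a lower bound on the edge's cost \emph{before} its first separation. That bound is available from Algorithm~\ref{alg:search}: a receiving-side edge's lowest common ancestor is a strict ancestor of the insertion point $u$, hence its cluster contains $\clust(g)$ for $g$ the parent of $u$, and the loop condition together with $\frac16$-balance gives $n_T(g) \geq \frac32 s' \geq \frac{\epsilon n}{2}$, where $s'$ is the size of the moved subtree. Since an edge's cluster can only shrink before its first separation, its cost in the input tree is at least $w(e)\epsilon n/2$; and every cluster containing both endpoints after the first separation lies inside the subproblem where that separation occurred, so its final cost is at most $w(e)n$. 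The lifetime ratio is therefore at most $2/\epsilon$ for receiving-side edges and $3/\epsilon$ for cross-root edges, recovering exactly $\frac{3}{\epsilon}\cdot\frac{3\gamma}{2} = \frac{9\gamma}{2\epsilon}$. With that ingredient filled in, your charging argument proves the theorem's cost guarantee (with the ``at most one operator per edge'' clause reinterpreted as a once-per-edge charge), and it does so while closing a hole that the paper's own proof never acknowledges.
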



To do this, we first apply $\treer$. Then, at each split starting at the root, we execute $\search$ (Algorithm~\ref{alg:search} in Appendix~\ref{apx:refine}), which searches for a small subtree below the right child, deletes it, and moves it below the left child in order to reduce the error in the relative balance. If we do this enough, we can reduce the relative balance to $\epsilon$. We call our algorithm, in Algorithm~\ref{alg:refinerebalance}, $\treerr$.

\coderefine

We can show that this algorithm creates a nicely rebalanced tree. $\search$ specifically guarantees a proportional $2/3$ reduction in relative balance (see Appendix~\ref{apx:refine}). Therefore, enough executions of $\search$ will make the split $\epsilon$-relatively balanced, and recursing down the tree guarantees that the entire tree is $\epsilon$-relatively balanced.

\begin{lemma}\label{lem:refinebalance}
Algorithm~\ref{alg:refinerebalance} produces an $\epsilon$-rebalanced tree.
\end{lemma}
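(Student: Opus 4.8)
The plan is to argue by strong induction on the size $n$ of the tree passed to $\treerr$, showing every invocation of Algorithm~\ref{alg:refinerebalance} returns an $\epsilon$-relatively balanced tree. Since Definition~\ref{def:relbal} constrains only parent clusters $C_p$ with $|C_p|\ge 1/(2\epsilon)$, the argument splits naturally into a base case, where the whole tree is too small to trigger any constraint, and an inductive step, where the root split must be actively balanced before recursing on the two children. Throughout I would freely cite the two properties of $\search$ established in Appendix~\ref{apx:refine}: that $\search(T_{big},s)$ relocates a subtree of size in $[s/3,s]$ from the larger child of the root to the smaller one, and that it preserves $1/6$-relative balance.

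For the base case ($\epsilon\le 1/(2n)$, i.e.\ $n\le 1/(2\epsilon)$) the algorithm returns $T$ unchanged. Here every cluster has size at most $n\le 1/(2\epsilon)$, so no cluster is large enough to trigger the balance requirement (cf.\ the discussion after Definition~\ref{def:relbal}, which only demands balance once a cluster is large enough to be split with relative balance at all); hence $T$ is vacuously $\epsilon$-relatively balanced. For the inductive step ($n>1/(2\epsilon)$) the real work is in the \textbf{while} loop. Let $\delta_t$ denote the value of $\delta=(|\leaves(T_{big})|-n/2)/n$ at the start of iteration $t$; since the input is $1/6$-relatively balanced and $n>1/(2\epsilon)>3$, the larger child has size at most $(2/3)n$, so $\delta_0\le 1/6$. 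Writing $m\in[\delta_t n/3,\delta_t n]$ for the mass moved by $\search(T_{big},\delta_t n)$, the new larger-child size is $(1/2+\delta_t)n-m$, which stays $\ge n/2$ because $m\le \delta_t n$ (so the split never overshoots), while the new excess satisfies $\delta_{t+1}n=\delta_t n-m\le (2/3)\delta_t n$. Thus $\delta_t\le (2/3)^t/6$, which falls below $\epsilon$ after $t=O(\log(1/\epsilon))$ iterations, so the loop terminates.

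When the loop exits we have $n/2\le |\leaves(T_{big})|<(1/2+\epsilon)n$, hence $|\leaves(T_{small})|=n-|\leaves(T_{big})|\in((1/2-\epsilon)n,\,n/2]$. Both children therefore lie in $[(1/2-\epsilon)n,(1/2+\epsilon)n]$, and since the root cluster has size $n\ge 1/(2\epsilon)$, its split is $\epsilon$-relatively balanced. Because $\search$ preserves $1/6$-relative balance, both $T_{big}$ and $T_{small}$ still meet the precondition of $\treerr$; as each has size at most $(1/2+\epsilon)n<n$, the inductive hypothesis applies and the recursive calls return $\epsilon$-relatively balanced subtrees. I would then compose: any parent cluster of size $\ge 1/(2\epsilon)$ in the returned $T'$ is either the root (handled above) or lies wholly inside $T_{big}$ or $T_{small}$ with unchanged cluster sizes (handled by induction), so $T'$ is $\epsilon$-relatively balanced.

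The main obstacle is the per-iteration geometric contraction, which rests entirely on $\search$ moving at least a $1/3$ fraction of the current excess while leaving the tree $1/6$-relatively balanced for the next round; this is exactly the content deferred to Appendix~\ref{apx:refine}. Within the lemma itself the delicate points are (i) verifying that the move never overshoots $n/2$, so that the excess is a genuine contraction rather than oscillating, and (ii) the boundary bookkeeping of the base case around $n=1/(2\epsilon)$, which I would dispatch by appealing to the strict interpretation of the balance threshold discussed after Definition~\ref{def:relbal}.
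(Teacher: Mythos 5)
Your proposal is correct and follows essentially the same route as the paper's own proof: both rely on Lemma~\ref{lem:searchhalf} (a moved subtree of size in $[\delta n/3,\delta n]$, giving the $2/3$ geometric contraction of the root imbalance) and Lemma~\ref{lem:searchsix} (preservation of $1/6$-relative balance), followed by recursion on the two children. You simply make explicit what the paper leaves terse — the induction framing, the vacuous base case, and the no-overshoot check — so no substantive difference.
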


To bound the operators on top of those used by Algorithm~\ref{alg:rebalance}, note that we only apply the subtree deletion and insertion operators. Additionally, each edge cannot be separated more than once.

\begin{lemma}\label{lem:refinesep}
In Algorithm~\ref{alg:refinerebalance}, every edge is separated by at most one subtree deletion and insertion operator.
\end{lemma}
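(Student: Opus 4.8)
The plan is to induct on the recursion tree of $\treerr$ and, at each recursive call, pin down \emph{exactly} which edges can be separated by the subtree deletion and insertion operators applied there. Recall that the top-level work of a call on a (sub)tree of size $n$ is a sequence of $\search$ invocations, each performing a single $\opdelins$ that moves a small subtree from the bigger child $T_{big}$ to the smaller child $T_{small}$, after which we recurse separately on $T_{big}$ and $T_{small}$. By Lemma~\ref{lem:opdelins}, a single move $\opdelins(u,v)$ with $u$ on the big side and $v$ on the small side only separates edges $e=(x,y)$ with $x\in\clust(u)$ and $y\in\clust(u\land v)\setminus\clust(u)$; here $u\land v$ is the current root, so $\clust(u\land v)$ is the entire current cluster.

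The crucial first step is to sharpen this using Definition~\ref{def:sep}, which requires the cluster at the lowest common ancestor to \emph{grow}. If $x\in\clust(u)$ lies on the big side but $y$ already lies on the small side, then after the move both endpoints sit on the small side, so their common-ancestor cluster \emph{shrinks} and $e$ is \emph{not} separated. Hence a big$\to$small move separates $e=(x,y)$ only when both endpoints start on the big side and the move transfers exactly one of them across; afterwards $e$ becomes an edge crossing the $T_{big}/T_{small}$ split, with one endpoint permanently on the small side for the remainder of this call.

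Next I would use this to rule out double-separation within a single call. Every top-level move pulls its subtree from the current big side and deposits it on the small side, so once $e$ has been separated one endpoint is on the small side, and the two endpoints can never again be simultaneously on the big side. By the sharpened characterization, no subsequent top-level move can separate $e$. Thus each edge is separated by at most one top-level operator, and every edge so separated ends up crossing the split at this node.

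Finally I would close the induction. The recursive call on $T_{big}$ (resp.\ $T_{small}$) applies only $\opdelins$ operators internal to that subtree, hence can only separate edges with both endpoints in $\clust(T_{big})$ (resp.\ $\clust(T_{small})$). These two edge sets are disjoint from each other and from the set of edges separated at the current root, which cross the split; so no root-separated edge is touched again, and by the induction hypothesis each edge separated inside a child is separated at most once there. The base cases ($\epsilon\le 1/(2n)$ or a height-$1$ tree) apply no operators. The main obstacle is precisely the sharpened separation statement of the second paragraph: the entire argument hinges on separation being \emph{one-directional} (it needs the common-ancestor cluster to grow) combined with the algorithm only ever moving mass from the big side to the small side. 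If moves went both ways an edge could plausibly be separated twice, so it is this monotonicity that must be nailed down.
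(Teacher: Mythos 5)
Your proof is correct and takes essentially the same route as the paper's: once an edge is separated, its endpoints must lie on opposite sides of the current root's split, and since mass only ever moves from the big side to the small side and each recursive call sees only one endpoint of such an edge, it can never be separated again. If anything, your write-up is more careful than the paper's three-sentence proof, which only addresses re-separation at deeper recursion levels and leaves the within-call case (your monotonicity/one-directionality argument, sharpening Lemma~\ref{lem:opdelins} via Definition~\ref{def:sep}) implicit.
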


Finally, we can also limit the operation cost of the subtree deletion and insertion operator. This is because we limit the depth of the $\search$ function as it will never be given a parameter below $\epsilon n$.

\begin{lemma}\label{lem:refineshift}
In Algorithm~\ref{alg:refinerebalance}, every subtree deletion and insertion operator has operation cost at most $3/\epsilon$.
\end{lemma}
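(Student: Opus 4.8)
The plan is to trace each subtree deletion and insertion back to the call of $\search$ that produced it, read off the parameter $s$ passed to that call and the size of the subtree it moves, and then substitute these into the operation-cost formula of Lemma~\ref{lem:opdelins}.

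First I would observe that every $\opdelins$ operator applied by $\treerr$ (Algorithm~\ref{alg:refinerebalance}) is performed inside a call $\search(T_{big}, s)$ with $s = \delta N$, where $N$ is the size of the tree at the current level of the recursion and $\delta = (|\leaves(T_{big})| - N/2)/N$. Since the while loop executes only while $|\leaves(T_{big})| \geq (1/2+\epsilon)N$, we get $\delta \geq \epsilon$ and hence $s \geq \epsilon N$. This is the quantitative content of the remark that $\search$ is never handed a parameter below $\epsilon N$.

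Next, by the output guarantee of $\search$ (Algorithm~\ref{alg:search}), the subtree it deletes and reinserts has size between $s/3$ and $s$; writing $T[u]$ for the moved subtree as in Definition~\ref{def:delins}, this gives $n_T(u) \geq s/3 \geq \epsilon N/3$. The lower bound is the one delicate ingredient: it is exactly where the $1/6$-relative balance of the input tree (Definition~\ref{def:relbal}) combines with the stopping conditions of the two while loops in $\search$ to certify that the selected subtree is not too small. This is the step where I expect the real work to lie, and it is established as part of the correctness of $\search$ in Appendix~\ref{apx:refine}; for the present lemma I would simply invoke it.

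Finally I would apply Lemma~\ref{lem:opdelins}, which bounds the operation cost of $\opdelins(u,v)$ by $n_T(u\land v)/n_T(u)$. Both $u$ and $v$ lie inside $T_{big}$, so their lowest common ancestor does as well, giving $n_T(u\land v) \leq |\leaves(T_{big})| \leq N$. Combining the two estimates yields
\[
\frac{n_T(u\land v)}{n_T(u)} \;\leq\; \frac{N}{\epsilon N/3} \;=\; \frac{3}{\epsilon}.
\]
Because this bound depends only on $\epsilon$, and because each level of the recursion re-establishes its own $s \geq \epsilon N'$ together with a $T_{big}$ of size at most $N'$, it bounds the operation cost of every subtree deletion and insertion operator applied anywhere in the algorithm. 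The main obstacle is the size lower bound $n_T(u) \geq s/3$; once that is granted, the lemma follows from the short computation above.
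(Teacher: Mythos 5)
Your proposal is correct and follows essentially the same route as the paper's own proof: it combines the while-loop condition of Algorithm~\ref{alg:refinerebalance} (which forces $\delta \geq \epsilon$, hence $s \geq \epsilon N$), the size guarantee $n_T(u) \geq s/3$ from Lemma~\ref{lem:searchhalf}, and the bound $n_T(u\land v) \leq N$ (the paper notes the lowest common ancestor is in fact the root of the current tree), before plugging into Lemma~\ref{lem:opdelins} to get $3/\epsilon$. Your explicit remark that each recursive level re-establishes the bound with respect to its own size $N'$ is, if anything, slightly more careful than the paper's wording, which conflates the current tree's size with the original $n$.
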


For Theorem~\ref{thm:balance2}, the relative balance comes from Lemma~\ref{lem:refinebalance}, the operator properties com from Lemmas~\ref{lem:refinesep} and~\ref{lem:refineshift} respectively, and the approximation factor comes from Lemma~\ref{lem:opdelins}, Lemma~\ref{lem:refinesep}, and Lemma~\ref{lem:refineshift} together.

\subsection{Stochastically Fair Hierarchical Clustering}\label{sec:stochastic}

At this point, we almost have enough tools to solve stochastically fair hierarchical clustering. For this, however, we need a simple application of level abstraction (Definition~\ref{def:abstract}). We introduce $\treerm$, which simply imposes one level abstraction: $T' = T.\opabs(t,h_{max})$ on the bottom levels of the hierarchy. Here, $t$ is a parameter and $h_{max}$ is the max depth in $T$.
Notice that we require the input to be relatively balanced to achieve this result.



\begin{theorem}\label{thm:stoch}
\thmstoch
\end{theorem}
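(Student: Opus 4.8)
The plan is to realize $\treerm$ (Algorithm~\ref{alg:rebalancemerge}) as the composition of the $\epsilon$-balancing reduction of Theorem~\ref{thm:balance2} with a single level abstraction, and to choose the abstraction depth so that fairness falls out of concentration. First I would invoke Theorem~\ref{thm:balance2} with $\epsilon = 1/(c\log_2 n)$ to turn the given $\gamma$-approximation into an $\epsilon$-relatively balanced tree at a multiplicative cost of $\frac{9\gamma}{2\epsilon}$. Writing $a := \min_{\ell\in[\lambda]}\alpha_\ell$ and $s^\ast := \frac{3(1-\delta)\ln(cn)}{\delta^2 a}$ for the target bottom-cluster size, I would apply $\opabs(t,h_{max})$ where $t$ is the largest depth with $(1/2-\epsilon)^t n \ge s^\ast$. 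By relative balance (Definition~\ref{def:relbal}) every depth-$t$ cluster has size in $[(1/2-\epsilon)^t n,\,(1/2+\epsilon)^t n]$, so after abstraction each new bottom cluster has size at least $s^\ast$ and, since abstraction moves no leaves, directly contains all of its former leaf descendants, satisfying the leaf-child requirement of Definition~\ref{def:fair}.

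For fairness I would argue by a Chernoff bound and a union bound over the surviving clusters, which are exactly the clusters of $T$ at depths $0,\dots,t$ (abstraction preserves cluster memberships). Fix a non-singleton cluster $C$ and color $\ell$. Under stochastic coloring, $\ell(C)$ is a sum of $|C|$ independent indicators with mean $\mathbb{E}[\ell(C)] = \sum_{v\in C} p_\ell(v)$, and the hypothesis $\frac{1}{1-\delta}\alpha_\ell \le p_\ell(v) \le \frac{1}{1+\delta}\beta_\ell$ gives $\frac{\alpha_\ell}{1-\delta}|C| \le \mathbb{E}[\ell(C)] \le \frac{\beta_\ell}{1+\delta}|C|$. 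The multiplicative-$\delta$ tails then yield $\Pr[\ell(C) < \alpha_\ell|C|] \le \Pr[\ell(C) < (1-\delta)\mathbb{E}[\ell(C)]] \le e^{-\delta^2\mathbb{E}[\ell(C)]/2}$ and $\Pr[\ell(C) > \beta_\ell|C|] \le \Pr[\ell(C) > (1+\delta)\mathbb{E}[\ell(C)]] \le e^{-\delta^2\mathbb{E}[\ell(C)]/3}$, so both fairness inequalities hold unless one of these rare events occurs. Because $\mathbb{E}[\ell(C)] \ge \frac{a}{1-\delta}|C| \ge \frac{a}{1-\delta}s^\ast = \frac{3\ln(cn)}{\delta^2}$, each tail is at most $(cn)^{-1}$; a union bound over the at most $n$ clusters and the $\lambda = O(1)$ colors gives a fair tree with high probability. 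Larger (shallower) clusters only concentrate more tightly, so the depth-$t$ clusters are the binding case, which is precisely why the threshold $s^\ast$ was chosen.

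For the approximation factor, the only operator used beyond those of Theorem~\ref{thm:balance2} is this single abstraction, so no edge is separated by more than one abstraction and its operation cost composes multiplicatively with $\frac{9\gamma}{2\epsilon}$. By Lemma~\ref{lem:opabstract} the operation cost is $n_T(u)/n_T(v)$ with $u$ the largest and $v$ the smallest node between depths $t$ and $h_{max}$; here $n_T(u)$ is the largest depth-$t$ cluster, at most $(1/2+\epsilon)^t n$. I would factor $(1/2+\epsilon)^t n = \left(\tfrac{1+2\epsilon}{1-2\epsilon}\right)^t (1/2-\epsilon)^t n$, use maximality of $t$ to bound $(1/2-\epsilon)^t n = \Theta(s^\ast)$, and observe that with $\epsilon = 1/(c\log_2 n)$ and $t = (1-o(1))\log_2 n$ (since $s^\ast$ is polylogarithmic) the slack term satisfies $\left(\tfrac{1+2\epsilon}{1-2\epsilon}\right)^t \to e^{4\epsilon t} = e^{4/(c(1-o(1)))}$. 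Hence the operation cost is at most $e^{4/(c(1-o(1)))}\cdot\frac{3\ln(cn)}{a\delta^2}$, and multiplying by the $\frac{9\gamma}{2\epsilon}$ factor of Theorem~\ref{thm:balance2} yields the stated fair approximation.

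The main obstacle is controlling this single, uniform-depth abstraction: it must reach deep enough that every surviving leaf-cluster clears the size threshold $s^\ast$ demanded by the union bound, yet the relative-balance slack $\epsilon$ compounds multiplicatively across the roughly $\log_2 n$ levels above depth $t$, inflating the largest depth-$t$ cluster relative to the smallest. Showing that this inflation is only $e^{O(1/c)}$ rather than polynomial is exactly where the choice $\epsilon = 1/(c\log_2 n)$ is essential and is the crux of the argument; once $t$ and $s^\ast$ are pinned down, the fairness and cost bounds reduce to the standard concentration and operation-cost computations sketched above.
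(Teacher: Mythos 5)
Your overall route is the same as the paper's: compose Theorem~\ref{thm:balance2} (with $\epsilon = 1/(c\log_2 n)$) with a single abstraction $\opabs(t,h_{max})$ at the depth $t$ where the minimum cluster size is roughly $s^\ast = \frac{3(1-\delta)\ln(cn)}{a\delta^2}$, prove fairness of the resulting bottom clusters by a multiplicative Chernoff bound at that scale, and bound the abstraction's operation cost by the largest depth-$t$ cluster via Lemma~\ref{lem:balancedratio}. These steps correspond exactly to the paper's Lemmas~\ref{lem:stochsize}, \ref{lem:stochchernoff} and~\ref{lem:stochdist}, and your operation-cost computation giving $e^{4/(c(1-o(1)))}\cdot\frac{3\ln(cn)}{a\delta^2}$ matches the paper's (up to constant-factor slack from your integral choice of $t$, which the paper sidesteps by taking $t$ non-integral).

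There is, however, a genuine gap in the fairness step: your union bound does not deliver the ``with high probability'' claim in the theorem. You bound each (cluster, color) failure event by roughly $2(cn)^{-1}$ and then union over ``at most $n$ clusters,'' which yields a failure probability of up to $2\lambda/c$ --- a constant, whereas the paper defines high probability as at least $1-1/\polylog(n)$, and no choice of the constant $c$ turns a constant into $1/\polylog(n)$. The missing ingredient is a sharper count of the events: the depth-$t$ clusters are disjoint and each has size at least $s^\ast = \Omega(\log n)$, so there are only $O(n/\log n)$ of them (and only $O(n/\log n)$ internal nodes in the whole output tree, since the binary structure above depth $t$ at most doubles the count). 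The paper exploits precisely this: it verifies fairness only for the height-$1$ clusters --- observing that a union of fair clusters is fair, so fairness propagates up the tree --- and unions over $O(n/\log n)$ events of probability $O(1/(cn))$ each, obtaining failure probability $O(1/\log n)$. Your closing remark that shallower clusters concentrate more tightly gestures at an alternative repair (a level-by-level union bound with size-dependent tails), but you never carry it out; as written, your argument establishes only constant success probability, not the claimed guarantee.
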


Theorem~\ref{thm:stoch} with constant $\alpha_\ell$ for all $\ell\in[\lambda]$, $c$, and $\delta$ yields a $O(\gamma\log n)$ approximation. Since $\gamma = O(\sqrt{\log n})$~\cite{charikar17,dasgupta}, this becomes $O(\log^{3/2} n)$. It is quite impressive, as the best previous fair (albeit, deterministic) approximation was $poly(n)$~\cite{ahmadian2020fairhier}. Also, $\delta$ exhibits an important tradeoff: increasing $\delta$ increases success probability but also decreases the range of acceptable $p_\ell(v)$ values.

It might be tempting to suggest applying $\treerm$ to any existing hierarchy, as opposed to one that is $\epsilon$-relatively balanced. However, if we consider, for instance, a highly unbalanced tree where all internal nodes have at least one leaf-child, the algorithm would only merge the bottom $t$ internal nodes into a single cluster, thereby not guaranteeing fairness. The resulting structure would also not be particularly interesting. This is why the rebalancing process is important.

Obviously, since we only apply  level abstraction once, edge separation only happens once per edge in the algorithm. To bound the operation cost, we explore the relative size of clusters at a specified depth in the hierarchy. The following guarantee is achieved by considering a root-to-vertex path where we always travel to maximally/minimally sized clusters according to the tree's relative balance.

\begin{lemma}\label{lem:balancedratio}
Let $T$ be an $\epsilon$-relatively balanced tree and $u$ and $v$ be internal nodes at depth $i$ in $T$. Then $(1/2-\epsilon)^in \leq n_T(u),n_T(v) \leq (1/2+\epsilon)^in$, which also implies $\frac{n_{T}(u)}{n_{T}(v)} \leq \frac{(1+2\epsilon)^{i}}{(1-2\epsilon)^{i}}$. Additionally, if $i \leq \log_{1/2-\epsilon}(x/n)$ for some arbitrary $x \geq 1$ and $\epsilon=1/(c\log_2 n)$ for a constant $c$, then the maximum cluster size is at most $ e^{4/(c(1-o(1)))}x$.
\end{lemma}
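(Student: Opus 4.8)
The plan is to establish the three claims in sequence: the per-level size bounds by induction on depth, the ratio bound by pure algebra, and the maximum-cluster-size bound by an asymptotic estimate that plugs in $\epsilon = 1/(c\log_2 n)$.

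For the size bounds I would induct on the depth $i$. The base case is the root at depth $0$, where $n_T(\rot(T)) = n$ and both $(1/2-\epsilon)^0 n$ and $(1/2+\epsilon)^0 n$ equal $n$. For the inductive step, suppose a node $w$ at depth $i-1$ satisfies $(1/2-\epsilon)^{i-1}n \le n_T(w) \le (1/2+\epsilon)^{i-1}n$. Any child $u$ of $w$ at depth $i$ shares the parent cluster $\clust(w)$ with its sibling, so Definition~\ref{def:relbal} gives $(1/2-\epsilon)n_T(w) \le n_T(u) \le (1/2+\epsilon)n_T(w)$; chaining this with the inductive hypothesis yields the claimed bounds at depth $i$. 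The one subtlety is that Definition~\ref{def:relbal} only constrains splits of clusters of size at least $1/(2\epsilon)$, so the induction is valid only while every ancestor cluster stays above this threshold, which holds throughout the depth range $i \le \log_{1/2-\epsilon}(x/n)$ used in the final claim (in the regime of interest for $x$).

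The ratio bound is then immediate: combining the upper bound on $n_T(u)$ with the lower bound on $n_T(v)$ gives
\[
\frac{n_T(u)}{n_T(v)} \le \frac{(1/2+\epsilon)^i n}{(1/2-\epsilon)^i n} = \left(\frac{1/2+\epsilon}{1/2-\epsilon}\right)^i = \frac{(1+2\epsilon)^i}{(1-2\epsilon)^i},
\]
where the last step just multiplies numerator and denominator by $2^i$. For the maximum-cluster-size claim I would evaluate the largest possible size $(1/2+\epsilon)^i n$ at the critical depth $i = \log_{1/2-\epsilon}(x/n)$, at which the minimum size $(1/2-\epsilon)^i n$ is exactly $x$. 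Factoring, $(1/2+\epsilon)^i n = x\left(\tfrac{1+2\epsilon}{1-2\epsilon}\right)^i$, so it remains to show the ratio factor is at most $e^{4/(c(1-o(1)))}$. Taking logarithms, $\ln\frac{1+2\epsilon}{1-2\epsilon} = 4\epsilon(1+o(1))$ and $i = \log_{1/2-\epsilon}(x/n) = (1-o(1))\log_2(n/x)$ since $\ln(1/2-\epsilon) = -\ln 2\,(1+o(1))$. Substituting $\epsilon = 1/(c\log_2 n)$ gives $i\ln\frac{1+2\epsilon}{1-2\epsilon} = \frac{4\log_2(n/x)}{c\log_2 n}(1+o(1)) \le \frac{4}{c}(1+o(1)) = \frac{4}{c(1-o(1))}$, where the inequality uses $x \ge 1$ so that $\log_2(n/x) \le \log_2 n$. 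Exponentiating yields the claimed $e^{4/(c(1-o(1)))}x$ bound.

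The main obstacle is this last step: carefully controlling all the $o(1)$ error terms (in $\ln(1\pm 2\epsilon)$, in $\ln(1/2-\epsilon)$, and in the conversion between $\log_{1/2-\epsilon}$ and $\log_2$) so that they collect into a single $1-o(1)$ factor in the exponent, and confirming that the threshold condition $|C_p| \ge 1/(2\epsilon)$ of Definition~\ref{def:relbal} indeed holds along every root-to-depth-$i$ path in the regime where the bound is applied.
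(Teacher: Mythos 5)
Your proof is correct and follows essentially the same route as the paper's: the per-level bounds come from chaining the balance condition along a root-to-node path (your induction is the same argument), the ratio bound is immediate algebra, and the final claim is obtained by factoring the maximum size at the critical depth as $x\left(\frac{1+2\epsilon}{1-2\epsilon}\right)^i$, bounding the exponent by $\log_2 n$ via $x \geq 1$, and plugging in $\epsilon = 1/(c\log_2 n)$. If anything, you are slightly more careful than the paper, which silently applies the balance condition at every split without acknowledging the $|C_p| \geq 1/(2\epsilon)$ threshold in Definition~\ref{def:relbal} that you correctly flag as needing to hold along the path (and which does hold in the paper's actual uses of the lemma, where $x \geq 1/(2\epsilon)$).
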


This yields our operation cost, since it bounds the size of clusters at certain depths.

\begin{lemma}\label{lem:stochdist}
In $\treerm$, the level abstraction has operation cost at most $(1/2+\epsilon)^tn$.
\end{lemma}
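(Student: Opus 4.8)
The plan is to specialize the general abstraction bound (Lemma~\ref{lem:opabstract}) to $\opabs(t,h_{max})$ and then control the resulting cluster sizes using the relative balance of the input through Lemma~\ref{lem:balancedratio}. By the definition of operation cost it suffices to exhibit a $\Delta$ with $\cost_{T'}(e) \leq \Delta\,\cost_T(e)$ for every edge $e=(x,y)$ separated by the single abstraction, i.e.\ to bound the ratio $n_{T'}(x\land y)/n_T(x\land y)$.

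First I would pin down where the lowest common ancestor moves. Since $e$ is separated, $x\land y$ is an internal node that the operator contracts, so by Definition~\ref{def:abstract} it lies at some depth in $[t+1,h_{max}]$, and the abstraction merges it upward until reaching its ancestor $a$ at depth exactly $t$ (contraction halts there). As in the proof of Lemma~\ref{lem:opabstract}, $a$'s cluster is unchanged, so $n_{T'}(x\land y) = n_T(a)$ with $a$ at depth $t$. Because the input is $\epsilon$-relatively balanced, Lemma~\ref{lem:balancedratio} applied at depth $t$ gives $n_T(a) \leq (1/2+\epsilon)^t n$; this is the only place the balance hypothesis enters, and it is precisely what forces depth-$t$ clusters to be geometrically small. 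For the denominator I would use only the crude fact that $x\land y$ is a genuine (internal) cluster, so $n_T(x\land y) \geq 1$ and $\cost_T(e) = w(e)\,n_T(x\land y) \geq w(e)$. Combining with $\cost_{T'}(e) = w(e)\,n_T(a) \leq w(e)(1/2+\epsilon)^t n$ yields $\cost_{T'}(e) \leq (1/2+\epsilon)^t n\cdot\cost_T(e)$, so $\Delta = (1/2+\epsilon)^t n$.

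I do not expect a serious obstacle: the statement is essentially the $h_1=t$, $h_2=h_{max}$ instance of Lemma~\ref{lem:opabstract}, in which the maximizing node $u$ over depths $[t,h_{max}]$ is necessarily a depth-$t$ node (cluster sizes only decrease with depth) and the minimizing node $v$ has size at least one. The one point deserving a sentence of care is the legitimacy of invoking Lemma~\ref{lem:balancedratio}'s \emph{upper} bound at depth $t$, i.e.\ that the relative-balance constraint actually binds along the root-to-$a$ path; since every ancestor of $a$ has cluster size at least $n_T(a)$ and sits at depth strictly less than $t$, the same geometric bound covers them, and the deliberately loose denominator estimate $n_T(x\land y)\geq 1$ keeps the final ratio exactly $(1/2+\epsilon)^t n$ without any further bookkeeping.
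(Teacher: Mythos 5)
Your proposal is correct and matches the paper's own argument: the paper likewise combines Lemma~\ref{lem:opabstract} with the upper bound $(1/2+\epsilon)^t n$ on depth-$t$ cluster sizes from Lemma~\ref{lem:balancedratio} and the trivial lower bound of $1$ on the contracted cluster. Your only deviation is re-deriving the abstraction bound inline (tracking where the lowest common ancestor moves) rather than citing Lemma~\ref{lem:opabstract} as a black box, which is not a substantive difference.
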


To get our fairness results, we need to use a Chernoff bound, thus we must guarantee that all internal nodes have sufficiently large size. This too comes from our bounds on cluster sizes.

\begin{lemma}\label{lem:stochsize}
In $\treerm$, for any internal node $v$, $n_{T'}(v) \geq (1/2-\epsilon)^tn$.
\end{lemma}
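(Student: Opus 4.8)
The plan is to reduce this entirely to the structural effect of the single level abstraction that $\treerm$ performs, together with the cluster-size bounds already established in Lemma~\ref{lem:balancedratio}. Recall from Algorithm~\ref{alg:rebalancemerge} that $\treerm$ outputs $T' = T.\opabs(t,h_{max})$, so the only modification to $T$ is one level abstraction between depths $t+1$ and $h_{max}$. First I would pin down exactly which nodes remain internal in $T'$: by Definition~\ref{def:abstract}, $\opabs(t,h_{max})$ contracts every internal node at depth strictly greater than $t$ into its parent, so the internal nodes of $T'$ are precisely the internal nodes of $T$ that sit at depth at most $t$. The crucial observation is that contraction never moves a leaf across subtrees — it only deletes internal structure — so for every surviving internal node $v$ we have $\leaves(T'[v]) = \leaves(T[v])$ and hence $n_{T'}(v) = n_T(v)$.

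With this identity in hand, the bound follows from relative balance. Since the input to $\treerm$ is $\epsilon$-relatively balanced with $\epsilon = 1/(c\log_2 n)$, Lemma~\ref{lem:balancedratio} applies to each internal node $v$ of $T$, giving $n_T(v) \geq (1/2-\epsilon)^{\depth(v)} n$, where $\depth(v)$ is the depth of $v$ in $T$. Because $0 < 1/2-\epsilon < 1$, the map $i \mapsto (1/2-\epsilon)^i$ is decreasing, so for any surviving node $v$ (which has $\depth(v) \leq t$) we get $(1/2-\epsilon)^{\depth(v)} \geq (1/2-\epsilon)^t$. Chaining these, $n_{T'}(v) = n_T(v) \geq (1/2-\epsilon)^{\depth(v)} n \geq (1/2-\epsilon)^t n$, which is exactly the claim.

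The argument is short, and I do not expect a genuine obstacle; the only points requiring care are the two structural facts. The first is verifying that a single $\opabs(t,h_{max})$ leaves intact exactly the depth-$\leq t$ internal nodes and flattens everything below depth $t$ into those nodes, so that the deepest survivors sit at depth $t$. The second, and the one most worth stating explicitly, is that abstraction is cluster-size preserving on surviving nodes: no leaf is added to or removed from $\leaves(T[v])$, so $n_{T'}(v) = n_T(v)$ rather than merely an inequality. Once these are granted, the size bound is a direct substitution into Lemma~\ref{lem:balancedratio} using the monotonicity of $(1/2-\epsilon)^i$ in $i$.
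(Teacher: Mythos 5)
Your proof is correct and follows essentially the same route as the paper's: both argue that the level abstraction leaves the cluster sizes of surviving (depth $\leq t$) internal nodes unchanged, and then apply the lower bound $(1/2-\epsilon)^i n$ from Lemma~\ref{lem:balancedratio} together with monotonicity in the depth. Your write-up is simply a more carefully spelled-out version of the paper's argument, making explicit which nodes survive and why their cluster sizes are preserved.
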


Finally, we must show the fairness guarantee. Since the union of two fair clusters is fair, we only need to show this for the clusters at height 1 in the hierarchy, as this would imply fairness for the rest of the hierarchy. This comes from a Chernoff bound.

\begin{lemma}\label{lem:stochchernoff}
The resulting tree from $\treerm$ with $t = \log_{1/2-\epsilon}\left(\frac{3(1-\delta)\ln(cn)}{a\delta^2n}\right)$ for $a = \min_{\ell\in[\lambda]}\alpha_\ell$ and any $\delta>0$ is stochastically fair for given parameters $\alpha_\ell,\beta_\ell$ for all colors $\ell\in [\lambda]$ with high probability if with $\frac{1}{1-\delta}\alpha_\ell \leq p_\ell(v) \leq \frac1{1+\delta}\beta_\ell$ for all $v\in V$ and $\ell\in [\lambda]$ for $\lambda = O(1)$.
\end{lemma}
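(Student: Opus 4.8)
The plan is to reduce the fairness of the whole hierarchy to the fairness of the clusters at height $1$ and then control those clusters with a Chernoff bound. Recall that $\treerm$ applies a single level abstraction $\opabs(t,h_{max})$, so every cluster at height $1$ in the output is exactly a node of the input $\epsilon$-relatively balanced tree contracted up from depth at least $t$; in particular each such cluster has only leaf children, which discharges the leaf-child requirement of Definition~\ref{def:fair}. Since the union of two fair clusters is again fair (both the color counts $\ell(\cdot)$ and the sizes add), every cluster higher in the tree is a disjoint union of height-$1$ clusters and is therefore fair whenever all height-$1$ clusters are. So it suffices to show that, with probability $1-1/\polylog(n)$, every height-$1$ cluster $C$ satisfies $\alpha_\ell|C|\le \ell(C)\le \beta_\ell|C|$ for all $\ell\in[\lambda]$.

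Fix a height-$1$ cluster $C$ and a color $\ell$. The variable $\ell(C)=\sum_{v\in C}\mathbf{1}[v\text{ has color }\ell]$ is a sum of independent Bernoulli$(p_\ell(v))$ indicators with mean $\mu=\sum_{v\in C}p_\ell(v)$. The hypothesis $\frac{1}{1-\delta}\alpha_\ell\le p_\ell(v)\le \frac{1}{1+\delta}\beta_\ell$ immediately gives $\frac{\alpha_\ell}{1-\delta}|C|\le\mu\le\frac{\beta_\ell}{1+\delta}|C|$, equivalently $(1-\delta)\mu\ge\alpha_\ell|C|$ and $(1+\delta)\mu\le\beta_\ell|C|$. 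Hence the bad event $\ell(C)<\alpha_\ell|C|$ is contained in $\{\ell(C)<(1-\delta)\mu\}$ and the bad event $\ell(C)>\beta_\ell|C|$ is contained in $\{\ell(C)>(1+\delta)\mu\}$, so the standard multiplicative Chernoff bounds (for $\delta\in(0,1)$) control these by $e^{-\mu\delta^2/2}$ and $e^{-\mu\delta^2/3}$ respectively.

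Next I would turn the size guarantee into a lower bound on $\mu$. By Lemma~\ref{lem:stochsize} and the choice $t=\log_{1/2-\epsilon}\!\big(\tfrac{3(1-\delta)\ln(cn)}{a\delta^2 n}\big)$ we get $|C|\ge(1/2-\epsilon)^t n=\frac{3(1-\delta)\ln(cn)}{a\delta^2}$, and combining with $\mu\ge\frac{\alpha_\ell}{1-\delta}|C|\ge\frac{a}{1-\delta}|C|$ (using $a=\min_\ell\alpha_\ell$) yields $\mu\ge\frac{3\ln(cn)}{\delta^2}$. Substituting this makes both Chernoff tails at most $\frac{1}{cn}$, so $C$ is unfair for color $\ell$ with probability at most $\frac{2}{cn}$.

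Finally I would union bound, and this is the step to get right. The decisive observation is that because every height-$1$ cluster has size at least $\frac{3(1-\delta)\ln(cn)}{a\delta^2}=\Theta(\log n)$, there are only $O(n/\log n)$ of them; multiplying $O(n/\log n)$ clusters by $\lambda=O(1)$ colors and the per-pair failure $\frac{2}{cn}$ gives a total failure probability of $O(1/\log n)=1/\polylog(n)$, exactly the high-probability guarantee of Definition~\ref{def:stochexp}. The main obstacle is precisely this bookkeeping: a naive union bound over all $\le n$ internal nodes only yields constant failure probability, so the argument genuinely relies on abstraction leaving only $O(n/\log n)$ bottom clusters, each of size $\Omega(\log n)$, to absorb the union bound. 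A secondary point is to align the Chernoff deviation parameter with the multiplicative slack $\frac{1}{1\pm\delta}$ in the hypothesis so that the concentration interval $[(1-\delta)\mu,(1+\delta)\mu]$ lands inside the fairness interval $[\alpha_\ell|C|,\beta_\ell|C|]$.
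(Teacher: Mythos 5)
Your proposal is correct and follows essentially the same route as the paper's proof: reduce fairness to the height-$1$ clusters (unions of fair clusters being fair), apply a multiplicative Chernoff bound with deviation parameter $\delta$ using the mean bounds $\frac{\alpha_\ell}{1-\delta}|C|\le\mu\le\frac{\beta_\ell}{1+\delta}|C|$ and the size lower bound $|C|\ge\frac{3(1-\delta)\ln(cn)}{a\delta^2}$ from Lemma~\ref{lem:stochsize}, giving per-cluster failure $O(1/(cn))$, and then union bound over the $O(n/\log n)$ bottom clusters and $\lambda=O(1)$ colors to obtain success probability $1-O(1/\log n)$. The paper's proof makes exactly the same moves, including the key bookkeeping point you highlight, that the union bound succeeds only because abstraction leaves $O(n/\log n)$ clusters each of size $\Omega(\log n)$.
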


This is sufficient to show Theorem~\ref{thm:stoch}. The fairness is a result of Lemma~\ref{lem:stochchernoff}, the operator properties are a result of Lemma~\ref{lem:stochdist} and the obvious fact that we only apply one level abstraction, and the approximation factor comes from Lemma~\ref{lem:opabstract} and Lemma~\ref{lem:stochdist} together.

\subsection{Deterministically Fair Hierarchical Clustering}\label{sec:fair}

Finally, we have our main results on the standard, deterministic fair hierarchical clustering problem. This algorithm builds on top of the results from Theorem~\ref{thm:balance2} and uses methods similar to Theorem~\ref{thm:stoch}. In addition to previous algorithms, it uses more applications of level abstraction and introduces tree folding. 

\begin{theorem}\label{thm:main}
\thmmain

This algorithm runs in $O(n^2\log n)$ time.
\end{theorem}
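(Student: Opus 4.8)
The plan is to establish Theorem~\ref{thm:main} by decomposing it into the same four ingredients used for the earlier theorems --- relative balance inherited from Theorem~\ref{thm:balance2}, an operator-accounting argument, an approximation bound obtained by multiplying operation costs, and a fairness argument --- except that fairness now comes from the tree-folding operator rather than a Chernoff bound. First I would pin down the behavior of one recursive call of $\fairhc$: it applies a single level abstraction $\opabs(0,\log_2 h)$ to collapse the top $\log_2 h$ levels into a root with $h$ children, then for each color $\ell\in[\lambda]$ sorts these children by their color-$\ell$ fraction and applies a sort-and-stride tree fold (Definition~\ref{def:fold}) that merges together subtrees drawn evenly across the sorted order, and finally recurses on each resulting child. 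Because the input is $\epsilon$-relatively balanced with $\epsilon=1/(c\log_2 n)$, Lemma~\ref{lem:balancedratio} controls all cluster sizes at the depths involved, which is what makes every subsequent estimate go through.

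Second, I would handle the operator accounting and the approximation factor, which are essentially bookkeeping on top of the inherited $\frac{9\gamma}{2\epsilon}$-approximation of Theorem~\ref{thm:balance2}. Since each abstraction collapses the levels above a fixed depth, an edge whose lowest common ancestor sits at that depth is separated by exactly one abstraction, and Lemma~\ref{lem:opabstract} with Lemma~\ref{lem:balancedratio} bounds its operation cost by $e^{2/c}n^\delta$. For folds, each recursive call performs one fold per color, and the recursion descends $\log_2 h=\delta\log_2 n$ levels of an $O(\log n)$-height tree, so there are $\log_h n = 1/\delta$ levels and hence at most $\lambda/\delta$ folds separating any edge; Lemma~\ref{lem:opfold} with Lemma~\ref{lem:balancedratio} bounds each fold's operation cost by $ke^{4/(c(1-o(1)))}$ (the ratio of the two merged clusters at a common depth, times the $k$-fold blow-up). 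Multiplying the inherited factor by $e^{2/c}n^\delta$ once and by $ke^{4/(c(1-o(1)))}$ up to $\lambda/\delta$ times, and collecting the $\polylog$ terms, yields the claimed approximation.

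Third --- and this is the heart of the proof --- I would prove the fairness claim. The key is a single-level distortion lemma: after one round of abstraction followed by the $\lambda$ folds, any cluster's color-$\ell$ fraction $f$ is mapped to a value at most $e^{6/c}f + e^{4/c}/k$ and at least $e^{-6/c}f$. The multiplicative factors $e^{\pm 6/c}$ come from the relative-balance size bounds of Lemma~\ref{lem:balancedratio} (the abstraction contributes an $e^{2/c}$ size distortion and the folds an $e^{4/c}$ distortion, hence to fractions), while the additive $e^{4/c}/k$ reflects the residual imbalance the sort-and-stride matching cannot remove when it merges $h/k$ subtrees of only approximately equal size. Iterating this recursion over the $1/\delta = \log_h n$ recursion levels, starting from the global color fraction $c_\ell$, gives an upper bound $c_\ell\bigl(e^{6/c}+e^{4/c}/(kc_\ell)\bigr)^{1/\delta}$ and a lower bound $c_\ell e^{-(6/c)\log_h n}$ on every cluster's fraction; these are exactly the thresholds against which $\beta_\ell$ and $\alpha_\ell$ are stated, so every cluster is fair. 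I would also verify the base case: the recursion halts at height-$1$ clusters, which enforces the leaf-children condition of Definition~\ref{def:fair} and supplies a fair baseline clustering.

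The main obstacle I expect is the single-level fairness lemma for more than two colors. With two colors, balancing one fraction balances the other, but for $\lambda>2$ the folds are performed sequentially, one per color, and a fold that equalizes color $\ell$ across the $k$ groups can perturb the already-balanced fractions of earlier colors; making the per-level bound $f\mapsto e^{6/c}f + e^{4/c}/k$ hold \emph{simultaneously} for all colors requires arguing that each fold's perturbation of the other colors is absorbed into the same multiplicative/additive budget, and that the sort-and-stride grouping really does match high- with low-fraction subtrees well enough given only relative (not exact) balance on their sizes. Once that lemma is in hand, the iteration over recursion depth and the approximation bookkeeping are routine. Finally, the $O(n^2\log n)$ runtime follows by bounding the per-operator work --- sorting the $O(h)$ children $\lambda$ times and recomputing color counts --- and summing over the $O(n)$ nodes and $O(\log n)$ levels touched across the recursion.
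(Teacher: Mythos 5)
Your skeleton is the paper's skeleton: one recursive round of $\opabs(0,\log_2 h)$ plus sort-and-stride folds, operator accounting giving one abstraction of cost $e^{2/c}n^\delta$ and $\lambda\log_2 n/\log_2 h$ folds of cost $ke^{4/(c(1-o(1)))}$ per edge, a per-level fraction distortion $f\mapsto e^{6/c}f+e^{4/c}/k$ iterated $\log_h n$ times for fairness, and runtime bookkeeping. These correspond exactly to the paper's Lemmas~\ref{lem:folddist},~\ref{lem:foldratio},~\ref{lem:foldnum},~\ref{lem:foldfair},~\ref{lem:fair}, and~\ref{lem:foldapx}. However, there is a genuine gap in your pipeline: you feed the output of Theorem~\ref{thm:balance2} \emph{directly} into $\fairhc$, whereas the paper first applies the bottom-level abstraction of Theorem~\ref{thm:stoch} with $t=\log_{1/2-\epsilon}(1/(2n\epsilon))$ (Lemma~\ref{lem:preprocbalance}). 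That step is not optional. Tree folding (Definition~\ref{def:fold}) is only defined on subtrees whose topologies, ignoring leaves, are isomorphic. After the preprocessing, every node at depth less than $t$ has cluster size at least $1/(2\epsilon)>1$, so the tree is a complete binary tree down to depth $t$ with all leaves hanging below depth $t$; hence the subtrees rooted at the $h$ children of the root are pairwise isomorphic and every fold in $\fairhc$ is well defined. On the raw output of Theorem~\ref{thm:balance2}, relative balance says nothing about clusters smaller than $1/(2\epsilon)$, so the subtrees rooted at depth $\log_2 h$ have arbitrary, generally non-isomorphic shapes, and the very first fold your algorithm attempts need not exist. The same issue undermines your fairness argument at the bottom of the recursion: Lemma~\ref{lem:foldfair} leans on Lemma~\ref{lem:balancedratio}, which only controls cluster sizes above the depth where balance is guaranteed, and without preprocessing the final-level clusters can be tiny (even monochromatic pairs) beyond the reach of any fold.

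The omission is also visible in your constants. The preprocessing abstraction incurs an extra multiplicative factor of $\tfrac12 ce^{4/(c(1-o(1)))}\log_2 n$ (Lemma~\ref{lem:preprocbalance}), which is precisely where the \emph{second} $\log_2 n$ in the theorem's $\tfrac{9c^2\gamma}{4}\,n^\delta\log_2^2 n$ bound comes from; inheriting only $\tfrac{9\gamma}{2\epsilon}=\tfrac{9c\gamma}{2}\log_2 n$ as you do, no amount of ``collecting the polylog terms'' reproduces the claimed factor, because a whole operator's cost is missing from the product. Two smaller remarks: your concern about sequential folds for $\lambda>2$ colors is legitimate, and the paper resolves it with a one-line argument in Lemma~\ref{lem:foldfair} (a merge of clusters each satisfying a fraction bound satisfies the same bound, since the merged fraction is a weighted average); and your runtime attribution differs from the paper's, which charges the dominant $O(n^2\log(1/\epsilon))=O(n^2\log n)$ term to the subtree-search iterations of Theorem~\ref{thm:balance2} rather than to $\fairhc$ itself (which the paper bounds by $O(n^2)$) --- harmless, but worth aligning once the preprocessing step is restored.
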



Since $\gamma = O(\sqrt{\log n})$, this becomes $O(n^\delta\log^{5/2}n)$ for any constant $c$, $\delta\in(0,1)$, and $k$ which greatly improves the previous $O(n^{5/6}\log^{5/4}(n))$-approximation~\cite{ahmadian2020fairhier}. Additionally, the previous work only considered 2 colors with equal representation in the dataset. Our algorithm greatly generalizes this to both more colors and different proportions of representation. While we do not guarantee exact color ratio preservation as the previous work does, our algorithms can get arbitrarily close through parameterization and we no longer require the ratio between colors points in the input to be exactly 1.

In terms of fairness, all of the variables here are parameterizeable constants. Increasing $k$, $c$, and $\delta$ will all make these values get closer to the true proportions of the colors in the overall dataset, and this can be done to an arbitrary extent. Therefore, based off the parameterization, this allows us to enforce clusters to have pretty close to the same color proportions as the underlying dataset.

The goal of this algorithm is to recursively abstract away the top $\log_2h$ depth of the tree, where we end up setting $h = n^\delta$. Each time we do this, we get a kind of ``frontier clustering'', which is an $h$-sized clustering whose parents in the tree are all the root after level abstraction. Since the subtrees rooted at each cluster have the same topology (besides their leaves, this is due to our level abstraction at the lowest levels in the tree), we can then execute tree folding on any subset of them. We select cluster subtrees to fold together such that, once we merge the appropriate clusters, the clustering at this level will be more fair. Then, as we recurse down the tree, we subsequently either eliminate clusters (via level abstraction) or fold them to guarantee fairness. For more information, see Algorithm~\ref{alg:fhc}.

\codefhc



To see why this creates a fair, low-cost hierarchy, we first bound the metrics on the operators used. When we execute level abstraction, we can leverage relative balance and Lemma~\ref{lem:balancedratio} to show that during $\fairhc$, we can bound the abstraction operation cost.

\begin{lemma}\label{lem:folddist}
In Algorithm~\ref{alg:fhc}, the level abstraction has operation cost at most $e^{2/c}h$.
\end{lemma}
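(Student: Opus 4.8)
The plan is to pinpoint the single level abstraction that each call of $\fairhc$ performs, namely the first line $T \gets T.\opabs(0,\log_2 h)$, and to bound its operation cost by combining Lemma~\ref{lem:opabstract} (the generic operation-cost bound for $\opabs$) with the relative-balance size estimates of Lemma~\ref{lem:balancedratio}. Since the algorithm applies exactly one abstraction per recursive call, it suffices to prove the $e^{2/c}h$ bound for this one operator and observe that the argument is uniform across calls.

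First I would invoke Lemma~\ref{lem:opabstract}: for $\opabs(0,\log_2 h)$ the operation cost is $n_T(u)/n_T(v)$, where $u$ and $v$ are the internal nodes of maximum and minimum cluster size, respectively, among depths $0$ through $\log_2 h$. Every separated edge has its lowest common ancestor, originally at some depth in $[1,\log_2 h]$, contracted all the way up to the root, so the numerator is simply $n_T(u)=n$, the root cluster. For the denominator, the minimum-size cluster in this depth range sits at depth $\log_2 h$, and since the input is $\epsilon$-relatively balanced with $\epsilon = 1/(c\log_2 n)$, Lemma~\ref{lem:balancedratio} gives $n_T(v) \ge (1/2-\epsilon)^{\log_2 h}\,n$. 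Hence the operation cost is at most $\frac{n}{(1/2-\epsilon)^{\log_2 h}\,n} = (1/2-\epsilon)^{-\log_2 h}$.

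The remaining step is algebra. Writing $(1/2-\epsilon)^{-\log_2 h} = 2^{\log_2 h}(1-2\epsilon)^{-\log_2 h} = h\,(1-2\epsilon)^{-\log_2 h}$, it suffices to show $(1-2\epsilon)^{-\log_2 h} \le e^{2/c}$. Using $-\ln(1-2\epsilon) \le 2\epsilon/(1-2\epsilon)$ together with the algorithm's requirement $\log_2 h = i < \log_{1/2-\epsilon}(1/(2n\epsilon)) < \log_2 n$, I obtain
\[ -\log_2 h \cdot \ln(1-2\epsilon) \le \frac{2\epsilon\log_2 h}{1-2\epsilon} \le \frac{2\log_2 h}{c\log_2 n\,(1-2\epsilon)} \le \frac{2}{c(1-2\epsilon)}, \]
and since $\epsilon = 1/(c\log_2 n) = o(1)$, the factor $1/(1-2\epsilon)\to 1$, so the exponent is at most $2/c$; this mirrors exactly the estimate already carried out in the proof of Lemma~\ref{lem:balancedratio}. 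Combining the two parts, the operation cost is at most $e^{2/c}h$.

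The main obstacle is bookkeeping rather than any single inequality: I must confirm that the relative-balance hypothesis of Lemma~\ref{lem:balancedratio} is genuinely available at each abstraction, including in the recursive calls on subtrees $T[v]$ where cluster sizes shrink. This is fine because $\epsilon$-relative balance of $T$ is inherited by every subtree and the depth index $\log_2 h$ is fixed, so the same ratio $(1/2-\epsilon)^{-\log_2 h}$ bounds every abstraction, provided the recursion only abstracts depths at which clusters still exceed the $1/(2\epsilon)$ threshold required for relative balance to apply. Verifying this, and checking that the strict slack $\log_2 h < \log_2 n$ (i.e. $h < n$) absorbs the $1/(1-2\epsilon)$ correction so that the bound collapses to the clean $e^{2/c}h$, is where the care is needed.
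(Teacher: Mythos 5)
Your proposal is correct and follows essentially the same route as the paper's proof: invoke Lemma~\ref{lem:opabstract}, upper-bound the numerator by the root cluster size $n$, lower-bound the depth-$\log_2 h$ clusters by $(1/2-\epsilon)^{\log_2 h}n \geq e^{-2/c}n/h$ via Lemma~\ref{lem:balancedratio}, and conclude the ratio is at most $e^{2/c}h$. Your extra care about the $1/(1-2\epsilon)$ correction being absorbed by $h<n$, and about balance persisting through the recursion (which the paper handles via Lemma~\ref{lem:foldbalance}), only makes explicit what the paper's proof leaves implicit.
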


Our tree folding operation cost bound also comes from the balance of a tree, since any two vertices that are folded together must be at similar depths.

\begin{lemma}\label{lem:foldratio}
In Algorithm~\ref{alg:fhc}, each tree folding has operation cost at most $ke^{4/(c(1-o(1))}$ and acts on $k$ trees.
\end{lemma}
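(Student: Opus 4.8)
The plan is to reduce the claim to the generic fold-cost bound of Lemma~\ref{lem:opfold} and then control the size ratio of the identified nodes using the relative balance of the tree via Lemma~\ref{lem:balancedratio}. Recall from Lemma~\ref{lem:opfold} that a single tree folding $\opfold(T_1,\ldots,T_k)$ has operation cost at most $k \cdot \frac{n_T(u)}{n_T(v)}$, where $u$ and $v$ range over internal nodes that are identified with one another under the isomorphisms $\phi_i$ (i.e. $\phi_i(u) = \phi_j(v)$ for some $i,j$) and are chosen to maximize this ratio. So it suffices to show that $\frac{n_T(u)}{n_T(v)} \leq e^{4/(c(1-o(1)))}$ for every such pair, after which the bound $k e^{4/(c(1-o(1)))}$ on $k$ trees follows immediately.

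The first key step is to argue that any two internal nodes $u,v$ identified by a fold lie at a common depth. By Definition~\ref{def:fold}, the folded subtrees $T_1,\ldots,T_k$ are rooted at children of a common parent $p$, so all their roots share a depth; since the $\phi_i$ are topology-preserving isomorphisms of the internal trees, a node at a given relative depth inside $T_i$ is mapped to a node at the same relative depth inside $T_k$, hence to the same absolute depth $d$ in $T$. The second step is to bound $d$ and invoke relative balance. By the precondition of Algorithm~\ref{alg:fhc}, every fold is performed at a depth strictly below $\log_{1/2-\epsilon}(1/(2n\epsilon))$, which is precisely the range over which the relatively balanced structure and Lemma~\ref{lem:balancedratio} are valid. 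Setting $x = (1/2-\epsilon)^d n \geq 1/(2\epsilon) \geq 1$, the hypothesis $d \leq \log_{1/2-\epsilon}(x/n)$ holds with equality, so Lemma~\ref{lem:balancedratio} bounds the largest cluster at depth $d$ by $n_T(u) \leq e^{4/(c(1-o(1)))} x = e^{4/(c(1-o(1)))}(1/2-\epsilon)^d n \leq e^{4/(c(1-o(1)))} n_T(v)$, using $n_T(v) \geq (1/2-\epsilon)^d n$. Dividing through gives $\frac{n_T(u)}{n_T(v)} \leq e^{4/(c(1-o(1)))}$, which closes the argument.

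The main obstacle I anticipate is not the algebra but verifying the two structural invariants that make the above go through: (i) that the nodes identified by a fold genuinely lie at a common depth and within the depth range where Lemma~\ref{lem:balancedratio} applies, and (ii) that after the repeated level abstractions performed earlier in $\fairhc$ the relevant subtrees still satisfy the $\epsilon$-relative balance assumption with the same $n$ and $\epsilon$. Care is needed because $\opabs$ changes depths and contracts internal structure, so the naive notion of depth in the current tree is not the one that governs cluster sizes. I would resolve this by tracking each folded node back to its corresponding node in the original relatively balanced tree, where abstraction and folding leave its cluster size unchanged (no leaves are moved out of a contracted subtree), so that the size bounds of Lemma~\ref{lem:balancedratio} can legitimately be invoked at the original depth $d$.
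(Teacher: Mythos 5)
Your core argument is the same as the paper's: reduce to Lemma~\ref{lem:opfold}, observe that nodes identified by a fold lie at a common depth (because the isomorphisms preserve depth and the folded roots share a parent at depth $\log_2 h$ in the original tree), and then invoke Lemma~\ref{lem:balancedratio} at that depth to bound the cluster-size ratio by $e^{4/(c(1-o(1)))}$. Your instantiation of Lemma~\ref{lem:balancedratio} with $x=(1/2-\epsilon)^d n$ is a clean and valid way to extract the ratio bound, and the depth restriction you cite is the right one.

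The gap is in how you extend this to \emph{every} fold the algorithm performs. Algorithm~\ref{alg:fhc} folds once per color $\ell\in[\lambda]$ within a single recursive call and then recurses on the folded subtrees, so every fold after the very first one acts on nodes that were themselves created by earlier folds. Your maintenance argument --- ``tracking each folded node back to its corresponding node in the original relatively balanced tree, where abstraction and folding leave its cluster size unchanged'' --- is false for folding: a folded node's cluster is the union of the clusters mapped onto it, so its size grows by (roughly) a factor of $k$, and it has no single corresponding node in the original tree. Tracking back is sound for level abstraction (contraction moves no leaves into or out of surviving clusters) but not for folds. The paper closes exactly this hole with Lemma~\ref{lem:foldbalance}: folding preserves $\epsilon$-relative balance, because each merged cluster is a sum of $k$ clusters each satisfying the same balance inequalities, and those inequalities survive summation; this re-established invariant is what allows the single-fold argument to be re-applied to the later color folds and to every recursive call. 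Your proof needs this lemma or an equivalent observation (e.g., that nodes at a common level of the current tree are unions of \emph{equally many} same-depth original nodes, so the ratio bound persists under summation); as written, the step extending the bound beyond the first fold fails.
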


In order to bound the cost, we need to first know how many times an edge will be separated. We notice that an edge that is separated by level abstraction can no longer be separated on a subsequent recursive step. Additionally, the number of tree fold operators is proportionally bounded by the recursive depth, as it only happens $\lambda$ times each step.

\begin{lemma}\label{lem:foldnum}
In Algorithm~\ref{alg:fhc}, an edge $e$ is separated by at most 1 level abstraction and $\lambda\log_2(n)/\log_2(h)$ tree folds. The maximum recursion depth is also at most $\log_2(n)/\log_2(h)$.
\end{lemma}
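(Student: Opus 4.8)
The plan is to prove the three assertions in the order \emph{recursion depth}, then \emph{level-abstraction count}, then \emph{fold count}, since the latter two both rely on the depth bound. For the recursion depth, I would first note that each invocation of $\fairhc$ applies $\opabs(0,i)$, which promotes the vertices originally at depth $i=\log_2 h$ to be the new children of the root, and then recurses on exactly those children. Hence recursion level $r$ processes subtrees whose roots sit at depth $r\log_2 h$ in the original tree, and the recursion halts once a subtree has height $1$; it therefore suffices to bound the height of the input tree. Using Lemma~\ref{lem:balancedratio}, every internal node at depth $d$ has cluster size at least $(1/2-\epsilon)^d n$, so a node can exist only while $(1/2-\epsilon)^d n \geq 1$, giving height at most $\log_{1/(1/2-\epsilon)} n$. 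With $\epsilon = 1/(c\log_2 n)$ this is $\log_2 n\,(1+o(1))$, so the number of recursion levels is at most $\log_2 n/\log_2 h$.

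For the level-abstraction count, I would invoke the separation characterization from Lemma~\ref{lem:opabstract}: an edge $e=(x,y)$ is separated by $\opabs(0,i)$ only if $x\land y$ lies strictly between depths $1$ and $i$, in which case it is contracted into the root of the current subtree. But then $x$ and $y$ lie in different children of that root, so $e$ is not contained in any subtree passed to a recursive call and is never examined again, let alone separated by a further abstraction. Since no abstraction separated $e$ before this unique level, $e$ is separated by at most one level abstraction over the whole execution.

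The fold count is the crux, and the key observation is that within a single recursive call each edge is separated by at most $\lambda$ of the $\lambda k$ folds performed, i.e.\ one per color rather than one per fold. Fixing a color $\ell$, the $k$ folds $\opfold(\{v_{i+(j-1)k}: j\in[h/k]\})$ for $i\in[k]$ have argument sets that partition the $h$ root-children into $k$ disjoint groups. By Lemma~\ref{lem:opfold} a fold separates $e$ only if $x\land y$ lies strictly inside one of the folded subtrees; since $x\land y$ (if not the root) belongs to exactly one child subtree, and that child lies in exactly one of the $k$ groups, at most one of the $k$ color-$\ell$ folds can separate $e$. Summing over the $\lambda$ colors gives at most $\lambda$ separating folds per call. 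Unlike abstraction, a fold keeps $x\land y$ inside a child subtree, so $e$ may persist through every recursion level; combining with the depth bound yields at most $\lambda\log_2 n/\log_2 h$ separating folds in total.

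The hard part will be making the per-color bound fully rigorous across the \emph{successive} folds within one call: the $\lambda$ color-rounds are applied sequentially, and each reshapes the set of root-children, so I must argue that the disjoint-grouping reasoning still applies after earlier color-rounds have merged subtrees. I would handle this by tracking the image of $x\land y$ under each fold and observing that each color's $k$ folds still partition the \emph{current} children into disjoint groups, so the one-group-per-edge argument survives unchanged. A secondary subtlety is the height bound: the relative-balance guarantee of Lemma~\ref{lem:balancedratio} weakens once clusters drop below size $1/(2\epsilon)$, so I would confirm that the $o(1)$ slack in $\log_2 n\,(1+o(1))$ comfortably absorbs the few unbalanced levels near the leaves, which is exactly the regime excluded by the parameter constraint $i<\log_{1/2-\epsilon}(1/(2n\epsilon))$.
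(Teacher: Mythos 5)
Your proof is correct and follows essentially the same route as the paper's: at most one level abstraction per edge because a separated edge's lowest common ancestor is contracted into the current root, leaving its endpoints in different children that never share a recursive subtree again; at most $\lambda$ folds per recursive call because each color round's $k$ folds partition the current children, so the unique child containing $x\land y$ participates in exactly one fold; and recursion depth at most $\log_2 n/\log_2 h$ from the balance-based bound on the depth of the last internal node. Your concern about the $(1+o(1))$ slack in the height bound is unnecessary: since the base $1/(1/2-\epsilon) = 2/(1-2\epsilon) > 2$, one has $\log_{1/(1/2-\epsilon)} n \leq \log_2 n$ outright, so the stated bound holds cleanly.
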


Fairness comes from the ordering over $\ell$-colored vertices and the way select subtrees to fold together. One recursive step of $\fairhc$ incurs a small constant factor proportionate loss in potential fairness, and the number of times this loss occurs is bounded by the depth of recursion. We desire these fractions to be close to the true color proportions, which we can get arbitrarily close to by setting parameters $c$, $k$, and $h$.

\begin{lemma}~\label{lem:fair}
For an $\epsilon$-relatively balanced hierarchy $T$ over $\ell(V) = c_\ell n = O(n)$ vertices of each color $\ell\in[\lambda]$, Algorithm~\ref{alg:fhc} yields a hierarchical clustering $T'$ such that the amount of each color $\ell\in[\lambda]$ in each cluster (represented by vertex $v$) is bounded as follows:

\[\frac{c_\ell}{e^{2\log_hn/c}}\leq \frac{\ell(v)}{n_T(v)} \leq c_\ell\cdot ( e^{4/c}/(kc_\ell) + e^{6/c})^{\log_hn}.\]
\end{lemma}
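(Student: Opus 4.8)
The plan is to fix a single color $\ell$ and track the density $f(v) = \ell(v)/n_T(v)$ of that color in each cluster $v$ as it passes through the recursive rounds of $\fairhc$, reducing the global two-sided bound to a single-round multiplicative estimate that is then iterated. By Lemma~\ref{lem:foldnum} the recursion has depth at most $\log_h n$, and the root starts with density exactly $c_\ell$. So it suffices to show that one round turns a frontier cluster of density $f$ into children whose densities all lie in $[\,f\,e^{-2/c},\; f\,(e^{4/c}/(kc_\ell)+e^{6/c})\,]$; multiplying these factors over the $\le \log_h n$ rounds and setting $f=c_\ell$ at the root gives the two claimed inequalities.

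First I would isolate one round on a single frontier cluster $P$ of size $N$ and density $f$. The level abstraction $\opabs(0,\log_2 h)$ promotes the $h$ original depth-$\log_2 h$ descendants of $P$ to direct children $v_1,\dots,v_h$; since $T$ is $\epsilon$-relatively balanced with $\epsilon=1/(c\log_2 n)$, Lemma~\ref{lem:balancedratio} confines their sizes to the window $[(N/h)e^{-4/c},\,(N/h)e^{2/c}]$ (up to the $(1-o(1))$ correction). The algorithm then sorts these children in decreasing order of $f(v_i)$ and folds them round-robin into $k$ groups, so that group $i$ collects ranks $i,\,i+k,\,i+2k,\dots$, i.e.\ exactly one cluster out of each consecutive block of $k$ in the sorted order. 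The density of a folded group is the ratio of the summed color counts to the summed sizes of its $h/k$ constituents, and the whole round is governed by the heaviest and lightest such group.

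Next I would derive the per-round bounds from the heaviest and lightest group. For the upper bound, group $1$ receives the block-maxima; its denominator is at least $(h/k)(N/h)e^{-4/c}=(N/k)e^{-4/c}$, and I would bound its numerator by combining two estimates. A \emph{size} estimate bounds each of the $h/k$ constituents by its own size $\le (N/h)e^{2/c}$, which dominates when the color is locally saturated and contributes the $e^{6/c}$ summand; a \emph{spreading} estimate uses that sorting forces color-heavy clusters into the top ranks, so round-robin assignment gives group $1$ at most one such cluster beyond the fair share $fN/k$ per block, and after dividing by the denominator and normalizing the boundary term by the global density $c_\ell$ this contributes the $e^{4/c}/(kc_\ell)$ summand. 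Symmetrically, the lightest group $k$ receives the block-minima, and the same sorting-plus-round-robin spreading keeps it from being starved of color, so its density is at least $f$ up to the relative-balance size distortion, yielding the factor $e^{-2/c}$.

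The main obstacle is precisely this single-round averaging analysis. The delicacy is that folding acts on clusters that are simultaneously unequal in size, with the relative-balance window of Lemma~\ref{lem:balancedratio} supplying the constant-factor distortions of the form $e^{O(1/c)}$, and sorted by color density, which is exactly what prevents any group from concentrating or shedding too much of the color. Making the ``at most one extra color-heavy cluster per group'' boundary effect rigorous against arbitrary size weights, and verifying that the size and spreading estimates combine into the stated factor $e^{4/c}/(kc_\ell)+e^{6/c}$ rather than a bound that degrades with $k$, is where the real work lies; once the one-round estimate is established, composing it over the $\log_h n$ rounds and substituting $f=c_\ell$ is immediate.
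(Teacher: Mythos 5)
Your proposal follows essentially the same route as the paper: your single-round estimate is precisely the paper's Lemma~\ref{lem:foldfair} (sorted round-robin folding, the boundary cluster bounded crudely by its size, the remaining members bounded by the fair share of the preceding block, with all size distortions of the form $e^{O(1/c)}$ coming from Lemma~\ref{lem:balancedratio}), and your composition over the at most $\log_h n$ recursion levels of Lemma~\ref{lem:foldnum} is exactly the paper's proof of Lemma~\ref{lem:fair}. One technical caution on the reduction as you state it: the per-round claim that a cluster of \emph{local} density $f$ produces children of density at most $f\left(e^{4/c}/(kc_\ell)+e^{6/c}\right)$ cannot be proven when $f<c_\ell$, because the boundary term is a size bound (roughly $e^{2/c}N/h$) that does not scale with $f$; what the averaging argument actually yields is the affine bound $e^{4/c}/k+e^{6/c}f$. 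The claimed geometric bound still follows, since a short induction shows the affine recursion $f\mapsto e^{4/c}/k+e^{6/c}f$ started at $c_\ell$ stays below $c_\ell\left(e^{4/c}/(kc_\ell)+e^{6/c}\right)^m$ after $m$ rounds, so your plan goes through after replacing the multiplicative per-round statement with the affine one (the paper glosses over this same point when it iterates Lemma~\ref{lem:foldfair}). Separately, your per-round lower-bound factor $e^{-2/c}$ is stronger than what this computation gives: the fair-share numerator loses $e^{-4/c}$ and the denominator loses $e^{-2/c}$, i.e.\ $e^{-6/c}$ per round, which is what Lemma~\ref{lem:foldfair} states; the paper's statement of Lemma~\ref{lem:fair} (exponent $2\log_h n/c$) and its own proof (exponent $6\log_h n/c$) are inconsistent here, so to be rigorous you should either prove the tighter per-round constant or settle for $c_\ell/e^{6\log_h n/c}$ on the lower side.
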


In Theorem~\ref{thm:main}, fairness is a result of Lemma~\ref{lem:fair}, the operator properties are a result of Lemmas~\ref{lem:folddist},~\ref{lem:foldratio}, and~\ref{lem:foldnum}, and the approximation factor has already been worked out by Lemma~\ref{lem:foldapx}.

\section{Experiments}
This section validates our algorithms from Section~\ref{sec:algs}. Our simulations demonstrate that our algorithm incurs only a modest loss in the hierarchical clustering objective and exhibits increased fairness. Specifically, the approximate cost increases as a function of Algorithm \ref{alg:fhc}'s defining parameters: $c, \delta, $ and $k$. 

\paragraph{Datasets.} We use two data sets, \emph{Census} and \emph{Bank}, from the UCI data repository \cite{Dua2019}. Within each, we subsample only the features with numerical values. To compute the \emph{cost} of a hierarchical clustering we set the similarity to be $w(i,j) = \frac{1}{1 + d(i,j)}$ where $d(i,j)$ is the Euclidean distance between points $i$ and $j$. We color data based on binary (represented as blue and red) protected features: \emph{race} for \emph{Census} and \emph{marital status} for \emph{Bank} (both in line with the prior work of \citet{ahmadian2020fairhier}). As a result, \emph{Census} has a blue to red ratio of 1:7 while \emph{Bank} has 1:3.

We then subsample each color in each data set such that we retain (approximately) the data's original balance. We use samples of size 256. For each experiment, we do 10 replications and report the average results. We vary the parameters $c \in \{2^i\}_{i=0}^5, \delta \in (\frac18, \frac78)$, and $k \in \{2^i\}_{i=1}^4$ to experimentally validate their theoretical impact on the approximate guarantees of Section~\ref{sec:algs}.

\paragraph{Implementation.} The Python code for the following experiments are available in the Supplementary Material. We start by running average-linkage, a popular hierarchical clustering algorithm. We then apply Algorithms~\ref{alg:rebalance} - \ref{alg:fhc} to modify this structure and induce a \emph{fair} hierarchical clustering that exhibits a mild increase in the cost objective. 

\paragraph{Metrics.} In our results we track the approximate cost objective increase as follows: Let $G$ be our given graph, $T$ be average-linkage's output, and $T'$ be Algorithm~\ref{alg:fhc}'s output. We then measure the ratio $\textsc{Ratio}_{cost} = \frac{cost_G(T')}{cost_G(T)}$. 

\paragraph{Results.} We first note that the average-linkage algorithm must construct unfair trees since, for each data set, the algorithm induces some monochromatic clusters. Thus, our resultant fair clustering is of considerable value in practice.

\begin{figure}[t]
	\center{\includegraphics[width=0.95\linewidth]{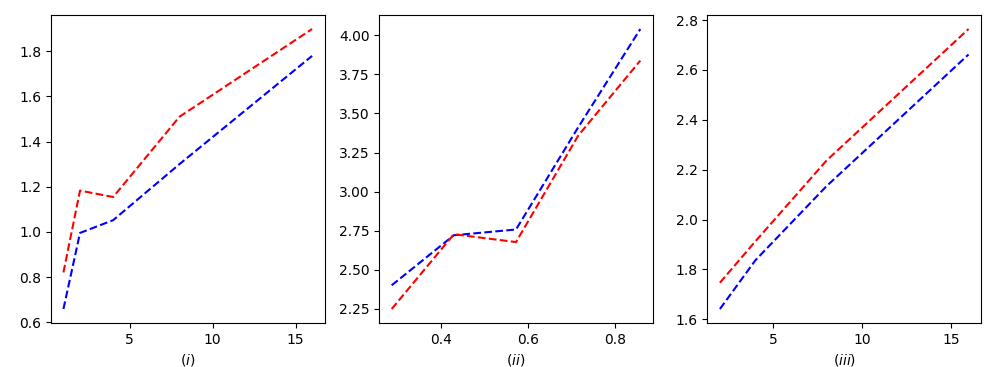}}
    \caption{\label{fig:n128_params} Cost ratio of Algorithm~\ref{alg:fhc} as compared to average-linkage. (i) Ratio increase as a function of the parameter $c$, (ii) ratio increase as a function of the parameter $\delta$, and (iii) ratio increase as a function of $k$. Blue lines indicate the result for \emph{Census} dataset whereas red indicates the \emph{Bank} dataset results.
    }

\vspace{-3mm}
\end{figure}

In Figure 1, we plot the change in cost ratio as the parameters ($c, \delta, k$) are varied for the two datasets. Supporting our theoretical results, increasing our fairness parameters leads to a modest increase in $\cost$. This is an empirical illustration of our fairness-$\cost$ approximation tradeoff according to our parameterization. Note that the results are consistent across tested datasets.


We additionally illustrate the resulting balance of our hierarchical clustering algorithm by presenting the distribution of the cluster ratios of the projected features (blue to red points) in Figure~\ref{fig:fairness} for the \emph{Census} data. The output of average-linkage naturally yields an unfair clustering of the data, yet after applying our algorithm on top this hierarchy we see that the cluster's balance move to concentrate about the underlying data balance of 1:7. An equivalent figure for the \emph{Bank} dataset is provided in the appendix due to space constraints.

\begin{figure}[h]
	\center{\includegraphics[width=0.95\linewidth]{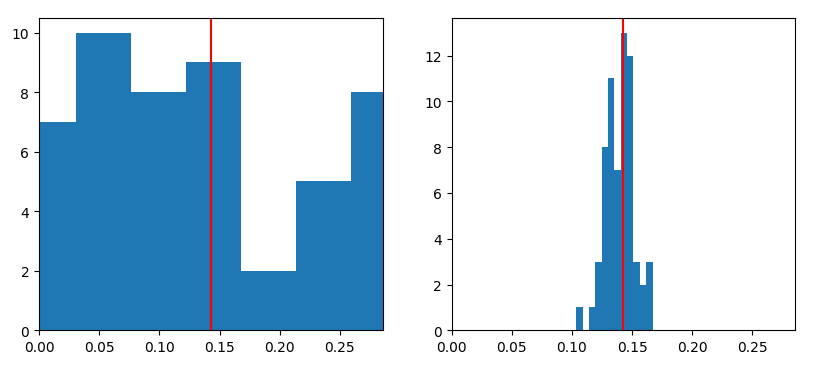}}
    \caption{\label{fig:fairness} Histogram of cluster balances after tree manipulation by Algorithm~\ref{alg:fhc}. The left plot depicts the balances after applying the average-  linkage algorithm and the right shows the result of applying our algorithm. The vertical red line indicates the balance of the dataset. Parameters were set to $c = 4, \delta = \frac38, k = 4$ for the above clustering result.
    }

\vspace{-3mm}
\end{figure}


\bibliography{references}
\bibliographystyle{icml2023}

\clearpage
\newpage
\appendix

\section{Limitations}\label{apx:limitations}

The main limitations this work suffers from encapsulate the general limitations of study in theoretical clustering fairness. Our work strives to provide algorithms that are applicable to many hierarchical clustering applications where fairness is a concern. However, our work is inherently limited by its focus on a specific fairness constraint (i.e., the extension of disparate impact originally used to study fair clustering~\cite{chierichetti2017fair}). While disparate impact has received substantive attention in the clustering community and is seen as one of the primary fairness definitions/constraints~\citep[see, e.g.,][]{ahmadian2020fairhier,ahmadian2020faircorr,bera2019fair,brubach2020a,kleindessner}, it is just one of many established fairness constraints for problems in clustering~\cite{chakrabarti2021a,proportionalfairness,esmaeili2021fair,kleindessner2}. When applying fair machine learning algorithms to problems, it is not always clear which fairness constraints are the best for the application. This, and the fact that the application of fairness to a problem can cause harm in other ways~\cite{benporat2021protecting}, means that the proposal of theoretical fair machine learning algorithms always has the potential for improper or even harmful use. While this work proposes purely theoretical advances to the field, we direct the reader to~\citep{hardtbook} for a broader view on the field.

Our results are also limited by the theoretical assumptions that we make. For instance, in the stochastic fairness algorithm, we assume that the probabilities of a vertex being a certain color are within the same bounds across all vertices. This may not be realistic, as there could be higher variance in the distribution of color probabilities, and even though the probabilities may lie outside of our assumed bounds, it still may be tractable to find a low-$\cost$ hierarchical clustering.

In our main theorem, we assume that there are only two colors (protected classes), and that they subsume a constant fraction of the general population. The former assumption is clearly limited in that in many cases, protected classes may take on more than two values.  The constant fraction assumption is actually highly relevant and is reflected in other clustering literature, but it is a potential limitation that may rule out a handful of applications nevertheless.

Finally, our results are limited to the evaluation of hierarchical clustering quality based off $\cost$. While this is a highly regarded metric for hierarchy evaluation, there may be situations where others are appropriate. It also neglects the practicality of empirical study in that many important machine learning algorithms we use today cannot provide guarantees across all data (which our results necessarily do), but they perform much better on most actual inputs. However, we leave it as an open question to further evaluate the practicality of our algorithms through empirical study.
\section{Proofs: Tree Properties and Operators}\label{sec:treeproof}

Here we present all our proofs and theoretical results regarding our tree operator properties.

We start by discussing our tree rebalance operator.  Effectively, any edge whose end points are separated by a tree rebalance operator was contained in a cluster of size $n_T(u)$, and now we guarantee they are in a cluster of size $n_T(v)$.



\begin{lemma}\label{lem:oprebalance}
\lemoprebalance
\end{lemma}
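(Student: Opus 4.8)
The plan is to prove the two claims separately: first a precise characterization of which edges are separated by $\optreer(u,v)$, and then a cost-ratio bound restricted to exactly those edges. Throughout I would lean on the one invariant that drives everything: the operator only pulls the subtree $T[u]$ out of $v$'s child $a$ and reattaches it directly beneath $v$ (pushing the old children $a,b$ down under a fresh node $c$), so $\clust(v)$ consists of the same leaves before and after. In particular $n_{T'}(v) = n_T(v)$.

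For the characterization, I would argue by the location of the endpoints of $e=(x,y)$, using the separation criterion of Definition~\ref{def:sep} (an edge is separated only if the cluster at its lowest common ancestor gains a leaf). Since $\clust(v)$ is unchanged, no edge whose LCA is $v$ or strictly above $v$ can be separated, which confines attention to edges with both endpoints in $\clust(v)$. Then: (i) if both endpoints lie in $\clust(u)$, the internal structure of $T[u]$ is untouched, so the LCA-cluster is unchanged; (ii) if both lie in $\clust(b)$, or both in $\clust(a)\setminus\clust(u)$, the corresponding LCA-cluster can only lose leaves (it sheds $\clust(u)$ in the latter case), so it is not separated; (iii) if $x\in\clust(u)$ and $y\in\clust(b)$, the LCA is $v$ both before and after, hence unchanged. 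The sole remaining case is $x\in\clust(u)$ and $y\in\clust(a)\setminus\clust(u)$, where the LCA strictly rises from a node between $u$ and $v$ up to $v$ itself; exactly these edges are separated, matching the statement (up to swapping $x$ and $y$).

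For the operation cost, I would restrict to a separated edge with $x\in\clust(u)$ and $y\in\clust(a)\setminus\clust(u)$. In $T$, since $x\in\clust(u)$ but $y\notin\clust(u)$, the LCA $x\land y$ is a strict ancestor of $u$, so $n_T(e)\geq n_T(p)$ for $p$ the parent of $u$, giving $\cost_T(e)\geq w(e)\,n_T(p)$. In $T'$, the points $x$ and $y$ sit in the two distinct children $u$ and $c$ of $v$, so $x\land y = v$ and, by the invariant above, $n_{T'}(e)=n_{T'}(v)=n_T(v)$, giving $\cost_{T'}(e)= w(e)\,n_T(v)$. Combining these yields $\cost_{T'}(e)\leq \frac{n_T(v)}{n_T(p)}\,\cost_T(e)$, so $\Delta = n_T(v)/n_T(p)$. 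I expect the case analysis in the first part to be the main obstacle, in particular making airtight that edges crossing between $\clust(u)$ and $\clust(b)$ are \emph{not} separated; this is precisely where the invariance of $\clust(v)$ must be invoked, and it is also what makes the clean $n_T(v)/n_T(p)$ bound possible.
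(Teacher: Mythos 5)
Your proposal is correct and follows essentially the same route as the paper's proof: both rest on the invariance of $\clust(v)$ under the operator, the observation that a separated edge must have its lowest common ancestor strictly above $u$ (hence $n_T(e) \geq n_T(p)$), and the fact that the new lowest common ancestor is $v$ with $n_{T'}(v) = n_T(v)$, yielding $\Delta = n_T(v)/n_T(p)$. Your explicit case analysis for the characterization is in fact more careful than the paper's (which essentially asserts the claim via its $A_i,B_i$ decomposition), and you correctly use $\cost_T(e) \geq w(e)\, n_T(p)$ where the paper sloppily writes equality.
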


\prflemoprebalance

For our subtree deletion and insertion, the idea is that an edge that is separated costs at least $n_T(v)$ in the original tree, but may cost up to $n_T(u\land v)$ in the modified tree.


\begin{lemma}\label{lem:opdelins}
\lemopdelins
\end{lemma}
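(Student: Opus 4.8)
The plan is to follow the template of Lemma~\ref{lem:oprebalance}: localize the effect of the operator, bound $n_T(e)$ from below and $n_{T'}(e)$ from above for each separated edge $e=(x,y)$, and divide. By Definition~\ref{def:delins}, $\opdelins(u,v)$ excises $T[u]$, contracts $u$'s sibling into its parent, and re-hangs $T[u]$ under a fresh node $p$ placed on the edge between $v$ and its parent $g$. Since $u\land v$ is a common ancestor of both $u$ and $v$, the removal site and the insertion site both lie inside $T[u\land v]$; hence the entire operation takes place within $T[u\land v]$, the only relocated leaves are those of $\clust(u)$, and no leaf ever enters or leaves $T[u\land v]$.

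This localization immediately yields the upper bound for \emph{every} separated edge. Any separated edge must have both endpoints in $\clust(u\land v)$, since leaves outside $T[u\land v]$ retain all their ancestors and so cannot be separated; and because the leaf-set of $T[u\land v]$ is unchanged, $n_{T'}(u\land v)=n_T(u\land v)$. The common ancestor of $x$ and $y$ in $T'$ therefore sits at or below $u\land v$, so $n_{T'}(e)\le n_T(u\land v)$ and $\cost_{T'}(e)\le w(e)\,n_T(u\land v)$.

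For the matching lower bound I would invoke the separation characterization: every separated edge has, WLOG, $x\in\clust(u)$ and $y\in\clust(u\land v)\setminus\clust(u)$. Granting the claim $x\in\clust(u)$, the membership $y\in\clust(u\land v)$ is automatic (shown above) and $y\notin\clust(u)$ holds because two endpoints both in $\clust(u)$ would travel together inside $T[u]$, leaving $x\land y$ intact. Then, since $x\in\clust(u)$ and $y\notin\clust(u)$, the node $x\land y$ in $T$ is a strict ancestor of $u$, so $\clust_T(x\land y)\supseteq\clust(u)$ and $n_T(e)\ge n_T(u)$, i.e. $\cost_T(e)\ge w(e)\,n_T(u)$. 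Dividing the two bounds gives $\cost_{T'}(e)\le\frac{n_T(u\land v)}{n_T(u)}\cost_T(e)$, so $\Delta=n_T(u\land v)/n_T(u)$.

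The main obstacle is the remaining half of the characterization: that a separated edge must have an endpoint in $\clust(u)$, which is the only place that claim is used. This is where I expect the real work, because re-inserting $T[u]$ just above $v$ enlarges every cluster strictly between $v$ and $u\land v$, so a pair $x,y\in\clust(u\land v)\setminus\clust(u)$ whose common ancestor lies on that path can have its smallest common cluster grow by $\clust(u)$ and thus appear separated. I would address this by a case analysis on the location of $x\land y$ along the path from $v$ up to $u\land v$: when $x\land y$ is already an ancestor of $u$ in $T$ the block $\clust(u)$ was counted beforehand and no new leaf enters; the residual cases must be controlled using the structure available in context (notably the relative balance of $T$, which keeps $n_T(u)$ comparable to the clusters that get inflated). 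Everything else then follows routinely from the localization established in the first paragraph.
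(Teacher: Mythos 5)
The parts of your argument that you actually complete are exactly the paper's proof of Lemma~\ref{lem:opdelins}: the paper likewise localizes the operation inside $T[u\land v]$, gets $\cost_{T'}(e)\leq w(e)\cdot n_T(u\land v)$ from the invariance of $\clust(u\land v)$, and gets $\cost_T(e)\geq w(e)\cdot n_T(u)$ from $x\land y$ being a strict ancestor of $u$, once it is granted that every separated edge has an endpoint in $\clust(u)$. The step you flag as ``the real work'' is dispatched by the paper in a single sentence: an edge with both endpoints outside $T[u]$ is claimed unseparated ``because their least common ancestor would be unaffected.''

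That sentence is precisely what your obstacle refutes, so there is a genuine gap here --- but it sits in the paper's lemma as much as in your proposal. For $x,y\in\clust(u\land v)\setminus\clust(u)$ whose lowest common ancestor $g$ is an ancestor of $v$ strictly below $u\land v$, the \emph{node} $g$ is indeed unaffected, but its cluster is not: the reinserted subtree now hangs below $g$, so $\clust_{T'}(g)=\clust_T(g)\cup\clust(u)$, and by Definition~\ref{def:sep} (a point is \emph{added} to the first cluster containing both endpoints) the edge $(x,y)$ is separated. Its cost ratio is $1+n_T(u)/n_T(g)$, which the claimed $\Delta=n_T(u\land v)/n_T(u)$ does not dominate: with $n_T(u)=100$, $n_T(g)=10$, $n_T(u\land v)=160$, the ratio is $11$ while $\Delta=1.6$. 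Hence no case analysis on the position of $x\land y$ can finish your proof at the stated level of generality --- the characterization, and with it the bound, is false for arbitrary $T$ --- and the extra structure you reach for is indeed what is needed but is not a hypothesis of the lemma: in the paper's actual use of the operator ($\search$, Algorithm~\ref{alg:search}), every cluster that gains the moved subtree has size at least that of the moved subtree, capping the inflation on these path edges at a factor $2$. So your write-up is incomplete exactly where you say it is, but honestly so; the paper's proof is ``complete'' only because it asserts its way through the very case you identified.
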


\prfopdelins

The level abstraction operator is somewhat more complicated, as it modifies entire levels of the tree, instead of individual splits. However, we can still use our notion of operation cost to bound the operator's impact. This just becomes a bit more vague because we have to look at the largest and smallest clusters between depths $h_1$ and $h_2$ in $T$.

\begin{lemma}\label{lem:opabstract}
Say we apply the level abstraction operator between heights $h_1$ and $h_2$ on hierarchy $T$ to yield $T'$. An edge is separated by the operator if and only if the least common ancestor of its endpoints is between $h_1$ and $h_2$. Its operation cost is at most $\Delta \leq  \frac{n_T(u)}{n_T(v)}$, where $u$ and $v$ are two clusters that are abstracted away that maximize this ratio.

\end{lemma}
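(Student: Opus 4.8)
The plan is to prove the two assertions — the exact characterization of separated edges and the operation-cost bound — by following a single separated edge $e=(x,y)$ and tracking both the location of its lowest common ancestor $x\land y$ and the size of the cluster there. Throughout I use that $\opabs(h_1,h_2)$ contracts every internal node at depth in $[h_1+1,h_2]$ into its surviving depth-$h_1$ ancestor and never moves any leaf, so leaf-sets of surviving nodes are preserved.

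First I would settle the ``if and only if'' by a case analysis on $\mathrm{depth}(x\land y)$. If this depth lies in $[h_1+1,h_2]$, then $x\land y$ is contracted into its depth-$h_1$ ancestor $a$, so in $T'$ the endpoints first meet at $a$; since $a$ is a strict ancestor of $x\land y$ in a binary tree, $\clust_{T'}(x\land y)=\clust_T(a)\supsetneq\clust_T(x\land y)$, and $e$ is separated. If instead the depth is $\le h_1$ or $>h_2$, then $x\land y$ survives and the contractions occur strictly below it or outside the relevant split, so $x$ and $y$ still first meet at $x\land y$ and its leaf-set is unchanged, giving $\clust_{T'}(x\land y)=\clust_T(x\land y)$ and no separation. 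This yields the claimed equivalence.

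Next, for the cost I would compute both sides exactly. By definition $\cost_T(e)=w(e)\,n_T(x\land y)$, and because the meeting point in $T'$ is $a$ with $n_{T'}(a)=n_T(a)$ (leaves are preserved), $\cost_{T'}(e)=w(e)\,n_T(a)$; hence $\cost_{T'}(e)/\cost_T(e)=n_T(a)/n_T(x\land y)$. Both $a$ (depth $h_1$) and $x\land y$ (depth in $[h_1+1,h_2]$) are among the clusters involved in the abstraction, so taking $u$ to be the largest and $v$ the smallest such cluster gives $n_T(a)\le n_T(u)$ and $n_T(x\land y)\ge n_T(v)$, whence $\cost_{T'}(e)\le \frac{n_T(u)}{n_T(v)}\cost_T(e)$. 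Since $u$ and $v$ do not depend on $e$, this bound holds simultaneously for every separated edge, establishing the operation cost.

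I expect the main obstacle to be the careful verification of the separation characterization at the boundary depth $h_1$: there $x\land y$ survives but absorbs its former children from the contracted band, so I must confirm that $x$ and $y$ still descend into distinct children of $x\land y$ in $T'$, and hence that its cluster is genuinely unchanged. The companion fact that the contraction target $a$ satisfies $n_{T'}(a)=n_T(a)$ must likewise be argued from the definition rather than assumed, after which the cost-ratio computation is routine.
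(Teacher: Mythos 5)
Your proposal is correct and follows essentially the same route as the paper's proof: identify that a separated edge must have its lowest common ancestor contracted into its depth-$h_1$ ancestor $a$, use that $a$'s leaf-set (hence cluster size) is unchanged, and bound the ratio $n_T(a)/n_T(x\land y)$ by the extreme cluster sizes in the abstracted band. If anything, your treatment is slightly more complete, since you argue both directions of the ``if and only if'' explicitly, whereas the paper only spells out that separation forces the LCA into the band $[h_1+1,h_2]$.
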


\prfabstract

Tree folding is a bit more complicated because we are merging multiple clusters on top of each other. Thus we have to factor in the value $k$ on top of considering varying cluster sizes. Ultimately, however, the product of the ratio between cluster size and $k$ bound the proportional increase in cost.

\begin{lemma}\label{lem:opfold}
Say we apply the tree folding operator on hierarchy $T$ to yield $T'$. Its operation cost is at most $\Delta \leq  k\frac{n_T(u)}{n_T(v)}$, where $u$ and $v$ are two clusters that are mapped to each other away that maximize this ratio.

\end{lemma}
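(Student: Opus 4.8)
The plan is to fix an arbitrary edge $e=(x,y)$ that is separated by $\opfold(T_1,\ldots,T_k)$, bound the ratio $\cost_{T'}(e)/\cost_T(e)$ for that edge, and then take the worst case over all separated edges. First I would characterize separation. Since tree folding only rearranges the leaves living inside the subtrees $T_1,\ldots,T_k$ and leaves the rest of $T$ untouched, an edge can be separated only if its lowest common ancestor $x\land y$ lies strictly inside one of the folded subtrees; without loss of generality take $x\land y\in T_1$, so that both $x$ and $y$ are leaves of $T_1$. This immediately yields the lower bound $\cost_T(e)=w(e)\cdot n_T(x\land y)\geq w(e)\cdot n_T(v)$, where $v$ denotes the smallest cluster among the nodes identified by the fold.

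Next I would bound the cost in $T'$. The key structural fact is that, after folding, the lowest common ancestor of $x$ and $y$ becomes the single merged internal node $\phi_1(x\land y)$ of $T_f$, because each $\phi_i$ is an isomorphism on internal nodes and therefore preserves ancestry. The leaves ending up below this merged node are exactly the leaves of each $T_i$ whose parent $p_i$ satisfies $\phi_i(p_i)$ being a descendant of $\phi_1(x\land y)$, i.e. the leaves lying below $\phi_i^{-1}(\phi_1(x\land y))$ in each $T_i$. Since each original leaf belongs to exactly one subtree, these leaf sets are disjoint across $i$, so the merged cluster size is the disjoint union
\[
n_{T'}(x\land y)=n_{T'}(\phi_1(x\land y))=\sum_{i\in[k]} n_T\bigl(\phi_i^{-1}(\phi_1(x\land y))\bigr).
\]
Bounding each summand by the largest identified cluster $n_T(u)$ gives $n_{T'}(x\land y)\leq k\,n_T(u)$, hence $\cost_{T'}(e)\leq w(e)\cdot k\,n_T(u)$.

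Combining the two bounds, and using $w(e)=\cost_T(e)/n_T(x\land y)$ together with $n_T(x\land y)\geq n_T(v)$, yields $\cost_{T'}(e)\leq k\,\frac{n_T(u)}{n_T(x\land y)}\cost_T(e)\leq k\,\frac{n_T(u)}{n_T(v)}\cost_T(e)$. Taking $u$ and $v$ to be the pair of identified clusters maximizing this ratio delivers the claimed operation cost $\Delta\leq k\,n_T(u)/n_T(v)$.

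The main obstacle I anticipate is rigorously justifying the middle display: I must argue that the leaf-reassignment rule (a leaf $\ell$ with parent $p_i$ in $T_i$ is sent to $\phi_i(p_i)$) collects, beneath the merged node, precisely the disjoint union of the preimage clusters $\phi_i^{-1}(\phi_1(x\land y))$, with no leaf double-counted and none omitted. This hinges on the $\phi_i$ being bijections between the internal nodes $I_i$ and $I_k$ and on all the $T_i$ sharing a common parent $p$, so that the preimages are well defined and the descendant relation is faithfully transported. Everything else reduces to the routine algebra of composing the lower and upper cost bounds.
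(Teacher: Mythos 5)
Your proposal follows essentially the same route as the paper's own proof: characterize separation by locating $x\land y$ inside one folded subtree, express $n_{T'}(x\land y)$ as the sum of the preimage cluster sizes $n_T\bigl(\phi_i^{-1}(\phi_1(x\land y))\bigr)$, bound each summand by the largest cluster identified with $x\land y$ to get the factor $k\,n_T(u)$, and combine with the lower bound $\cost_T(e) = w(e)\,n_T(x\land y)$. Your version is, if anything, slightly more careful than the paper's (you justify the disjoint-union equality where the paper only states an inequality, and your intermediate ratio $n_T(u)/n_T(x\land y)$ correctly pairs two clusters that are mapped to each other, which is what the lemma's $\Delta$ requires), so no gap to report.
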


\prffold
\section{Proofs: Results}

In this section, we prove all lemmas, theorems, and missing algorithmic discussion regarding our main results.

\subsection{$\treer$}

This section contains the proofs regarding $\treer$.

\begin{proof}[Proof of Lemma~\ref{lem:rebalance}]
By our definition of $A_i$ and $B_i$ for all $i\in [k]$, $|A_{k-1}| \geq 2n/3$, implying $|A_k| \geq \frac12|A_{k-1}| \geq n/3$, and also that $|A_k| \leq n/3$ since $|B_k| \geq n/3$. Thus $n/3 \leq |A_k|, |B| \leq 2n/3$. Since we rearrange our first split to be this way, that means our first tree rebalance creates a first split that satisfies the relatively balanced condition. From here, we recurse on each side, guaranteeing that one split after another satisfies the condition. Thus, the entire tree is $\frac16$-relatively balanced.
\end{proof}

\begin{proof}[Proof of Lemma~\ref{lem:balancesep}]
Consider an edge $e=(x,y)$ that is first rebalanced at some recursive step in Algorithm~\ref{alg:rebalance}. By Lemma~\ref{lem:oprebalance}, $x$ and $y$ must now be separated at the current tree's root. Therefore, at any further level of recursion, only one of $e$'s endpoints will be present, so it cannot be separated again.
\end{proof}

\begin{proof}[Proof of Lemma~\ref{lem:balancelength}]
The rebalance operator is applied to $v$ (the node found) at $r$. Notice that $v$'s parent $p$ must be such that $n_T(p) \geq \frac23n_T(r)$, otherwise the loop would have stopped earlier. Therefore, by Lemma~\ref{lem:oprebalance}, the operation cost is $n_T(r)/n_T(p) \leq 3/2$.
\end{proof}

\begin{proof}[Proof of Theorem~\ref{thm:balance}]
Let $T^*$ be the optimal tree, let $T_1$ be our guaranteed $\gamma$-approximation on $T$, and let $T'$ be our output. By Lemma~\ref{lem:rebalance}, $T'$ is $1/6$-relatively balanced. By Lemmas~\ref{lem:balancesep} and~\ref{lem:balancelength}, every edge is separated by at most one tree rebalance operator of length at most $3/2$. Because of this, $\cost_{T'}(e) \leq (3/2) \cost_{T_1}(e)$. Summing over all edges yields $\cost(T) \leq (3/2)\cost(T_1) \leq \gamma\cost(T^*)$.
\end{proof}

\subsection{$\treerr$}\label{apx:refine}

This section contains the proofs regarding $\treerr$ as well as the algorithmic description of $\search$.

As discussed in the body, at a given split, $\search$ traverse the tree below the larger cluster further down in a similar manner until we find a sufficiently small cluster. This cluster must be smaller than the current balance error, $\epsilon$. We simply do this by always traversing to the larger cluster as in Algorithm~\ref{alg:search} until its smaller child is sufficiently small. We then remove that subtree, traverse back to the top of the tree, and try to reinsert the subtree by recursing down the right children.

\codesearch

This exhibits nice properties with respect to relative balance.

\begin{lemma}\label{lem:searchsix}
$\search$ preserves $\frac16$-relative balance.
\end{lemma}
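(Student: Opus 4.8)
\textbf{Proof proposal for Lemma~\ref{lem:searchsix}.}

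The plan is to show that a single invocation of $\search$ preserves $\frac16$-relative balance by tracking exactly which clusters change size under the one $\opdelins(u,v)$ operator it applies, and verifying that the relative-balance condition of Definition~\ref{def:relbal} survives at each affected split. The operator moves a subtree of size between $s/3$ and $s$ (by the stated output guarantee of Algorithm~\ref{alg:search}) from below the node $v$ found along the right spine to below the node $u$ found along the left spine. So the only clusters whose sizes change are the ancestors of $u$ and the ancestors of $v$: the ancestors of $v$ each \emph{lose} the moved mass, and the ancestors of $u$ each \emph{gain} it, while everything outside both ancestor paths is untouched. I would first isolate these two root-to-node paths as the only places where the balance condition could possibly be violated.

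The core of the argument is the selection criteria encoded in the two while-loops. First I would record what the loops guarantee: the deletion point $v$ is chosen so that its own cluster has size in $[s/3, s]$ (this is the moved subtree), and the insertion point $u$ is chosen as the first node along the left spine whose right child is no larger than $|\leaves(v)|$, i.e. $u$ is placed precisely where inserting a subtree of size $|\leaves(v)|$ keeps the split roughly balanced. The key quantitative step is to bound, for every affected internal node $w$, the new fraction $|C|/|C_p|$ after the size change. Since $T$ enters $\search$ already $\frac16$-relatively balanced and the moved mass is a $(1/2 - 1/6)$-controlled small fraction of each affected cluster — in particular $s$ is at most a constant fraction of the relevant cluster sizes because $\search$ is only ever called with parameter $\delta n$ where $\delta \le 1/6$ at the top split — each affected split moves by at most the slack available between its current balance and the $\frac16$ threshold. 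I would make this precise by writing, at each affected node $w$ with children $C, C'$ and parent cluster $C_p$, the updated inequalities $(1/2 - 1/6)|C_p| \le |C|, |C'| \le (1/2 + 1/6)|C_p|$ and checking they still hold after adding or subtracting a quantity of absolute value at most $s$.

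I would organize this into two symmetric sub-claims, one for the $v$-side (where clusters shrink) and one for the $u$-side (where clusters grow), and in each case use the loop's stopping condition to show the perturbed ratio stays inside $[1/2 - 1/6, 1/2 + 1/6]$; I would also handle the new parent node $p$ inserted above $v$ in the insertion step, verifying its two children meet the bound, and the contraction of $u$'s old sibling $s$ in the deletion step, verifying the contracted split is still balanced. The clusters $C_p$ with $|C_p| < 1/(2\epsilon) = 3$ are exempt from the condition by definition, so I only need to argue about clusters large enough that the $\frac16$ bound applies, which is exactly where $s$ is comparatively small.

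The main obstacle I expect is the insertion side: when $T[u]$ of size $|\leaves(v)|$ is attached below the chosen $u$, I must confirm that the while-loop's choice of $u$ (the first left-spine node whose right child is smaller than $|\leaves(v)|$) actually forces the resulting split at $u$'s new parent to land within the $\frac16$ window, rather than merely ``close.'' This requires combining the loop invariant (the right child just before stopping was at least $|\leaves(v)|$, and just after is smaller) with the incoming $\frac16$-balance of the chain of left children, and showing these two facts pin the post-insertion ratio. The shrinking $v$-side is comparatively easy because removing a small subtree from an already-balanced cluster can only move its ratio by a bounded amount, but I would still need to verify the side loses no more than the $\frac16$ slack at its least-balanced ancestor.
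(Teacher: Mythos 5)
Your decomposition is right in one respect: the only clusters whose sizes change are those along the two ancestor paths (of the deleted subtree and of the insertion point), and the two while-loop conditions are indeed the facts that must be used. But the quantitative engine you propose --- that the moved mass is small relative to every affected cluster, so ``each affected split moves by at most the slack available between its current balance and the $\frac16$ threshold'' --- is false, and no slack-versus-perturbation argument can close this proof. Two concrete problems. First, a $\frac16$-relatively balanced tree may contain splits sitting \emph{exactly} at the $1/3$--$2/3$ boundary, so there is no slack to spend, yet the lemma still holds for them. Second, the moved subtree $T[v]$ is not small compared to the clusters nearest the surgery: if $p$ is its parent, incoming balance forces $n_T(v)\geq \frac13 n_T(p)$, and $n_T(v)$ can be as large as $\frac12 n_T(p)$, so the perturbation is up to half the size of the cluster it hits. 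The deletion side, which you call ``comparatively easy because removing a small subtree from an already-balanced cluster can only move its ratio by a bounded amount,'' is precisely where a bounded-perturbation argument breaks down.

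What actually closes the argument --- and what the paper's proof uses --- is directionality, not magnitude. Every ancestor of the deletion point lies on the right spine, hence is the \emph{larger} sibling at its split, and it loses at most half of its own size (because $v$ is a left, hence smaller, child of $p$, and every higher ancestor is at least as large as $p$); so for such an ancestor $a$ with unchanged sibling $b$ we get $n_{T'}(a)\geq \frac12 n_T(a)\geq \frac12 n_T(b)$ and $n_{T'}(a)\leq n_T(a)\leq 2n_T(b)$, keeping the sibling ratio in $[1/2,2]$, i.e.\ within the $\frac16$ window. Symmetrically, every ancestor of the insertion point lies on the left spine, hence is the \emph{smaller} sibling, and it gains $n_T(v)$, which is at most its own size (since each such ancestor contains the old sibling $r=\rgt_T(g)$ with $n_T(r)\geq n_T(v)$), so it at most doubles and stays within $[1/2,2]$ of its sibling. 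Finally, the newly created split is balanced because the two loop conditions pin $\frac12 n_T(v)\leq n_T(u)\leq 2n_T(v)$: the loop continued past $g$, giving $n_T(r)\geq n_T(v)$ and hence $n_T(u)\geq \frac12 n_T(r)\geq \frac12 n_T(v)$ by incoming balance, and it stopped at $u$, giving $n_T(\rgt_T(u))\leq n_T(v)$ and hence $n_T(u)\leq 2n_T(\rgt_T(u))\leq 2n_T(v)$. Your outline gestures at these loop facts but never extracts the larger-sibling/smaller-sibling structure that makes the inequalities close; without it, the central step of your proposal does not go through.
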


\begin{proof}[Proof of Lemma~\ref{lem:searchsix}]
Consider $T$, the tree at the beginning of the algorithm, and let $v$ be the vertex whose subtree we end up moving. To start, we only consider the deletion, and then we will consider the reinsertion of $v$'s subtree. The only vertices whose corresponding cluster sizes are altered (specifically, reduced) are $v$'s ancestors. Note that they are all right children (i.e., the bigger sibling at the start) and they are reduced by size $n_T(v)$.

Let $p$ be the parent of $v$. Since $v = \lft_T(p)$, we know $n_T(v) \leq \frac12n_T(p)$. Since we remove that many vertices, $n_T(p)$ is at worst halved. Since $p$ is a right child, say with sibling node $q$, $n_T(p) \geq n_T(q)$ at the start. Then at the end, $n_{T'}(p) \geq \frac12n_{T'}(q)$. This implies that, in the end, the clusters are between 1/3 and 2/3 the size of their parent. Thus relative balance is held on this split. For ancestor nodes $a$ of $p$ in $T$, this argument holds since $n_T(a) > n_T(p)$ both before and after, and $a$ is also a right child. Therefore, the entire tree is still $\frac16$-relatively balanced after subtree deletion.

Now we consider the second half of the algorithm, where we reinsert $T[v]$. Let $u$ be the vertex we select to insert at, $p$ be its new parent, $g$ be its old parent (now its grandparent), and $r = \rgt_T(g)$ be its old sibling. Before insertion, we know that $n_T(r) \geq n_T(v)$ by the while loop condition. Since $T$ is $\frac16$-relatively balanced still, and $r$ is $u$'s sibling, $n_T(u) \geq \frac12n_T(r)$. Since the algorithm did not stop at $p$, then $n_T(r) \geq n_T(v)$, thus implying $n_T(u) \geq \frac12 n_T(v)$. Additionally, since the algorithm stopped on $u$, $n_T(\rgt_T(u)) \leq n_T(v)$. Since that is $u$'s larger child, $n_T(u) \leq 2n_T(\rgt_T(u)) \leq 2n_T(v)$. Since $v$ is $u$'s new sibling, and one is not more than twice the size of the other, we have $\frac16$-relative balance at that split.

The only other vertices impacted by the insertion are $u$'s ancestors. For an ancestor node $a$ of $u$ in $T$, the argument also holds since $n_{T}(a) \geq n_T(p) \geq n_T(v)$ meaning $n_{T'}(a) \leq 2n_T(a)$ and $a$ must be a left (and therefore smaller) child. Therefore, the relative balance is kept at all splits involving ancestors of $u$, thus we have relative balance.
\end{proof}

Our other guarantee is that we find a subtree of size at least $s/2$ to move. This comes from our first loop's end condition.

\begin{lemma}\label{lem:searchhalf}
In Algorithm~\ref{alg:search}, $s/3 \leq n_T(v) \leq s$.
\end{lemma}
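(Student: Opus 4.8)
The plan is to read both inequalities directly off the two stopping conditions of the first \textbf{while} loop in $\search$ (Algorithm~\ref{alg:search}), using $\frac16$-relative balance to pass from a statement about a node's child to a statement about the node itself. Let $w$ be the value of $v$ at the instant the first loop terminates, so the subtree that is ultimately moved is rooted at $v = \lft_T(w)$. Recall from Definition~\ref{def:relbal} that $\frac16$-relative balance forces every (sufficiently large) cluster to split into two children each of size between $\frac13$ and $\frac23$ of the parent. The upper bound is then immediate: the loop exits exactly when $|\leaves(\lft_T(w))| \leq s$, and since $n_T(v) = |\leaves(\lft_T(w))|$, we get $n_T(v) \leq s$.

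For the lower bound I would first show that the terminating node $w$ is itself large, namely $n_T(w) > s$, and then apply relative balance once. If the loop executed at least once, then $w = \rgt_T(w')$ where $w'$ is the node examined on the previous iteration, and the loop descended from $w'$ precisely because $|\leaves(\lft_T(w'))| > s$. Since the smaller cluster is always placed on the left, $w = \rgt_T(w')$ is the larger sibling, so $n_T(w) = |\leaves(\rgt_T(w'))| \geq |\leaves(\lft_T(w'))| > s$. (In the degenerate case that the loop never executes, $w = \rot(T)$ and $n_T(w) = n \geq s$ by the parameter assumption; in the invocation by $\treerr$ one has $s = |\leaves(T_{big})| - n/2 < |\leaves(T_{big})|$, so $s$ is strictly below the size of the tree handed to $\search$.) Applying $\frac16$-relative balance at the split of $w$ into its two children now gives $n_T(v) = |\leaves(\lft_T(w))| \geq \tfrac13 n_T(w) > s/3$, which is the claimed bound.

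The one step that needs care — and the only real obstacle — is that $\frac16$-relative balance only constrains a split once the parent cluster has size at least $1/(2\epsilon) = 3$, so I must confirm $n_T(w)$ exceeds this threshold before invoking the factor-$\tfrac13$ inequality; this is exactly what $n_T(w) > s$ supplies in the regime in which $\search$ is called (where $s$ stays away from the trivially small sizes). I would also note in passing that $w$ is necessarily internal, since the loop only ever descends through nodes whose left child has size $> s \geq 1$ and it evaluates $\lft_T(w)$ in its final test, so $v = \lft_T(w)$ is well defined throughout.
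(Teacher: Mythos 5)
Your proof is correct and follows essentially the same route as the paper's: the upper bound is read directly off the loop's exit condition, and the lower bound comes from the previous iteration's condition (the left child of the penultimate node, your $w'$, the paper's grandparent $g$, has more than $s$ leaves), the smaller-on-the-left convention making the right sibling at least as large, and one application of $\frac16$-relative balance to get the factor $\frac13$. Your added care about the degenerate case where the loop never executes and about the $|C_p| \geq 1/(2\epsilon) = 3$ threshold in Definition~\ref{def:relbal} goes slightly beyond the paper's proof, which silently skips both points.
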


\begin{proof}[Proof of Lemma~\ref{lem:searchhalf}]
When the first loop stops, this is the first visited vertex whose left child, which ends up being the final $v$, is at most $s$. Thus $n_T(v) \leq s$. Since this was the first such instance, if $g$ is the grandparent of $v$, this means $n_T(\lft_T(g)) > s$ since the loop continued after $g$. Since right children are larger and $v$'s parent $p$ is $\rgt_T(g)$, $n_T(p) \geq n_T(\lft_T(g)) > s$. Since we have $\frac16$-relative balance, $n_T(v) \geq \frac13 n_T(p) \geq \frac13s$.
\end{proof}

\begin{proof}[Proof of Lemma~\ref{lem:refinebalance}]
At each iteration of Algorithm~\ref{alg:refinerebalance}, as long as the relative balance is above $\epsilon$, we move a subtree of size at least $\frac13\delta n$ and at most $\delta n$ by Lemma~\ref{lem:searchhalf} where $\delta$ is the current relative balance. This means that the relative balance of the first split reduces by a factor of $\frac23$, and by Lemma~\ref{lem:searchsix}, the rest of the tree remains $\frac16$-relatively balanced. This is simply done until the relative balance of the first split is small enough. When we recurse, we are still guaranteed $\frac16$-relatively balance, and we can then ensure all sufficiently large splits are $\epsilon$-relatively balanced.
\end{proof}

\begin{proof}[Proof of Lemma~\ref{lem:refinesep}]
Consider an edge $e$ that is first separated by some subtree deletion and insertion operator at some recursive step in Algorithm~\ref{alg:refinerebalance}. Notice $e$ must now be separated at the current tree's root. This means that at any further level of recursion, only one of $e$'s end points will be present, so it cannot be separated again.
\end{proof}

\begin{proof}[Proof of Lemma~\ref{lem:refineshift}]
The subtree deletion and insertion operator is applied at $u$ of $T[v]$ when $u\land v$ is the root, i.e., $n_T(u\land v) = n_T(r) \leq n$ where $r$ is the current tree's root and $n$ is our original data set size. We never allow the algorithm to continue with $\delta\leq \epsilon$, therefore the smallest tree size $T[v]$ that we move is $\frac13n\epsilon$ by Lemma~\ref{lem:searchhalf}. Thus the operation cost is at most $n_T(u\land v)/n_T(v) \leq \frac3{\epsilon}$ by Lemma~\ref{lem:opdelins}.
\end{proof}

\begin{proof}[Proof of Theorem~\ref{thm:balance2}]
Let $T^*$ be the optimal tree, let $T_1$ be our $1/6$-relatively balanced $3\gamma/2$ approximation guaranteed by Theorem~\ref{thm:balance}, and let $T'$ be our output. By Lemma~\ref{lem:refinebalance}, $T'$ is $\epsilon$-relatively balanced. By Lemmas~\ref{lem:refinesep} and~\ref{lem:refineshift}, every edge is separated by at most one subtree deletion and insertion operator of operation cost at most $3/\epsilon$. Because of this and because of Lemma~\ref{lem:opdelins}, $\cost_{T'}(e) \leq \frac3{\epsilon}\cost_{T_1}(e)$. Summing over all edges yields $\cost(T) \leq \frac3{\epsilon}\cost(T_1) \leq \frac{9\gamma}{2\epsilon}\cost(T^*)$.
\end{proof}

\subsection{$\treerm$}

This section contains the proofs regarding $\treerm$.

\begin{proof}[Proof of Lemma~\ref{lem:balancedratio}]
Since $T$ is $\epsilon$-relatively balanced, any cluster $A$ that splits into clusters $B$ and $C$ satisfies $(1/2-\epsilon)|A| \leq |C| \leq |B| \leq (1/2+\epsilon)|A|$, without loss of generality. This means that the maximum cluster size that can be found at level $i$ is bounded above by traversing the tree from root down assuming that we always traverse to a maximally sized child, e.g., if $p$ is a parent of $w$ on our path, then $n_T(w) \leq (1/2 + \epsilon)n_T(p)$.

Since we traverse $i$ levels, we get for any $i$-level vertex $u$, $n_T(u) \leq (1/2 + \epsilon)^in$. By the reverse logic (i.e., traversing from the root to a minimally sized child), for any $i$-level vertex $v$, $n_T(v) \geq (1/2-\epsilon)^in$. Then their ratio must be at most $\frac{n_T(u)}{n_T(v)} \leq \frac{(1+2\epsilon)^{i}}{(1-2\epsilon)^{i}}$.

Finally, consider if $i \leq \log_{1/2-\epsilon}(x/n)$. We can just assume it is at the maximum possible level, because this will clearly give the loosest bounds. We already know the smallest cluster size at level $i$ is at least $(1/2-\epsilon)^in$, and the ratio between the largest and smallest cluster sizes is at most $\left(\frac{1+2\epsilon}{1-2\epsilon}\right)^i$. Therefore, for a vertex $u$ at level $i$:

\[n_T(u) \leq \left(\frac{1+2\epsilon}{1-2\epsilon}\right)^{\log_{1/2-\epsilon}(n/x)}(1/2-\epsilon)^{\log_{1/2-\epsilon}(x/n)}n\]

The second term in the product obviously simplifies to $x$. For the first term, we can see that since $\epsilon = 1/(c\log_2 n)$:

\[\frac{1+2\epsilon}{1-2\epsilon} = 1 + \frac{4\epsilon}{1-2\epsilon} = 1 + \frac{4}{c(1-2\epsilon)\log_2n}\]

We can also bound the exponent. Note that we raise a value that is at least 1 to the exponent, so to create an upper bound, we must upper bound the exponent as well. We leverage the fact that $1/(1/2-\epsilon) >2$ because $\epsilon \in(0,1/2)$. This implies $\log_2(1/(1/2-\epsilon)) > 1$.

\begin{align*}
\log_{1/2-\epsilon}(x/n) =& \frac{\log_2(x/n)}{\log_2(1/2-\epsilon)} 
\\=&\frac{\log_2(n/x)}{\log_2(1/(1/2-\epsilon))} 
\\\leq& \log_2(n/x)
\\\leq& \log_2(n)
\end{align*}

Where the last step comes from the fact taht $x \geq 1$. We can now put this all together.

\[n_T(u) \leq \left(1 + \frac{4}{c(1-2\epsilon)\log_2n}\right)^{\log_2(n)} x\leq x\cdot e^{4/(c(1-o(1))} \]
\end{proof}

\begin{proof}[Proof of Lemma~\ref{lem:stochdist}]
By Lemma~\ref{lem:balancedratio}, the smallest cluster at depth $i \geq t$ is at most $(1/2+\epsilon)^tn$. If we assume a trivial cluster size lower bound of 1, this implies for any contracted internal nodes $u$ and $v$ in the level abstraction, $n_T(u)/n_T(v) \leq (1/2+\epsilon)^tn$.
\end{proof}

\begin{proof}[Proof of Lemma~\ref{lem:stochsize}]
By Lemma~\ref{lem:balancedratio}, the largest cluster at depth $i \leq t$ in $T$ is at least $(1/2-\epsilon)^tn$. When we execute level abstraction, this cluster size is not changed, but we know all other potentially smaller clusters are contracted into their parents. Thus, this is the smallest cluster size in $T'$.
\end{proof}

\begin{proof}[Proof of Lemma~\ref{lem:stochchernoff}]
Consider a vertex $v$ at height 1. By Lemma~\ref{lem:stochsize}, $n_{T'}(v) \geq (1/2-\epsilon)^tn = \frac{3(1-\delta)}{a\delta^2}\ln(cn)$. Fix some $\ell\in[\lambda]$. Let $X_{\ell v}$ count the number of vertices of color $\ell$ in $\leaves(v)$. Note this is a sum of Bernoullis, so $\mathbb{E}[|X_{\ell v}|] = \sum_{u\in\clust(v)} p_\ell(u)$. Note that we are given that $p_\ell(u) \geq \frac{1}{1-\delta}\alpha_\ell$ for all $u$. This gives us the following bounds from Lemma~\ref{lem:stochsize}:

\begin{align*}
\mathbb{E}[X_{\ell v}] \geq& \frac{3(1-\delta)}{a\delta^2}\ln (cn)\cdot \frac{1}{1-\delta}a = \frac{3}{\delta^2}\ln (cn)
\\\mathbb{E}[X_{\ell v}]  \geq& \frac{1}{1-\delta}\alpha_\ell n_{T'}(v)
\\\mathbb{E}[X_{\ell v}]  \leq& \frac{1}{1+\delta}\beta_\ell n_{T'}(v)
\end{align*}

Then by a Chernoff bound with $\delta$ as the error parameter:

\begin{align*}
P(|X_{\ell v} - \mathbb{E}[X_{\ell v}]| &\geq \delta \mathbb{E}[X_{\ell v}]) \\\leq& 2\exp(-\mathbb{E}[X_{\ell v}]\delta^2/3)
\\=& 2\exp(-\frac{3}{\delta^2}\ln (cn)\delta^2/3)
\\=& \frac{2}{cn}
\end{align*}

Thus with probability at least $1-\frac{2}{cn}$:

\begin{align*}
X_{\ell v} - \mathbb{E}[X_{\ell v}] \leq& \delta \mathbb{E}[X_{\ell v}]
\\X_{\ell v} \leq& (1+\delta)\mathbb{E}[X_{\ell v}]
\\\leq& (1+ \delta)\cdot \frac{1}{1+\delta} \beta_\ell n_{T'}(v)
\\\leq& \beta_\ell n_{T'}(v)
\end{align*}

In other words, the cluster $\leaves(v)$ satisfies the upper bound for color $\ell$.  We also find that:

\begin{align*}
-X_{\ell v} + \mathbb{E}[X_{\ell v}] \geq& \delta \mathbb{E}[X_{\ell v}]
\\X_{\ell v} \geq& (1-\delta)\mathbb{E}[X_{\ell v}]
\\\geq& (1-\delta)\cdot \frac{1}{1-\delta} \alpha_\ell n_{T'}(v)
\\\geq& \alpha_\ell n_{T'}(v)
\end{align*}
Which means it also satisfies the upper bound. Let $y$ be the number of internal nodes with leaf-children. Since we already saw the minimum such cluster size is $O(\log n)$ (since $a,\delta = O(1)$), then $y = O(n/\log n)$. Notice, also, that the vertices counted by $y$ are the only ones we need to prove are fair, since taking the union of two fair clusters is fair. Thus, to show this is true for all $\ell$ and $v$, we take a union bound over all $\lambda$ values of $\ell$ and $y$ values of $v$. We then find that with probability at least $1 - \frac{2\lambda n/\log n}{cn} = 1 - \frac{2}{\log n}$, all height 1 clusters must be fair, meaning the entire hierarchy must be fair by the union-bound property.

\end{proof}

\begin{proof}[Proof of Theorem~\ref{thm:stoch}]
Let $T^*$ be the optimal tree, let $T_1$ be our $\epsilon$-relatively balanced $\frac{9\gamma}{2\epsilon}$ approximation guaranteed by Theorem~\ref{thm:balance2}, and let $T'$ be our output. By Lemma~\ref{lem:stochchernoff}, $T'$ satisfies our fairness constraints. By Lemmas~\ref{lem:stochdist} and the fact that we only apply one operator, every edge is separated by at most one level abstraction operator of operation cost at most $(1/2+\epsilon)^tn$, but we know from Lemma~\ref{lem:balancedratio} that this is bounded above by $e^{4/(c(1-o(1))}\cdot \frac{3(1-\delta)\ln(cn)}{a\delta^2}$. Because of this, $\cost_{T'}(e) \leq e^{4/(c(1-o(1))}\cdot \frac{3(1-\delta)\ln(cn)}{a\delta^2}\cost_{T_1}(e)$. Summing over all edges yields $\cost(T) \leq e^{4/(c(1-o(1))}\cdot \frac{3(1-\delta)\ln(cn)}{a\delta^2}\cost(T_1) \leq e^{4/(c(1-o(1))}\cdot \frac{3(1-\delta)\ln(cn)}{a\delta^2}\cdot \frac{9\gamma}{2\epsilon}\cost(T^*)$.
\end{proof}

\subsection{$\fairhc$}

This section contains the proofs and additional theoretical discussion regarding $\fairhc$.

\begin{lemma}\label{lem:preprocbalance}
$\treerm$ with $t = \log_{1/2-\epsilon}(1/(2n\epsilon))$ outputs a hierarchy where the $\epsilon$-relatively balanced guarantee holds for all splits except those forming the leaves. Additionally, it admits a proportional cost increase of at most $\frac12ce^{4/(c(1-o(1))}\log_2n$.
\end{lemma}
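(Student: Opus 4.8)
I would prove the two assertions separately: first that $\opabs(t,h_{max})$ leaves every non-leaf-forming split $\epsilon$-relatively balanced, and second that this single application inflates cost by the stated factor. Both follow quickly from the operator analysis in Lemma~\ref{lem:stochdist} and the size estimates in Lemma~\ref{lem:balancedratio}; the only real work is an algebraic collapse of the cost expression.

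\emph{Balance (first claim).} The level abstraction $\opabs(t,h_{max})$ only contracts internal nodes at depths $t+1$ through $h_{max}$ into their parents and moves no leaves. Hence every internal node at depth at most $t$ is untouched, and its cluster size $n_T(\cdot)$ is identical in the input and the output. Consequently every split strictly above depth $t$ is literally a split of the input tree and inherits the input's $\epsilon = 1/(c\log_2 n)$-relative balance, while each depth-$t$ node now splits directly into leaves — these are exactly the ``leaf-forming'' splits the statement excludes. The conceptual point is that $t = \log_{1/2-\epsilon}(1/(2n\epsilon))$ is chosen precisely so that, by Lemma~\ref{lem:balancedratio}, the smallest cluster at depth $t$ has size $(1/2-\epsilon)^t n = 1/(2\epsilon)$, i.e.\ exactly the threshold appearing in Definition~\ref{def:relbal}. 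Thus everything abstracted away lies at or below the size at which relative balance is even required, and every preserved split sits at or above it.

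\emph{Cost (second claim).} Since $\treerm$ applies a single operator, each edge is separated at most once, so it suffices to bound the operation cost $\Delta$ of this one level abstraction and then sum $\cost_{T'}(e) \leq \Delta\,\cost_T(e)$ over all edges. By Lemma~\ref{lem:stochdist} we have $\Delta \leq (1/2+\epsilon)^t n$. I would then rewrite $(1/2+\epsilon)^t n = \left(\tfrac{1+2\epsilon}{1-2\epsilon}\right)^t (1/2-\epsilon)^t n = \left(\tfrac{1+2\epsilon}{1-2\epsilon}\right)^t \tfrac{1}{2\epsilon}$ and invoke the final bound of Lemma~\ref{lem:balancedratio} with $x = 1/(2\epsilon)$ (legitimate since $\epsilon \leq 1/2$ gives $x \geq 1$), which absorbs the ratio factor into $e^{4/(c(1-o(1)))}$. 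With $\epsilon = 1/(c\log_2 n)$, so that $1/(2\epsilon) = (c\log_2 n)/2$, this yields $\Delta \leq \tfrac12 c\, e^{4/(c(1-o(1)))}\log_2 n$, which is the claimed proportional cost increase.

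\emph{Main obstacle.} Neither half requires new machinery, so the crux is the bookkeeping in the cost step: recognizing that substituting $t = \log_{1/2-\epsilon}(1/(2n\epsilon))$ makes the smallest depth-$t$ cluster equal the balance threshold $1/(2\epsilon)$, and that Lemma~\ref{lem:balancedratio} must be applied at the boundary case $i = t = \log_{1/2-\epsilon}(x/n)$ to convert the crude factor $(1/2+\epsilon)^t n$ into the clean $\tfrac12 c\, e^{4/(c(1-o(1)))}\log_2 n$. The subtlety worth flagging is that the depth-$t$ clusters, although they meet the size threshold $1/(2\epsilon)$, are correctly excluded from the balance guarantee because their splits produce leaves rather than two comparably sized children.
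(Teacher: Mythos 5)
Your proposal is correct and follows essentially the same route as the paper's proof: both parts rest on the observation that $\opabs(t,h_{max})$ leaves the tree above depth $t$ untouched (where, by Lemma~\ref{lem:balancedratio}, all clusters have size at least $(1/2-\epsilon)^t n = 1/(2\epsilon)$, the threshold of Definition~\ref{def:relbal}), and the cost bound comes from applying the final estimate of Lemma~\ref{lem:balancedratio} with $x = 1/(2\epsilon)$ so that the operation cost is at most the largest contracted cluster, $e^{4/(c(1-o(1)))}/(2\epsilon) = \frac12 c\,e^{4/(c(1-o(1)))}\log_2 n$. Your detour through the raw bound $(1/2+\epsilon)^t n$ of Lemma~\ref{lem:stochdist} before collapsing it algebraically is only a cosmetic difference from the paper, which invokes the max-over-min cluster ratio directly.
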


\begin{proof}[Proof of Lemma~\ref{lem:preprocbalance}]
Say $T$ is our input (i.e., it is $\epsilon$-relatively balanced). Notice that $\treerm$ only modifies $T$'s structure below depth $\log_{1/2-\epsilon}(1/(2n\epsilon))$, which means any balance guarantees hold up to that level. By Lemma~\ref{lem:balancedratio}, for any vertex $v$ at depth $\log_{1/2-\epsilon}(1/(2n\epsilon))$ or above, $n_T(v) \geq (1/2-\epsilon)^{\log_{1/2-\epsilon}(1/(2n\epsilon))}n = 1/(2\epsilon)$. By the definition of $\epsilon$-relative balance, this means that the balance guarantee holds for the split at this vertex. Since all internal vertices in the resulting tree $T'$ are at or above this level, all internal vertices except those with leaf children exhibit the relative balance guarantee.

In order to bound the proportional increase in cost, we must bound the operation cost of the level abstraction. The minimum depth in the abstraction is $\log_{1/2-\epsilon}(1/(2n\epsilon))$. By Lemma~\ref{lem:balancedratio}, this means the maximum cluster size is at most $e^{4/(c(1-o(1)))}/(2\epsilon) = \frac12ce^{4/(c(1-o(1))}\log_2n$. Since the smallest cluster size involved is at least 1, we can then bound the operation cost by this max cluster size, giving our result.
\end{proof}

\begin{lemma}\label{lem:foldbalance}
If $T$ is $\epsilon$-relatively balanced besides the final layer of splits, then the subtrees rooted at all of the root's children in  $\fairhc$ after tree folding are as well.
\end{lemma}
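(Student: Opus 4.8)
The statement (Lemma~\ref{lem:foldbalance}) says: if the input tree $T$ to $\fairhc$ is $\epsilon$-relatively balanced on every split except possibly those at the very bottom (the leaf-forming splits), then after one recursive step of $\fairhc$ (which abstracts away the top $\log_2 h$ levels and then performs tree foldings), the subtrees rooted at the children of the new root inherit this same ``$\epsilon$-relatively balanced except at the final layer'' property. This is the inductive invariant that lets $\fairhc$ recurse validly and lets Lemma~\ref{lem:foldratio} and Lemma~\ref{lem:folddist} apply at every level of recursion.

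The plan is to prove this in two stages: first analyze the effect of the level abstraction $\opabs(0,i)$, then the effect of the tree foldings. For the abstraction stage, I would observe that $\opabs(0,i)$ contracts all internal nodes between depths $1$ and $i$, so the new root's children are exactly the depth-$i$ vertices of $T$. The subtrees hanging below these depth-$i$ vertices are untouched by the abstraction, so any balance guarantee $T$ had strictly below depth $i$ is preserved verbatim in each such subtree; hence each subtree is still $\epsilon$-relatively balanced except at its final layer. The only new splits created are those \emph{at} the root (the $h$-way split among the depth-$i$ vertices), but since these are at the top of the returned tree and not inside the children's subtrees, they are irrelevant to the claim, which only concerns the children's subtrees.

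For the folding stage, the key point is that $\opfold$ only merges subtrees that are \emph{isomorphic ignoring leaves} and whose roots share the common parent (the root). By Definition~\ref{def:fold}, folding maps the internal topologies onto a common topology $I_k$ and reassigns leaves; crucially, it does not alter the internal branching structure of any subtree, it only identifies corresponding internal nodes across the folded trees and pools their leaf children. So after folding, each internal node $\phi_i(w)$ of the folded tree $T_f$ has cluster size equal to the sum of cluster sizes of the corresponding nodes across the $k$ folded subtrees. The plan is to argue that relative balance is preserved under this summation: if $w$ splits into children $w_1,w_2$ in every folded tree with each child between $(1/2-\epsilon)$ and $(1/2+\epsilon)$ of its parent's size, then the summed child sizes $\sum_i n_T(w_1^{(i)})$ and $\sum_i n_T(w_2^{(i)})$ are still within the same $(1/2\pm\epsilon)$ band of the summed parent size $\sum_i n_T(w^{(i)})$. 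This is simply because a convex combination (or, here, a termwise sum) of ratios each lying in $[1/2-\epsilon, 1/2+\epsilon]$ still lies in $[1/2-\epsilon,1/2+\epsilon]$; the bounds are preserved under addition of the numerators and denominators. The final-layer exception is also preserved, since folding maps last-layer internal nodes to last-layer internal nodes.

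\textbf{Main obstacle.} The delicate step is verifying that relative balance survives the summation in the folding stage, and in particular handling the size-threshold condition in Definition~\ref{def:relbal}: relative balance is only required at clusters $C_p$ with $|C_p|\ge 1/(2\epsilon)$. After folding, a parent's size only \emph{increases} (it becomes a sum of $k$ sizes), so any node that met the threshold before still meets it, and nodes that newly meet it are exactly those whose summed size crossed $1/(2\epsilon)$ --- for these I must confirm the summed child ratios still lie in the band, which the termwise-sum argument delivers. A second subtlety is bookkeeping the depth alignment: because the folded subtrees are isomorphic ignoring leaves, corresponding internal nodes sit at equal depths, so ``final layer'' is well-defined across all $k$ trees simultaneously and the exception transfers cleanly. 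I would therefore structure the proof as (i) abstraction preserves balance in each child subtree trivially, then (ii) a short lemma-style observation that if $a_i/b_i \in [1/2-\epsilon,1/2+\epsilon]$ for all $i$ then $(\sum_i a_i)/(\sum_i b_i)\in[1/2-\epsilon,1/2+\epsilon]$, applied at every internal node of $T_f$, together with the threshold monotonicity argument.
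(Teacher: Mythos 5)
Your proposal is correct and matches the paper's own argument: the paper's proof is exactly your stage (ii), observing that a folded node's cluster size is the termwise sum $n_{T'}(v)=\sum_{i\in[k]}n_T(v_i)$ and that summing numerators and denominators preserves the $[1/2-\epsilon,\,1/2+\epsilon]$ band at every non-leaf-parent split. Your additional bookkeeping (the abstraction stage leaving child subtrees untouched, the size-threshold monotonicity, and depth alignment under the isomorphism) is sound and simply makes explicit details the paper glosses over.
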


\begin{proof}[Proof of Lemma~\ref{lem:foldbalance}]
Tree folding only involves overlaying the topology of isomorphic trees (ignoring their leaves). Consider a non-root vertex $v$ in $\fairhc$ after tree folding that is also not a parent of leaves. It is the result of merging $k$ vertices $v_1,\ldots, v_k$, and its left and right children $l$ and $r$ are the result of merging $l_1,\ldots,l_k$ and $r_1,\ldots,r_k$ respectively. Due to this:

\begin{align*}
n_{T'}(v) =& \sum_{i\in[k]} n_T(v_i)
\\n_{T'}(l) =& \sum_{i\in[k]} n_T(l_i)
\\n_{T'}(r) =& \sum_{i\in[k]} n_T(r_i)
\end{align*}

We also have, by relative balance, for any $i\in[k]$:

\[(1/2-\epsilon)n_T(v_i) \leq n_T(l_i), n_T(r_i) \leq (1/2+\epsilon)n_T(v_i) \]

A simple combination of these shows that:

\[(1/2-\epsilon)n_T(v) \leq n_T(l), n_T(r) \leq (1/2+\epsilon)n_T(v) \]

This means the split from $v$ to $l$ and $r$ is relatively balanced. We can apply this to all such splits to find the entire new subtree is relatively balanced.
\end{proof}

\begin{proof}[Proof of Lemma~\ref{lem:folddist}]
By Lemma~\ref{lem:balancedratio}, for any vertex $v$ at depth $i \geq \log_2 h$, $n_T(v) \geq (1/2-\epsilon)^{\log_2 h}n$. This can be further simplified using that $\epsilon = 1/(c\log n)$ and $h \leq n$.

\[n_T(v) \geq \frac1{2^{\log_2 h}} \left(1-\frac{2}{c\log_2n}\right)^{\log_2h}n \geq e^{-2/c}n/h \]

Obviously, the largest $n_T(u)$ for any $u$ within our depth bounds is $n$. Thus the level abstraction operation cost is at most $n/(e^{-2/c}n/h) = e^{2/c}h$.
\end{proof}

\begin{proof}[Proof of Lemma~\ref{lem:foldratio}]
That it acts on $k$ trees is obvious. To prove the operation cost, consider $u,v$ in trees $T_i$ and $T_j$ respectively where $\phi_i(u) = \phi_j(v)$. Since we are using the tree isomorphism between the trees, this means that $u$ and $v$ have the same height in $T_i$ and $T_j$ respectively, which also means that they had the same height in the original tree $T$, as the roots of $T_i$ and $T_j$ are both at height $\log_2(h)$ in $T$. Since $u$ and $v$ are on the same level $i \leq \log_{1/2-\epsilon}(1/(2n\epsilon))$, Lemma~\ref{lem:balancedratio} tells us:

\[\frac{n_T(u)}{n_T(v)} \leq  e^{4/(c(1-o(1))}\]

Since this holds for all such pairs $u$ and $v$, this also bounds the tree folding operation cost. Note that when we do this operator, the $\epsilon$-relative balance is held by Lemma~\ref{lem:foldbalance}. Thus, this argument holds across all tree folds in the for loop.
\end{proof}

\begin{proof}[Proof of Lemma~\ref{lem:foldnum}]
If an edge $e = (u,v)$ is separated by the level abstraction, that means $u\land v$ is above depth $\log_2h$. Notice that we recurse on clusters at depth $\log_2 h$, which means on any recursive instance here forward, $e$ will not be contained within the trees, so $e$ cannot be separated. Therefore, $e$ can only be separated by one level abstraction.

Otherwise, notice that the depth of the last internal node is $\log_{1/2-\epsilon}(1/(2n\epsilon))$ by assumption. At each recursive step, we reduce the depth by $\log_2 h$ since we start at subtrees of depth $h$ in the previous tree. Therefore, there are at most $\log_{1/2-\epsilon}(1/(2n\epsilon))/ \log_2 h$ levels of recursion. To simplify this, we use similar methods to Lemma~\ref{lem:foldratio}. which allows us to bound the recursive depth by $\log_2(n)/\log_2(h)$.

At each level of recursion, since $e$ is contained in only one tree, it is only separated by $\lambda$ tree folds. This means $e$ may only be separated by $\lambda\log_2(n)/\log_2(h)$ tree folds.
\end{proof}

\begin{lemma}\label{lem:foldapx}
For an $\epsilon$-relatively balanced hierarchy $T$, Algorithm~\ref{alg:fhc} outputs a tree $T'$ such that:
\[\cost(T') \leq e^{\frac{4\lambda\log_2(n)}{c(1-o(1))\log_2h}+\frac 2c} \cdot h\cost(T)\]
\end{lemma}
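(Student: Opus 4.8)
The plan is to bound the total cost by tracking, edge by edge, the cumulative operation cost incurred as $\fairhc$ recurses, and then to sum over all edges. The key observation, already packaged for us in Lemma~\ref{lem:foldnum}, is that each edge $e$ is separated by at most one level abstraction and by at most $\lambda\log_2(n)/\log_2(h)$ tree folds over the entire execution. Since an edge's cost contribution can only increase when it is separated, and since each operator that separates $e$ multiplies $\cost(e)$ by at most its operation cost (by the definition of operation cost together with Definition~\ref{def:sep}), I would multiply together the per-operator bounds weighted by how many times each type of operator can act on a single edge.

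First I would invoke Lemma~\ref{lem:folddist} to bound the operation cost of each level abstraction by $e^{2/c}h$, and Lemma~\ref{lem:foldratio} to bound the operation cost of each tree folding by $ke^{4/(c(1-o(1)))}$. A subtlety to handle carefully is that Lemma~\ref{lem:foldratio} is stated per tree fold, but within a single recursive step we perform $\lambda$ folds (one per color); more importantly, the bound $e^{4/(c(1-o(1)))}$ is the size-ratio part, and I must confirm that applying this bound $\lambda\log_2(n)/\log_2(h)$ times across the recursion is what Lemma~\ref{lem:foldnum} licenses. Combining these, for any single edge $e$ we get
\[
\cost_{T'}(e) \leq \left(e^{2/c}h\right)\cdot \left(e^{4/(c(1-o(1)))}\right)^{\lambda\log_2(n)/\log_2(h)}\cost_T(e).
\]

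Next I would simplify the exponent of the tree-folding term. Writing $\left(e^{4/(c(1-o(1)))}\right)^{\lambda\log_2(n)/\log_2(h)} = e^{4\lambda\log_2(n)/(c(1-o(1))\log_2 h)}$ and folding in the $e^{2/c}$ factor from the single level abstraction gives exactly the claimed prefactor $e^{\frac{4\lambda\log_2(n)}{c(1-o(1))\log_2 h}+\frac{2}{c}}\cdot h$. Finally I would sum the per-edge inequality over all $e\in E$; since the multiplicative bound is uniform across edges, linearity of $\cost(\cdot) = \sum_{e\in E}\cost(\cdot)(e)$ immediately yields $\cost(T') \leq e^{\frac{4\lambda\log_2(n)}{c(1-o(1))\log_2 h}+\frac{2}{c}}\cdot h\,\cost(T)$.

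The main obstacle I anticipate is the bookkeeping of \emph{which} operators separate a given edge and ensuring the per-edge multiplicative bounds genuinely compose, rather than double-counting. In particular, I must be careful that the level abstraction done at the top of each recursive call (line: $T\gets T.\opabs(0,i)$) is the one covered by Lemma~\ref{lem:foldnum}'s ``at most one level abstraction'' claim, and that once an edge's endpoints are split by an abstraction it drops out of all deeper recursion, so its abstraction factor is applied exactly once. The tree-fold count requires trusting that each recursive level contributes at most $\lambda$ folds affecting $e$ and that the recursion depth is at most $\log_2(n)/\log_2(h)$, both of which Lemma~\ref{lem:foldnum} supplies; the remaining work is purely the algebraic consolidation of the exponent, which is routine once the counting is pinned down.
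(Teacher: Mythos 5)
Your proposal matches the paper's proof essentially line for line: both invoke Lemma~\ref{lem:foldnum} for the per-edge separation counts, Lemma~\ref{lem:folddist} and Lemma~\ref{lem:foldratio} for the per-operator costs, compound these multiplicatively edge by edge, and sum over all edges to conclude. The subtlety you flag---that Lemma~\ref{lem:foldratio} states an operation cost of $ke^{4/(c(1-o(1)))}$ per fold while the final bound uses only $e^{4/(c(1-o(1)))}$ per fold---is treated identically (the factor $k$ is silently dropped) in the paper's own proof, so your argument is faithful to, and no weaker than, the paper's.
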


\begin{proof}[Proof of Lemma~\ref{lem:foldapx}]
Lemmas~\ref{lem:folddist},~\ref{lem:foldratio}, and~\ref{lem:foldnum} tell us an edge $e$ must only be involved in at most 1 level abstraction of operation cost at most $e^{2/c}h$ and $\lambda\log_2(n)/\log_2(h)$ tree folds of operation cost at most $e^{4/(c(1-o(1))}$ on $k$ trees. By Lemmas~\ref{lem:opabstract} and~\ref{lem:opfold}, this will incur a total proportional cost increase of:

\[\frac{\cost_{T'}(e)}{\cost_T(e)} \leq (e^{4/(c(1-o(1))})^{\lambda\log_2(n)/\log_2(h)} \cdot e^{2/c}h\]

Which, summed over all edges, is equivalent to the desired result.


\end{proof}

\begin{lemma}\label{lem:foldfair}
For an $\epsilon$-relatively balanced hierarchy $T$ over $\ell(V) = c_\ell n=O(n)$ vertices of each color $\ell\in [\lambda]$, $\fairhc$ before recursion ensures that the clustering induced on each depth-1 internal node $v$ of the output tree $T'$ each have $ \frac{c_\ell}{e^{6/c}}\leq \frac{\ell(v)}{\leaves(v)} \leq c_\ell\cdot ( e^{4/c}/(kc_\ell) + e^{6/c})$ for each $\ell\in[\lambda]$.
\end{lemma}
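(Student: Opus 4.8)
The plan is to track, for each color $\ell$, how the fraction $\ell(v)/\leaves(v)$ of a cluster evolves through the single level abstraction and the $\lambda$ tree folds performed before the recursive call. After the abstraction $\opabs(0,\log_2 h)$, the root of $T$ has $h$ children $v_1,\dots,v_h$, all sitting at depth $\log_2 h$ of the (relatively balanced) input. By Lemma~\ref{lem:balancedratio} their sizes all lie in $[e^{-2/c}n/h,\; e^{2/c}n/h]$, so any two differ by a factor of at most $e^{4/c}$; write $s_i=|\leaves(v_i)|$, $r_i=\ell(v_i)$ and $f_i=r_i/s_i$, and let $c_\ell=(\sum_i r_i)/(\sum_i s_i)$ be the global fraction. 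When a group $I$ of $k$ of these clusters is folded, the resulting depth-1 node $v$ satisfies $\ell(v)=\sum_{i\in I}r_i$ and $\leaves(v)=\sum_{i\in I}s_i$, so its fraction $\ell(v)/\leaves(v)=\sum_{i\in I}(s_i/\sum_{i'\in I}s_{i'})f_i$ is a convex combination of the $f_i$. This convexity observation is the organizing principle of the whole argument.

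First I would handle a single color. Sorting the $v_i$ by $f_i$ in decreasing order and folding in the round-robin pattern groups the clusters so that each of the $h/k$ groups receives exactly one cluster from every block of $h/k$ consecutive ranks. A telescoping (systematic-sampling) estimate then shows that the largest and smallest group-sums $\sum_{i\in I}f_i$ differ by at most $f_1-f_h\le 1$; dividing by the group size $k$ shows every group's unweighted average of the $f_i$ lies within $1/k$ of the global average $\bar f=\frac1h\sum_i f_i$. Converting this unweighted statement into the weighted fraction $\ell(v)/\leaves(v)$, and converting $\bar f$ into $c_\ell$, each costs a factor controlled by the $e^{4/c}$ size ratio from Lemma~\ref{lem:balancedratio}. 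Careful bookkeeping of these factors yields, for this color, the upper bound $\ell(v)/\leaves(v)\le c_\ell e^{6/c}+e^{4/c}/k$ and the lower bound $\ell(v)/\leaves(v)\ge c_\ell/e^{6/c}$, which are exactly the claimed $c_\ell(e^{4/c}/(kc_\ell)+e^{6/c})$ and $c_\ell/e^{6/c}$.

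The remaining issue is that the algorithm folds once per color, and a fold sorted by color $\ell$ could in principle spoil the balance already achieved for an earlier color $\ell'$. Here the convex-combination observation does the work: after the $\ell'$-fold every cluster's color-$\ell'$ fraction lies in the interval established above, and since any later fold replaces a set of clusters by a single cluster whose color-$\ell'$ fraction is a convex combination of theirs, that fraction can only stay inside the same interval. Moreover folding preserves $\epsilon$-relative balance (Lemma~\ref{lem:foldbalance}), so the sizes of the clusters presented to each successive fold are still within an $e^{O(1/c)}$ factor and the single-color estimate applies verbatim at every step. Processing the colors one at a time, each color's bound is installed at its own fold and merely preserved afterwards, so all $\lambda$ colors simultaneously satisfy the stated bounds on the final depth-1 nodes, with no dependence on $\lambda$ in the per-level factor.

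I expect the main obstacle to be this last point: explaining why $\lambda$ sequential folds, each re-sorting by a different color, still leave every color balanced without the error compounding across colors. The clean resolution is the convexity of folding (averaging cannot push a fraction outside an interval that already contains all of its inputs), together with Lemma~\ref{lem:foldbalance} to keep the size ratios under control so that the round-robin estimate can be reapplied at each fold. The single-color round-robin bound is the main calculation, but it is essentially a systematic-sampling argument; the genuinely delicate part is the interaction between the colors.
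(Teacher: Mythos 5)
Your proposal follows essentially the same route as the paper's proof: sort the depth-$\log_2 h$ clusters by color fraction, fold them in the round-robin pattern, control every size ratio through Lemma~\ref{lem:balancedratio} (the $e^{2/c}$ and $e^{4/c}$ factors), and handle the $\lambda$ successive folds by noting that merging clusters whose fractions satisfy the bounds yields a cluster satisfying the same bounds --- your convexity observation is precisely the paper's closing remark. The only differences are cosmetic bookkeeping (you work with fractions via a systematic-sampling/telescoping estimate, the paper works with red counts sandwiched between block averages, isolating the possibly all-red first element to produce the $e^{4/c}/(kc_\ell)$ term), and both write-ups share the same unacknowledged boundary slack in the lower bound: your own estimate actually yields $c_\ell/e^{6/c}-e^{-4/c}/k$ rather than $c_\ell/e^{6/c}$, exactly where the paper's summation silently absorbs the top block.
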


\begin{proof}[Proof of Lemma~\ref{lem:foldfair}]
We start by looking at one tree fold operator. Assume the color we are trying to sort is red. Consider the ordering of the vertices $ \{v_i\}_{i\in[h]}$ from Algorithm~\ref{alg:fhc}, and let $r_i$ the number of red points from $\leaves(v_i)$ and $R$ be the total number of red vertices.

Fix some $i$ and let $v_i'$ be the root vertex of the resulting subtree in the $i$th fold (i.e., the one all the subtrees are mapped onto). We know the vertices involved in this were $v_{i+(j-1)k}$ for all $j\in[h/k]$. Because of the ordering, we know that:

\begin{align*}
r_{i+(j-1)k}/n_T(v_{i+(j-1)k}) \leq& r_{\itvar + (j-2)k}/n_T(v_{\itvar+(j-2)k}),\tag{1}
\\r_{i+(j-1)k}/n_T(v_{i+(j-1)k}) \geq& r_{\itvar + jk}/n_T(v_{\itvar+(j-2)k})\tag{2}
\end{align*}

for all $\itvar\in[h/k]$ assuming $j > 1$ for (1) and $j < k$ for (2). Since these three vertices are at the same height, say $h'$ (with respect to $T$ after rebalancing and before the algorithm began), Lemma~\ref{lem:balancedratio} gives us that:

\begin{align*}
n_T(v_{i+(j-1)k})/n_T(v_{\itvar+(j-2)k}) \leq& \frac{(1+2\epsilon)^{\log_2n}}{(1-2\epsilon)^{\log_2n}},
\\n_T(v_{i+(j-1)k})/n_T(v_{\itvar+jk}) \geq& \frac{(1-2\epsilon)^{\log_2n}}{(1+2\epsilon)^{\log_2n}}
\end{align*}

Combining these with the previous inequalities yield:

\begin{align*}
r_{i+(j-1)k} \leq& \frac{(1+2\epsilon)^{\log_2n}}{(1-2\epsilon)^{\log_2n}}r_{\itvar+(j-2)k},
\\r_{i+(j-1)k} \geq& \frac{(1-2\epsilon)^{\log_2n}}{(1+2\epsilon)^{\log_2n}}r_{\itvar+jk}
\end{align*}

for all $\itvar\in[h/k]$. Since $\epsilon = 1/(c\log_2n)$, this bound can be further simplified to:

\begin{align*}
e^{-4/c}r_{\itvar+jk} \leq r_{i+(j-1)k} \leq& e^{4/c}r_{\itvar+(j-2)k}
\end{align*}

Since these hold for all $y$, we can say that:

\begin{align*}
\frac{k}he^{-4/c}\sum_{\itvar\in[h/k]}r_{\itvar+jk}\leq r_{i+(j-1)k} \leq& \frac{k}he^{4/c}\sum_{\itvar\in[h/k]}r_{\itvar+(j-2)k}
\end{align*}

Another way to think of this is partitioning the vertices (in order) into contiguous chunks of size $h/k$. Then $v_{i+(j-1)k}$ is the $i$th vertex in the $j$th chunk, and we know it has a lower of fraction of red points than clusters in the previous ($(j-2)$th) chunk and a higher fraction than clusters in the next ($j$th) chunk.

Now let $R_{j-1}$ be the number of reds in the entire $j$th chunk (i.e., $R_{j-1} = \sum_{\itvar\in[h/k]} r_{\itvar + (j-1)k}$) Additionally, we can make a comparison between the reds in all chunks and $R$, namely, $\sum_{j\in [k]} R_{j-1} = R$.

Putting our two previous results together, for our fixed $i$:

\begin{align*}
\sum_{j\in[k]} r_{i+(j-1)k} \leq& r_{i} + \frac{k}{h}e^{4/c}\sum_{j\in[k]} R_{j-1}  \\=& r_{i} + \frac{k}{h}e^{4/c}R,
\\\sum_{j\in[k]} r_{i+(j-1)k} \geq& r_{h - h/k + i} + \frac{k}{h}e^{-4/c}\sum_{j\in[k]} R_{j-1}  \\=& \frac{k}{h}e^{-4/c}R
\end{align*}

Notice that if everything were perfectly balanced, $\frac{k}{h}R$ is exactly the number of reds we would want in $\leaves(v_i')$. We now must bound $r_{i}$. Unfortunately, it could be an entirely red cluster, so this is only bounded by the size of the cluster at depth $\log_2h$, which we get from Lemma~\ref{lem:balancedratio}.


\begin{align*}
r_{i} \leq& n_T(v_i) 
\\\leq& (1/2+\epsilon)^{\log_2h}n 
\\=& 2^{-\log_2h}(1+2\epsilon)^{\log_2h}n 
\\\leq& e^{2/c}n/h
\end{align*}

Note the final inequality comes from the fact that $h \leq n$ and $\epsilon = 1/(c\log n)$. Now note that we are given $R = c_Rn$ for some $c_R = O(1)$. We can sub this in.

\[r_{i} \leq e^{2/c}R/(c_Rh)\]

Now, notice we are actually looking for the fraction of red points in the cluster. Since Lemma~\ref{lem:balancedratio} gives us that $n_{T'}(v_i') \geq k(1-2\epsilon)^{\log_2h}n/h \geq ke^{-2/c}n/h$ and $n_{T'}(v_i') \leq k(1+2\epsilon)^{\log_2h}n/h \leq ke^{2/c}n/h$ (applying the same logic as the upper bound to $n_T(v_i)$, $k$ times), we get:

\begin{align*}
\frac{\sum_{j\in[k]} r_{i+jk}}{n_{T'}(v_i')} \leq& \frac{ e^{2/c}R/(c_Rh) + \frac{k}{h}e^{4/c}R}{ke^{-2/c}(n/h)}
\\=& \frac Rn\cdot \left(\frac{ e^{4/c}/c_R}{k} + e^{6/c}
\right),
\\\frac{\sum_{j\in[k]} r_{i+jk}}{n_{T'}(v_i')} \geq& \frac{\frac{k}{h}e^{-4/c}R}{ke^{2/c}(n/h)}
\\=& \frac Rn\cdot \frac{1}{e^{6/c}}
\end{align*}

This completes the proof for one tree fold under the observation that $\frac Rn = c_\ell$ if red is $\ell$. The same (if not stronger) bounds hold for all subsequent $\lambda$ tree folds for each color. Note that as we proceed, this bound will not be disrupted since merging two clusters that guarantees the same upper bound on the fraction of red points still guarantees the same bound.
\end{proof}

\begin{proof}[Proof of Lemma~\ref{lem:fair}]
Clearly, the most imbalanced clusters in this process will be the clusters in the final level of the hierarchy. By Lemma~\ref{lem:foldfair}, when we recurse, we have at most an $ \frac{c_\ell}{e^{6/c}}\leq \frac{\ell(v)}{\leaves(v)} \leq c_\ell\cdot ( e^{4/c}/(kc_\ell) + e^{6/c})$
 fraction of vertices of color $\ell$ for each $\ell\in[\lambda]$. Clearly, after at most $\log_2n/ \log_2 h = \log_hn$ recursive levels guaranteed by Lemma~\ref{lem:foldnum}, our bound becomes:

 \[\frac{c_\ell}{e^{6t\log_hn/c}}\leq \frac{\ell(v)}{\leaves(v)} \leq c_\ell\cdot ( e^{4/c}/(kc_\ell) + e^{6/c})^{\log_hn}.\]
\end{proof}

\begin{proof}[Proof of Theorem~\ref{thm:main}]
Let $T^*$ be the optimal tree, let $T_1$ be our input tree which is a $ce^{4/(c(1-o(1))}\log_2n\cdot \frac{9\gamma}{4\epsilon}$ approximation guaranteed by Theorem~\ref{thm:stoch} but using $t = \log_{1/2-\epsilon}(1/(2n\epsilon))$ (this was shown more explicitly in Lemma~\ref{lem:preprocbalance}), and let $T'$ be our output. By Lemma~\ref{lem:fair}, $T'$ satisfies our fairness constraints. By Lemmas~\ref{lem:folddist},~\ref{lem:foldratio}, and~\ref{lem:foldnum}, every edge is separated by at most 1 level abstraction of max operation cost $e^{2/c}n/h$ and $\log_2(n)/\log_2(h)$ tree folds of operation cost at most $e^{4/(c(1-o(1))}$ on $k$ subtrees. Lemma~\ref{lem:foldapx} immediately tells us:

\[\cost(T') \leq e^{\frac{4\lambda\log_2n}{c(1-o(1))\log_2h}+\frac 2c} \cdot h\cost(T_1)\]

 Combining this with the approximation guaranteed by $T_1$:

\[\cost(T') \leq e^{\frac{4\lambda\log_2n}{c(1-o(1))\log_2h}+\frac 2c} \cdot hce^{4/(c(1-o(1))}\log_2n\cdot \frac{9\gamma}{4\epsilon}\cost(T^*)\]

Simplifying and plugging in $h=n^\delta,\epsilon=1/(c\log_2n)$ yields the desired result.
\end{proof}
\section{Runtime}

Here we analyze the runtime of our four algorithms. Recall that before each of these algorithms, we run a black-box cost-approximate hierarchical clustering algorithm as well as all previous algorithms. For simplicity, here we will present the contribution of each algorithm to the runtime.

\paragraph{Theorem~\ref{thm:balance}:} This algorithm starts at the root, traverses down one side of the tree until a certain sized cluster is found, and then applies a tree rebalance. It then recurses on each child. The length of traversal is bounded by $O(n)$, and a single tree rebalance operation requires some simple constant-time pointer operations. As this is run from each vertex in the tree, the total runtime is $O(n^2)$.

\paragraph{Theorem~\ref{thm:balance2}:} As in the previous algorithm, here we do a computation at each of the $O(n)$ nodes in the tree. At each node, we apply subtree search until the desired balance is achieved. If $\delta$ is the current balance, we reduce this to at most $2\delta/3$ at each step. Thus, this will require a total of $O(\log(1/\epsilon))$ steps to complete. Each subtree search operation requires two, $O(n)$-length traversals to find the place to insert and delete. Otherwise, it is constant-time pointer math. Thus, the algorithm runs in $O(n^2\log(1/\epsilon))$ time.

\paragraph{Theorem~\ref{thm:stoch}:} This algorithm is quite, simple, as we are simply deleting some set of low nodes in the tree. Thus it only requires $O(n)$ time.

\paragraph{Theorem~\ref{thm:main}:} Again, we execute a computation for at most $O(n)$ nodes in the tree. By a similar logic as before, tree abstraction steps require $O(n)$ time. It is not too hard to see that computing the fraction of red and blue vertices in each considered cluster and then sorting them accordingly also requires $O(n)$ time. Finally, we fold the vertices on top of each other. The isomorphism used for folding can be found by simply indexing the vertices in each subtree, and then applied quite directly, which also takes $O(n)$ time. Thus this requires only $O(n^2)$ time.

Therefore, the entire final algorithm (without the blackbox step) is bounded by the computation time from Theorem~\ref{thm:balance2}, which is $O(n^2\log(1/\epsilon))$. Intuitively, $\epsilon$ is bounded by $\epsilon > 1/n$, thus this becomes $O(n^2\log n)$.
\section{Additional Experiments}
We here provide the figures and results omitted from the main text due to space constraints.

\subsection{Bank Data}
We begin with the supplementary figure that complements Figure~\ref{fig:fairness} in the final section. This figure depicts the fairness of each clusterin the hierarchy constructed by Algorithm~\ref{alg:fhc} on the \emph{Bank} data. We see an equivalent concentration about the true balance ratio of 1:3.

\begin{figure}[h]
	\center{\includegraphics[width=0.95\linewidth]{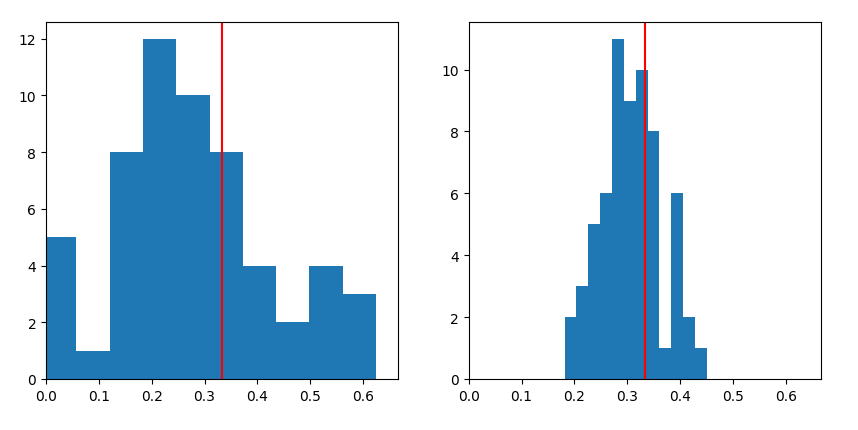}}
    \caption{\label{fig:fairness_supp} Histogram of cluster balances after tree manipulation by Algorithm~\ref{alg:fhc}. The left plot depicts the balances after applying the average-  linkage algorithm and the right shows the result of applying our algorithm. The vertical red line indicates the balance of the dataset. Parameters were set to $c = 4, \delta = \frac38, k = 4$ for the above clustering result.
    }

\vspace{-3mm}
\end{figure}

\subsection{Fairness Results for Parameter Sweep}
We additionally provide the fairness results presented in Figure~\ref{fig:fairness} for the other parameter sets plotted in Figure~\ref{fig:n128_params} for completeness.

\begin{figure}[h]
	\center{\includegraphics[width=0.36\linewidth]{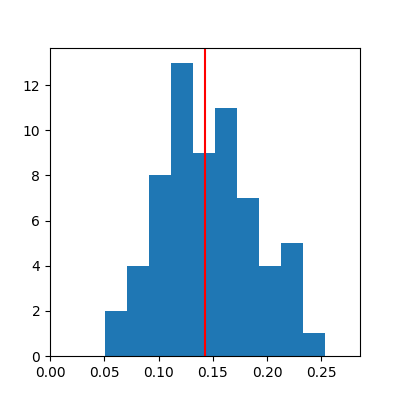}}
    \caption{\label{fig:fairness_supp} Parameters set to $c = 1, k = 8, \delta = \frac38$.
    }

\vspace{-3mm}
\end{figure}

\begin{figure}[h]
	\center{\includegraphics[width=0.36\linewidth]{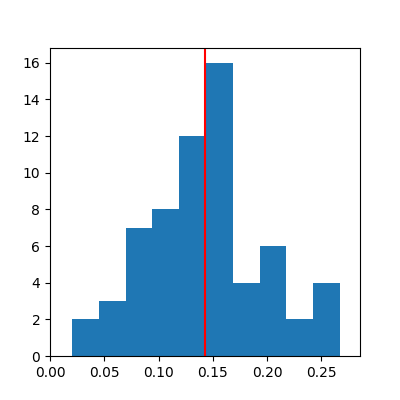}}
    \caption{\label{fig:fairness_supp} Parameters set to $c = 2, k = 8, \delta = \frac38$.
    }

\vspace{-3mm}
\end{figure}

\begin{figure}[h]
	\center{\includegraphics[width=0.36\linewidth]{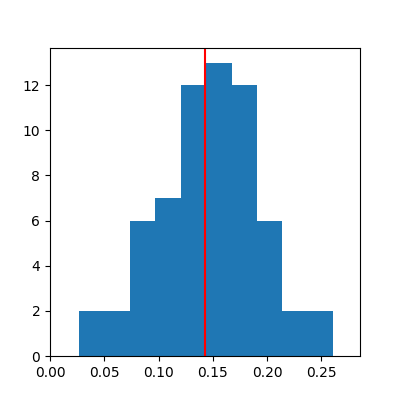}}
    \caption{\label{fig:fairness_supp} Parameters set to $c = 4, k = 8, \delta = \frac38$.
    }

\vspace{-3mm}
\end{figure}

\begin{figure}[h]
	\center{\includegraphics[width=0.36\linewidth]{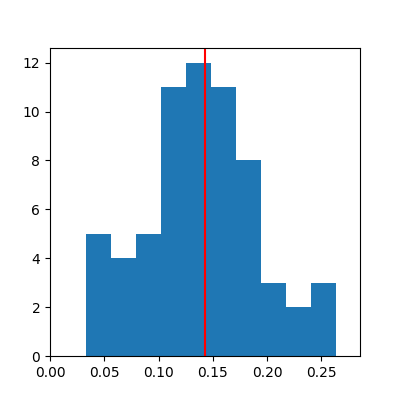}}
    \caption{\label{fig:fairness_supp} Parameters set to $c = 8, k = 8, \delta = \frac38$.
    }

\vspace{-3mm}
\end{figure}

\begin{figure}[h]
	\center{\includegraphics[width=0.36\linewidth]{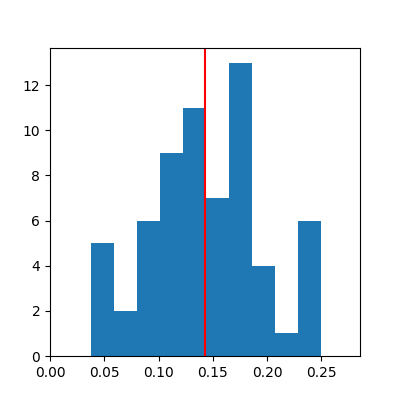}}
    \caption{\label{fig:fairness_supp} Parameters set to $c = 16, k = 8, \delta = \frac38$.
    }

\vspace{-3mm}
\end{figure}

\begin{figure}[h]
	\center{\includegraphics[width=0.36\linewidth]{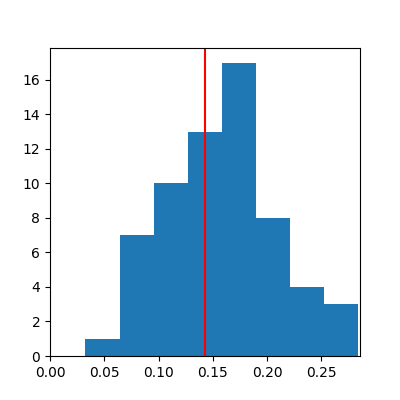}}
    \caption{\label{fig:fairness_supp} Parameters set to $c = 4, k = 4, \delta = \frac78$.
    }

\vspace{-3mm}
\end{figure}

\begin{figure}[h]
	\center{\includegraphics[width=0.36\linewidth]{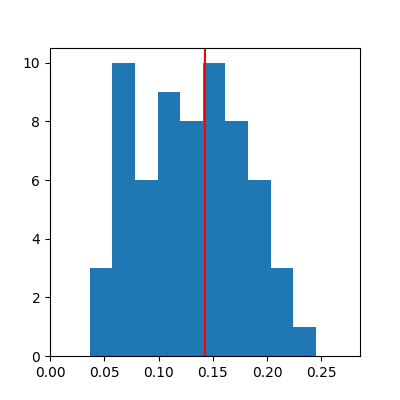}}
    \caption{\label{fig:fairness_supp} Parameters set to $c = 4, k = 8, \delta = \frac78$.
    }

\vspace{-3mm}
\end{figure}

\begin{figure}[h]
	\center{\includegraphics[width=0.36\linewidth]{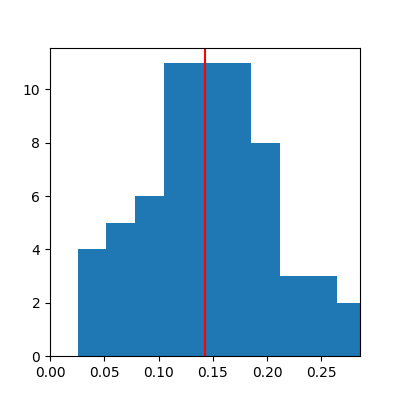}}
    \caption{\label{fig:fairness_supp} Parameters set to $c = 4, k = 16, \delta = \frac78$.
    }

\vspace{-3mm}
\end{figure}

\begin{figure}[h]
	\center{\includegraphics[width=0.36\linewidth]{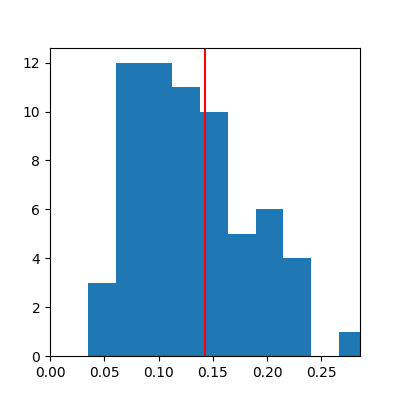}}
    \caption{\label{fig:fairness_supp} Parameters set to $c = 4, k = 4, \delta = \frac38$.
    }

\vspace{-3mm}
\end{figure}

\begin{figure}[h]
	\center{\includegraphics[width=0.36\linewidth]{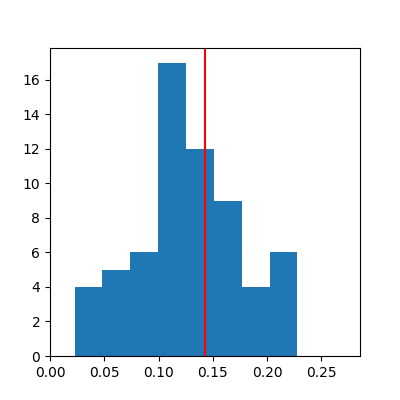}}
    \caption{\label{fig:fairness_supp} Parameters set to $c = 4, k = 4, \delta = \frac48$.
    }

\vspace{-3mm}
\end{figure}

\begin{figure}[h]
	\center{\includegraphics[width=0.36\linewidth]{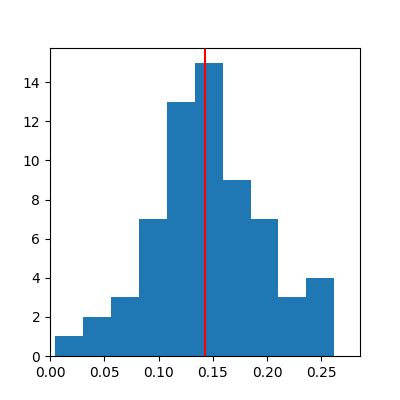}}
    \caption{\label{fig:fairness_supp} Parameters set to $c = 4, k = 4, \delta = \frac58$.
    }

\vspace{-3mm}
\end{figure}

\begin{figure}[h]
	\center{\includegraphics[width=0.36\linewidth]{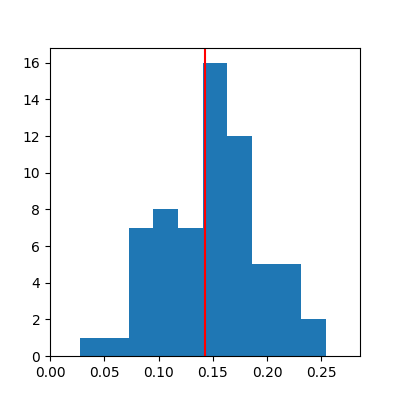}}
    \caption{\label{fig:fairness_supp} Parameters set to $c = 4, k = 4, \delta = \frac68$.
    }

\vspace{-3mm}
\end{figure}

\begin{figure}[h]
	\center{\includegraphics[width=0.36\linewidth]{supp_figures/c4_k2_d78.png}}
    \caption{\label{fig:fairness_supp} Parameters set to $c = 4, k = 4, \delta = \frac78$.
    }

\vspace{-3mm}
\end{figure}

\clearpage
\newpage
\onecolumn
\section{Further Experimentation (Rebuttal Draft)}

We here present further experimental results on a higher-dimensional data sample as compared to the main text. We first present the histogram of cluster balances after application of the average linkage algorithm for $n = 1024$ samples.

\begin{figure}[h]
    \centering
    \includegraphics[width=0.33\linewidth]{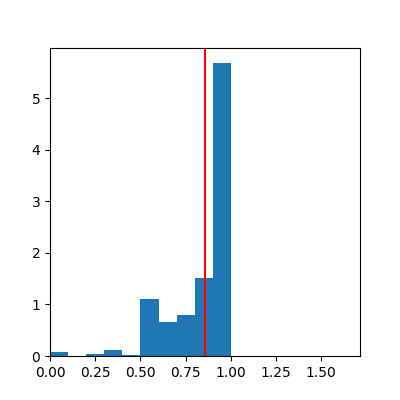}
    \caption{Average linkage balances for $n = 1024$ samples}
    \label{fig:avg_lkg_1024}
\end{figure}

We now proceed to apply our fair hierarchical clustering algorithm (Algorithm \ref{alg:fhc}) on the constructed base cluster tree from average linkage resulted from the above. The algorithm was run with a wide variety of parameter tuples $(c, h, k)$ where, as noted in the text, $h = n^\delta$. To further reiterate: the parameter $c$ is used to define the $\varepsilon$ cluster balance, $h$ the number of clusters and $k$ the number of trees folded in the rebalance procedure.

\begin{figure}[h]
    \center{\includegraphics[width=0.33\linewidth]{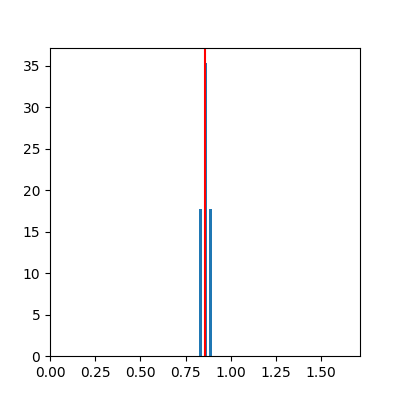}}
    \caption{Result of running Algorithm~\ref{alg:fhc} on $n=1024$ samples with parameter tuple $(c,h,k) = (1,4,2)$.}
    \label{fig:fairhc_142}
\end{figure}

\begin{figure}[h]
    \centering
    \includegraphics[width=0.33\linewidth]{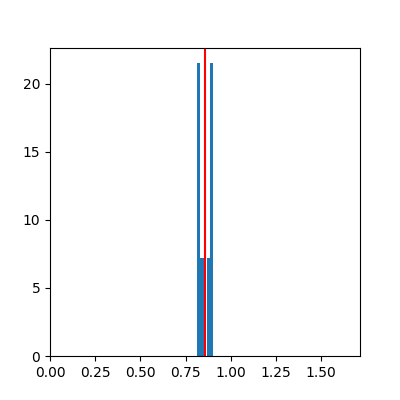}
    \caption{Result of running Algorithm~\ref{alg:fhc} on $n=1024$ samples with parameter tuple $(c,h,k) = (1,8,2)$.}
    \label{fig:fairhc_182}
\end{figure}

\begin{figure}[h]
    \centering
    \includegraphics[width=0.33\linewidth]{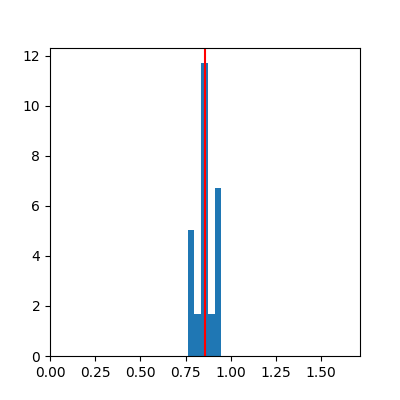}
    \caption{Result of running Algorithm~\ref{alg:fhc} on $n=1024$ samples with parameter tuple $(c,h,k) = (1,16,2)$.}
    \label{fig:fairhc_1162}
\end{figure}

\begin{figure}[h]
    \centering
    \includegraphics[width=0.33\linewidth]{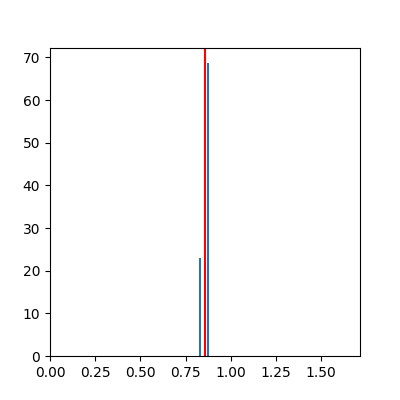}
    \caption{Result of running Algorithm~\ref{alg:fhc} on $n=1024$ samples with parameter tuple $(c,h,k) = (2,4,2)$.}
    \label{fig:fairhc_242}
\end{figure}

\begin{figure}[h]
    \centering
    \includegraphics[width=0.33\linewidth]{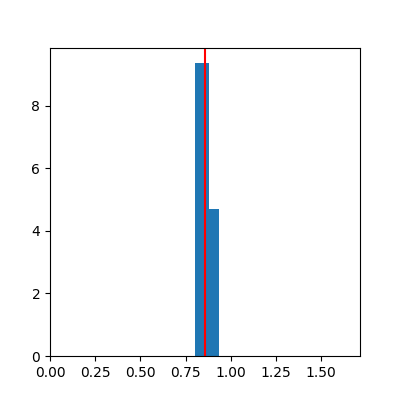}
    \caption{Result of running Algorithm~\ref{alg:fhc} on $n=1024$ samples with parameter tuple $(c,h,k) = (2,8,2)$.}
    \label{fig:fairhc_282}
\end{figure}

\begin{figure}[h]
    \centering
    \includegraphics[width=0.33\linewidth]{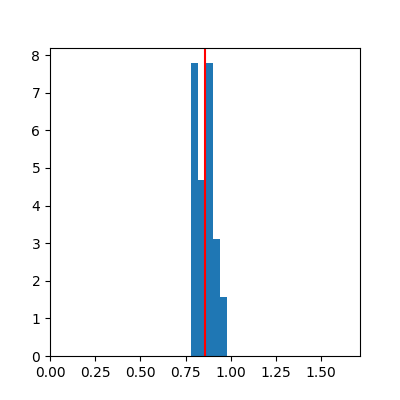}
    \caption{Result of running Algorithm~\ref{alg:fhc} on $n=1024$ samples with parameter tuple $(c,h,k) = (2,16,2)$.}
    \label{fig:fairhc_2162}
\end{figure}

\begin{figure}[h]
    \centering
    \includegraphics[width=0.33\linewidth]{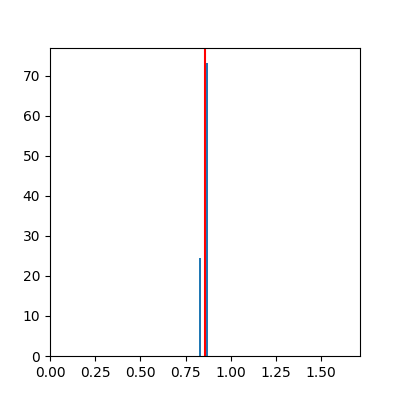}
    \caption{Result of running Algorithm~\ref{alg:fhc} on $n=1024$ samples with parameter tuple $(c,h,k) = (4,4,2)$.}
    \label{fig:fairhc_442}
\end{figure}

\begin{figure}[h]
    \centering
    \includegraphics[width=0.33\linewidth]{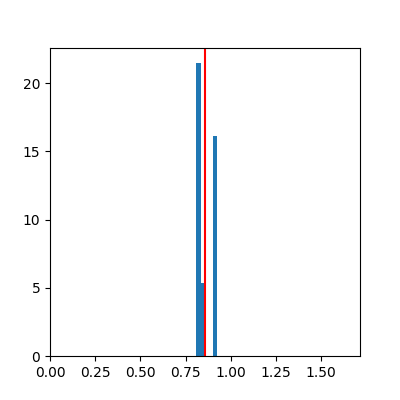}
    \caption{Result of running Algorithm~\ref{alg:fhc} on $n=1024$ samples with parameter tuple $(c,h,k) = (4,8,2)$.}
    \label{fig:fairhc_482}
\end{figure}

\begin{figure}[h]
    \centering
    \includegraphics[width=0.33\linewidth]{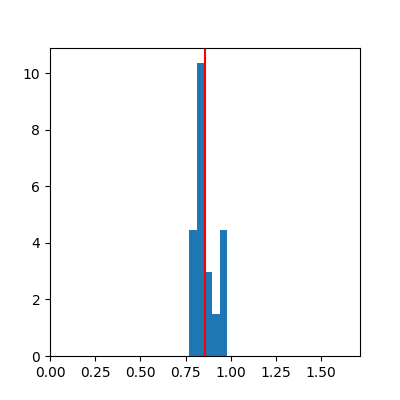}
    \caption{Result of running Algorithm~\ref{alg:fhc} on $n=1024$ samples with parameter tuple $(c,h,k) = (4,16,2)$.}
    \label{fig:fairhc_4162}
\end{figure}

\begin{figure}[h]
    \centering
    \includegraphics[width=0.33\linewidth]{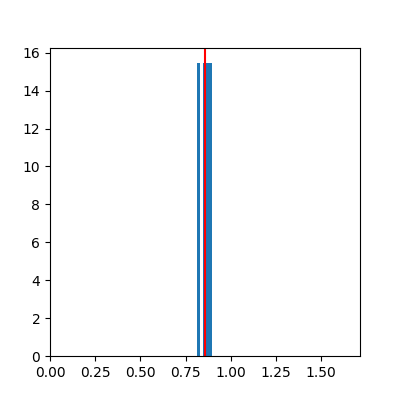}
    \caption{Result of running Algorithm~\ref{alg:fhc} on $n=1024$ samples with parameter tuple $(c,h,k) = (8,4,2)$.}
    \label{fig:fairhc_842}
\end{figure}

\begin{figure}[h]
    \centering
    \includegraphics[width=0.33\linewidth]{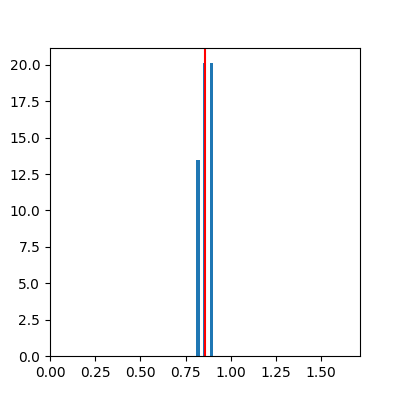}
    \caption{Result of running Algorithm~\ref{alg:fhc} on $n=1024$ samples with parameter tuple $(c,h,k) = (8,8,2)$.}
    \label{fig:fairhc_882}
\end{figure}

\begin{figure}[h]
    \centering
    \includegraphics[width=0.33\linewidth]{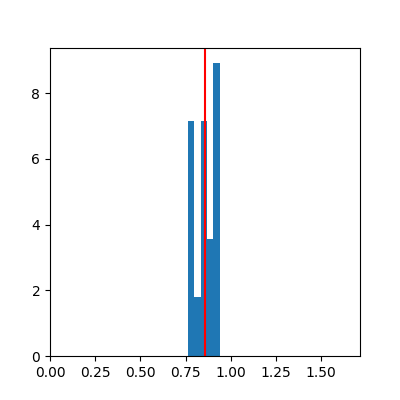}
    \caption{Result of running Algorithm~\ref{alg:fhc} on $n=1024$ samples with parameter tuple $(c,h,k) = (8,16,2)$.}
    \label{fig:fairhc_8162}
\end{figure}

\section{Runtime and Cost Experiments}
\begin{table}[]
\centering
\begin{tabular}{c|c|c|c|c}
\textbf{$c$} & \textbf{$h$} & \textbf{$k$} & \textbf{Runtime ($s$)} & \textbf{Cost} \\ \hline
1            & 4            & 2            & 1.483                  & 129.916       \\
1            & 8            & 2            & 1.351                  & 164.475       \\
1            & 16           & 2            & 1.314                  & 201.268       \\
2            & 4            & 2            & 1.535                  & 146.919       \\
2            & 8            & 2            & 1.351                  & 149.92        \\
2            & 16           & 2            & 1.313                  & 208.099       \\
4            & 4            & 2            & 1.534                  & 150.02        \\
4            & 8            & 2            & 1.353                  & 156.071       \\
4            & 16           & 2            & 1.305                  & 224.063       \\
8            & 4            & 2            & 1.454                  & 136.179       \\
8            & 8            & 2            & 1.302                  & 225.995      
\end{tabular}
\end{table}

\end{document}
